\providecommand{\tabularnewline}{\\}
\providecommand{\algorithmname}{Algorithm}
\theoremstyle{plain}
\newtheorem{thm}{\protect\theoremname}
\theoremstyle{plain}
\newtheorem{lem}[thm]{\protect\lemmaname}
\author{

\IEEEauthorblockN{Zheng~Xing and Junting~Chen}

\IEEEauthorblockA{School of Science and Engineering (SSE) and Shenzhen Future Network of Intelligence Institute (FNii-Shenzhen) \\ The Chinese University of Hong Kong, Shenzhen, Guangdong 518172, China}
}
\newcommand{\newac}{\newacronym}
\newcommand{\ac}{\gls}
\newcommand{\Ac}{\Gls}
\newcommand{\acpl}{\glspl}
\theoremstyle{plain}
\theoremstyle{remark}
\theoremstyle{proposition}
\newtheorem{myprop}{Proposition}
\providecommand{\lemmaname}{Lemma}
\providecommand{\theoremname}{Theorem}
\begin{document}
\title{Blind Construction of Angular Power Maps in Massive MIMO Networks\thanks{Manuscript submitted December 21, 2024, revised June 7, 2025, and accepted September 26, 2025. (Corresponding author: Junting Chen. E-mail: juntingc@cuhk.edu.cn)}
\thanks{The work was supported in part by NSFC with Grant No. 62293482, the Basic Research Project No. HZQB-KCZYZ-2021067 of Hetao Shenzhen-HK S\&T Cooperation Zone, the NSFC with Grant No. 62171398, the Guangdong Basic and Applied Basic Research Foundation 2024A1515011206, the Shenzhen Science and Technology Program under Grant No. JCYJ20220530143804010 and No. KJZD20230923115104009, the Shenzhen Outstanding Talents Training Fund 202002, the Guangdong Research Projects No. 2017ZT07X152 and No. 2019CX01X104, the Guangdong Provincial Key Laboratory of Future Networks of Intelligence (Grant No. 2022B1212010001), and the Shenzhen Key Laboratory of Big Data and Artificial Intelligence (Grant No. SYSPG20241211173853027).} 
\thanks{Zheng~Xing and Junting~Chen are with the School of Science and Engineering (SSE), Shenzhen Future Network of Intelligence Institute (FNii-Shenzhen), and Guangdong Provincial Key Laboratory of Future Networks of Intelligence, The Chinese University of Hong Kong, Shenzhen, Guangdong 518172, China.}}
\maketitle
\begin{abstract}
\Ac{csi} acquisition is a challenging problem in massive \ac{mimo}
networks. Radio maps provide a promising solution for radio resource
management by reducing online \ac{csi} acquisition. However, conventional
approaches for radio map construction require location-labeled \ac{csi}
data, which is challenging in practice. This paper investigates unsupervised
angular power map construction based on large timescale \ac{csi}
data collected in a massive \ac{mimo} network without location labels.
A \ac{hmm} is built to connect the hidden trajectory of a mobile
with the \ac{csi} evolution of a massive \ac{mimo} channel. As a
result, the mobile location can be estimated, enabling the construction
of an angular power map. We show that under uniform rectilinear mobility
with Poisson-distributed \acpl{bs}, the \ac{crlb} for localization
error can vanish at any \acpl{snr}, whereas when \acpl{bs} are confined
to a limited region, the error remains nonzero even with infinite
independent measurements. Based on \ac{rsrp} data collected in a
real multi-cell massive \ac{mimo} network, an average localization
error of 18 meters can be achieved although measurements are mainly
obtained from a single serving cell.
\end{abstract}

\begin{IEEEkeywords}
Angular power map, massive \ac{mimo} network, trajectory recovery,
localization, \ac{csi} prediction
\end{IEEEkeywords}

\glsresetall

\section{Introduction}

Acquiring \ac{csi} is essential in beamforming, resource allocation,
and inter-cell interference mitigation in \ac{mimo} networks. However,
it becomes more and more challenging in estimating the \ac{csi} as
the number of antennas scale up in trending massive \ac{mimo} networks
\cite{DuDen:J22,LiuBha:J23,xing2022spectrum}. Radio maps, which associate
each mobile location with the corresponding \ac{csi}, provide a new
paradigm for \ac{csi} acquisition, tracking, and prediction \cite{ZenChe:J24,xing2024constructing,xing2022integrated,xing2025hmm}.
For example, the work \cite{KalTob:J23} enhances low-latency \ac{mimo}
communications using statistical radio maps that predict and select
communication parameters for reliability. Another work \cite{WuZen:J23}
introduces an environment-aware hybrid beamforming for mmWave massive
\ac{mimo} systems, utilizing channel knowledge maps to reduce real-time
training needs and improve communication rates with location accuracy
flexibility.

Nonetheless, it is very challenging to construct radio maps for multi-cell
\ac{mimo} communications, and one of the main challenges is the lack
of location-labeled \ac{mimo} channel data collected from real scenarios.
In reality, accurate location information of the mobile users is hard
to obtain. First, it requires the users to continuously report the
location to the network, which may not always be feasible due to privacy
concerns. Second, the localization accuracy is significantly affected
by \ac{nlos} conditions. Third, dedicated measurement campaign using
drive test is costly and less timely. Nevertheless, existing approaches
for radio map construction mostly require a massive amount of location-labeled
\ac{csi} measurement data. For instance, the work \cite{TimSub:J23}
necessitated gathering a large volume of \ac{csi} with location labels
to train a deep generative model. The research \cite{RogSan:J23}
involved collecting \ac{csi} with location labels via vehicles and
employs an \ac{lstm}-based neural network to construct road radio
maps. The work \cite{LiuChe:J23} collected a substantial amount of
\ac{csi} data with location labels in ground node-\ac{uav} networks
to construct radio maps. Moreover, the study \cite{ChiMas:J22} utilized
Kriging and covariance tapering techniques to construct radio maps
in massive \ac{mimo} systems using a small amount of location-labeled
\ac{csi} data. To relieve the requirement on the massive amount of
location-labeled \ac{csi} data for radio map construction, the work
\cite{WanZhu:J24} employed sparse sampling and Bayesian learning
inference techniques to construct radio maps using a limited amount
of location-labeled \ac{csi} data.

Is it possible to learn the radio map for \ac{mimo} communications
in a completely {\em unsupervised} manner based on the \ac{csi}
data without accurate location information? \ac{csi} captures how
the signal propagation is affected by the environment \cite{WanZhaQht:J24,ZhoChe:J24,xing2024calibration,VucHos:J24,Trajmat:J25,XinChe:C25,GuoLv:J24}.
Thus, although a single \ac{csi} realization tends to be random,
there exists some the geographical pattern of the \ac{csi} distribution,
which is determined by the environment and cellular \ac{mimo} network
topology \cite{GaoWan:J24,xing2024block,ZhaLei:J24,xing2024unsupervised}.
However, while the current cellular networks have been capturing extensive
volumes of \ac{csi} from numerous mobile users for \ac{mimo} transmission
and radio resource management, the instantaneous \ac{csi} is usually
discarded immediately after the transmission.

This paper proposes to leverage the unlabeled instantaneous \ac{csi}
to train a radio map model before the \ac{csi} data is discarded.
The ultimate goal is to build a data-driven model that can describe
the evolution of the \ac{csi} process. The physical intuition is
that although the \ac{csi} in a massive \ac{mimo} network is of
high dimension, the mobility of the corresponding user is contained
in a 3D physical world, and the dominant scattering environment, such
as building and vegetation, remains roughly and temporarily static.
Mathematically, while the \ac{csi} process evolves in a high dimensional
space, it can be embedded in a low dimensional latent space, which
represents the geographic environment. Similar ideas were attempted
in the channel charting literature. The work \cite{Stud:J18} proposed
a channel charting approach to reduce the dimensionality of \ac{csi}
data to two or three dimensions. The location labeling can be achieved
by rotating the 2D or 3D data in the latent space based on a limited
amount of location labels. Subsequent studies \cite{FerRau:J21,FerGui:J23}
have introduced various channel charting methods, focusing on techniques
such as auto-encoders, and Siamese networks. Yet, in the channel charting
literature, the latent space does not have to represent the physical
world, and hence, it is still of high interest to investigate whether
it is possible to embed a geographical model to describe the low-dimensional
latent space of the \ac{csi} sequence data.

More specifically, the aim of this paper is to investigate the following
fundamental questions: {\em (i) Can we establish a radio map model that maps the \ac{csi} to a latent space that has a clear physical meaning as the geographic area},
and {\em (ii) Is there any theoretical guarantee in recovering the user location from the \ac{csi}}.

For elaboration purpose, a multi-cell massive \ac{mimo} network is
considered, where a mobile user travels along an arbitrary trajectory
in the coverage area of the network and the network keeps acquiring
some partial \ac{csi} of the user. We propose a \ac{hmm} to establish
the connection between the mobility of the user and the \ac{csi}
evolution partially observed by the network. Under this context, to
our best knowledge, the most related work appeared in \cite{JunMoo:J15}
for indoor localization employs an \ac{hmm}-based method to reconstruct
trajectories from indoor \ac{rss} measurements. This method, however,
necessitates extensive \ac{rss} collection at each location within
the indoor layout and encounters mapping failures in symmetric building
configurations. Our prior work \cite{XinChe:C24} employed an \ac{hmm}
to extract coarse user locations from measurements and recover their
trajectories up to a region-level accuracy. While this approach allows
for the construction of a radio map using the measurements and the
estimated trajectory without any location labels, it does not address
trajectory recovery in \ac{mimo} scenarios.

To summarize, the following contributions are made:
\begin{itemize}
\item We develop a framework to recover the location labels from a sequence
of \ac{csi} measurements using Bayesian methods. Our approach does
not require the continuity of the \ac{csi} process, and hence, can
be adopted in \ac{nlos} scenario.
\item We establish theoretical results to show that under uniform rectilinear
mobility, the \ac{crlb} of the localization error can asymptotically
approach zero at any \ac{snr} if the \ac{bs} topology can be modeled
by a \ac{ppp} over a large enough region. By contrast, if the \acpl{bs}
are only deployed in a limited region, localization error cannot approach
zero even using infinite amount of independent measurement data.
\item We design efficient algorithms to solve for the joint trajectory recovery
and propagation parameter estimation problem, where an iterative log-transformation
technique is developed to solve a nonlinear regression problem that
contains coupled polynomial terms and exponential terms.
\item We conduct experiments on real dataset collected from a commercial
5G massive \ac{mimo} network. A mean localization error below 18
meters from \ac{ssb} \ac{rsrp} measurements is demonstrated, although
measurements from neighboring cells are mostly missing in the dataset.
When richer measurements are available, a localization error of 7
meters is demonstrated from a synthetic dataset. To demonstrate the
application of the angular power map, we predict the \ac{rsrp}, \ac{sinr},
and \ac{rssi} of the \ac{csi} beam using the constructed radio map.
The proposed radio map-assisted method achieves the lowest errors
in comparison with existing methods.
\end{itemize}
\selectlanguage{english}%
$\quad$The remainder of this paper is structured as follows. Section
\ref{sec:System-Model} presents the propagation model, measurement
mode, mobility model, and an \foreignlanguage{american}{\ac{hmm}}
formulation. Section \ref{sec:Fundamental-Limits} derives the theoretical
results for the \foreignlanguage{american}{\ac{crlb}} of the localization
error, considering both a limited region and an unlimited region for
the deployment of the \foreignlanguage{american}{\ac{bs}}. Section
\ref{sec:HMM-Based-CSI-Embedding} focuses on the development of the
trajectory recovery algorithm. Experimental results are provided in
Section \ref{sec:Numerical-Results}, and the paper concludes in Section
\ref{sec:Conclusion}.
\selectlanguage{american}%

\section{System model}

\label{sec:System-Model}

Consider a mobile user traveling along an arbitrary trajectory as
shown in Figure~\ref{fig:background}. Denote $\mathbf{x}\in\mathbb{R}^{2}$
as the location of the mobile user. The communication signal of the
mobile user can be acquired by $Q$ \acpl{bs} each equipped with
$N_{\text{t}}$ antennas, where the location of the $q$th \ac{bs}
is denoted as $\mathbf{o}_{q}\in\mathbb{R}^{2}$, $q=1,2,\dots,Q$.
Denote $d(\mathbf{x},\mathbf{o}_{q})=\|\mathbf{x}-\mathbf{o}_{q}\|_{2}$
as the distance between the user at location $\mathbf{x}$ and the
\ac{bs} at $\mathbf{o}_{q}$. While this work adopts a topology model
in 2D, the extension to 3D is straight-forward.

Here, we consider a power angular map for a narrowband \ac{mimo}
communication system as for easy elaboration. Note that a similar
methodology may apply to constructing a radio map for other \ac{csi}
statistics such as the \ac{aoa} and power delay profile.

\subsection{Propagation and Measurement Models}

\begin{figure}[t]
\begin{centering}
\includegraphics[width=1\columnwidth]{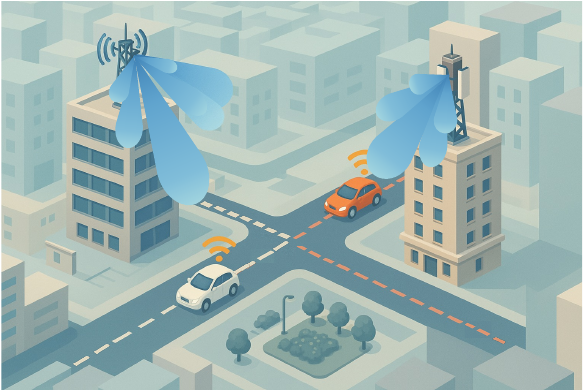}\vspace{-0.1in}
\par\end{centering}
\centering{}\caption{The mobile user moves along roads in a \ac{mimo} environment. For
each beam, multipath components predominantly arrive from the direction
of the mobile user, while paths from other angles exhibit significantly
lower probabilities. \label{fig:background}}
\vspace{-0.1in}
\end{figure}

\subsubsection{Path Loss Model}

Let $\mathbf{h}_{q}(\mathbf{x})=\sqrt{a_{q}(\mathbf{x})}\bar{\mathbf{h}}_{q}(\mathbf{x})$
be the narrowband \ac{mimo} channel between the user at position
$\mathbf{x}$ and the $q$th \ac{bs}, where $a_{q}(\mathbf{x})$
captures the channel gain, and $\bar{\mathbf{h}}_{q}(\mathbf{x})$
is the normalized channel vector that is assumed to be independent
of the channel gain. We adopt a path loss model for the channel gain
$a_{q}(\mathbf{x})$ in its logarithmic value, i.e., $10\log_{10}(a_{q}(\mathbf{x}))$,
as
\begin{equation}
[a_{q}(\mathbf{x})]_{\text{{\scriptsize dB}}}=\beta_{q}+\alpha_{q}\mathrm{log}_{10}d(\mathbf{x},\mathbf{o}_{q})+\xi'\label{eq:path-loss}
\end{equation}
where $\beta_{q}$ is a constant that depends on the propagation environment
surrounding the $q$th \ac{bs} and the path loss exponent $\alpha_{q}$
characterizes the rate at which the signal power diminishes with distance.
The random variable $\xi'$ models the log-normal shadowing of \ac{bs}
$q$ which is assumed to follow a zero mean Gaussian distribution.
We assume independent shadowing, although incorporating shadowing
correlation could improve model accuracy, it would require more parameters
and increase complexity.

\subsubsection{\ac{mimo} Pattern Model}

The realization of the normalized channel vector $\bar{\mathbf{h}}_{q}(\mathbf{x})$
depends on the user location $\mathbf{x}$, the multipath, and the
array response, and we model its statistical property via a set of
{\em sensing vectors} $\mathbf{g}_{q,m}$ of the $q$th \ac{bs},
where $m=1,2,\dots,M$ is the number beams on each BS. Specifically,
consider the \ac{aoa} of the paths is statistically centered around
the direct path for each user position, exhibiting small variance
under \ac{los} conditions and large variance under \ac{nlos} conditions.
We assume that the sensing vectors $\mathbf{g}_{q,m}$ are designed
in such a way that the antenna array at the $q$th \ac{bs} has a
statistically large response for signals arriving from a reference
direction $c_{q,m}\in[0,2\pi)$, and the array has a statistically
decreasing response for signals increasingly deviating from the reference
direction $c_{q,m}$, where the parameters $c_{q,m}$ is to be estimated.
Mathematically, consider the transmit-side \ac{ssb} beamforming gain
for each \ac{bs}, we use an exponential function to model such a
pattern
\begin{align}
 & 10\mathrm{log}_{10}\mathbb{E}\{|\mathbf{g}_{q,m}^{\text{H}}\bar{\mathbf{h}}_{q}(\mathbf{x})|^{2}\}\label{eq:beamForm}\\
 & =\omega_{q,m}\mathrm{exp}[-\eta_{q,m}(\phi(\mathbf{x},\mathbf{o}_{q})-c_{q,m})^{2}]+\xi''\nonumber 
\end{align}
where the expectation is taken over the randomness due to the small-scale
fading. The parameter $\omega_{q,m}$ models the beamforming gain
at the reference direction $c_{q,m}$, and $\eta_{q,m}$ models the
spread of the beam. The function $\phi(\mathbf{x},\mathbf{o}_{q})$
denotes the angle from \ac{bs} at $\mathbf{o}_{q}$ to user at $\mathbf{x}$.
The variable $\xi''$ captures the offset and the model mismatch.

A naive example of the sensing vectors $\mathbf{g}_{q,m}$ for a uniform
linear array is a set of $M$-dimensional \ac{dft} vectors applied
to a sub-array of $M$ consecutive antenna elements \cite{LotJas:J17}.
It is known that the received beamforming gain is maximized at a certain
direction, and the gain decreases if the incident signal arrives from
other directions.

The pattern model (\ref{eq:beamForm}) captures the spatial response
of the sensing vector $\mathbf{g}_{q,m}$ according to the \ac{aoa}
distribution of the multipaths arrived at the \ac{bs}. First, if
the direction of the mobile user $\phi(\mathbf{x},\mathbf{o}_{q})$
aligns with the reference direction $c_{q,m}$ of the beam, then a
maximum beamforming gain $\omega_{q,m}+\xi''$ is attained. By contrast,
if the user direction $\phi(\mathbf{x},\mathbf{o}_{q})$ substantially
deviates from the direction $c_{q,m}$, only a small gain $\xi''$
is attained. The parameter $\xi''$ provides an offset to the gain,
which can be absorbed in the parameter $\beta_{q}$ in the path loss
model (\ref{eq:path-loss}). Second, as shown in Figure~\ref{fig:background},
the model (\ref{eq:beamForm}) implicitly assumes that the multipaths
mainly arrive from the direction of the mobile user $\phi(\mathbf{x},\mathbf{o}_{q})$,
and hence, the $\mathbf{g}_{q,m}$ pointing to $c_{q,m}=\phi(\mathbf{x},\mathbf{o}_{q})$
receives the maximum gain, whereas, the paths arriving from other
angles have a substantially smaller probability. This model aligns
with the scenario where \acpl{bs} are placed on a high tower with
only a few local scatters.

\subsubsection{Measurement Model}

Motivated by the models (\ref{eq:path-loss}) and (\ref{eq:beamForm}),
we arrive at our measurement model as follows. Consider a slotted
system, $t=1,2,...,T$, and denote $\mathbf{x}_{t}\in\mathbb{R}^{2}$
as the location of the user at time $t$. For a mobile location $\mathbf{x}_{t}$,
denote $\tilde{y}_{t,q,m}=\mathbb{E}\{|\mathbf{g}_{q,m}^{\text{H}}\mathbf{h}_{q}(\mathbf{x}_{t})|^{2}\}$
as the average received power from sensing vector $\mathbf{g}_{q,m}$.
From (\ref{eq:path-loss}) and (\ref{eq:beamForm}), the logarithmic
value $y_{q,m,t}=10\log_{10}\tilde{y}_{q,m,t}$ is modeled as
\begin{align}
y_{q,m,t} & =\beta_{q}+\alpha_{q}\mathrm{log}_{10}d(\mathbf{x}_{t},\mathbf{o}_{q})\label{eq:measurement-model}\\
 & +\omega_{q,m}\mathrm{exp}[-\eta_{q,m}(\phi(\mathbf{x}_{t},\mathbf{o}_{q})-c_{q,m})^{2}]+\xi_{q,t}\nonumber 
\end{align}
\textcolor{black}{where $\xi_{q,t}\sim\mathcal{N}(0,\sigma_{q}^{2})$
models the randomness due to the shadowing and the pattern model mismatch
from $\xi'$ and $\xi''$ in (\ref{eq:path-loss}) and (\ref{eq:beamForm})
respectively, }$\xi''$ is absorbed into the constant $\beta_{q}$
 as a general offset for each  to ensure model parsimony, $\xi_{q,t}$
is assumed to be independent across \acpl{bs} and different time
slot $t$.

Thus, the \ac{pdf} of $y_{q,m,t}$ given the mobile user location
$\mathbf{x}_{t}$ can be given by
\begin{align}
 & p(y_{q,m,t}|\mathbf{x}_{t};\bm{\theta}_{q,m})\label{eq:prob-observ}\\
 & =\frac{1}{(2\pi)^{1/2}\sigma_{q}}\mathrm{exp}\Big\{-\frac{1}{2\sigma_{q}^{2}}\Big[y_{q,m,t}-\beta_{q}-\alpha_{q}\nonumber \\
 & \times\mathrm{log}_{10}d(\mathbf{x}_{t},\mathbf{o}_{q})-\omega_{q,m}\mathrm{exp}[-\eta_{q,m}(\phi(\mathbf{x}_{t},\mathbf{o}_{q})-c_{q,m})^{2}]\Big]^{2}\Big\}\nonumber 
\end{align}
\textcolor{black}{where $\bm{\theta}_{q,m}=\{\alpha_{q},\beta_{q},\sigma_{q},\omega_{q,m},\eta_{q,m},c_{q,m}\}$
is a collection of propagation parameters for the $m$th beam of the
$q$th }\ac{bs}\textcolor{black}{.}

Assume that, given the location, measurements across beams and \textcolor{black}{\acpl{bs}}
are statistically independent, which is a standard and practically
justified simplification.\textcolor{black}{{} Denote $\mathbf{y}_{t}$
as the collection of measurements of all \acpl{bs} $q$ over all
sensing vectors $m$ at time slot $t$. The \ac{pdf} of $\mathbf{y}_{t}$
is given by
\begin{equation}
p(\mathbf{y}_{t}|\mathbf{x}_{t};\bm{\Theta}_{\text{{p}}})=\prod_{q=1}^{Q}\prod_{m=1}^{M}p(y_{q,m,t}|\mathbf{x}_{t})\label{eq:prob-yt-xt}
\end{equation}
where $\bm{\Theta}_{\text{{p}}}=\{\bm{\theta}_{q,m}\}$ is a collection
of propagation parameters for beams $m$ and }\acpl{bs}\textcolor{black}{{}
$q$.}

In a more general case, if we only have sparse observations for a
few selected sensing vectors $\mathbf{g}_{q,m}$ indexed by the set
$\mathcal{M}_{q,t}$ for the $q$th \ac{bs}, and for a few selected
\acpl{bs} denoted by the index set $\mathcal{Q}_{t}$ at time slot
$t$, the model in (\ref{eq:prob-yt-xt}) can be expressed as \textcolor{black}{
\begin{equation}
p(\mathbf{y}_{t}|\mathbf{x}_{t};\bm{\Theta}_{\text{{p}}})=\prod_{q\in\mathcal{Q}_{t}}\prod_{m\in\mathcal{M}_{q,t}}p(y_{q,m,t}|\mathbf{x}_{t}).\label{eq:prob-yt-xt-1}
\end{equation}
}It is observed that the formulation is not affected by the situation
of missing data.

\subsection{Mobility Model}

\label{subsec:Vehicle-Trajectory-Model}

We adopt the Gauss-Markov model \cite{Lia:J99,He:J18} for the dynamics
of user mobility $\mathbf{x}_{t}$. Denote $\delta$ as the slot duration.
The movement at the $t$th time slot is modeled as:
\begin{equation}
\mathbf{x}_{t}-\mathbf{x}_{t-1}=\gamma(\mathbf{x}_{t-1}-\mathbf{x}_{t-2})+(1-\gamma)\delta\bar{\mathbf{v}}+\sqrt{1-\gamma^{2}}\delta\bm{\epsilon}.\label{eq:mobility-model}
\end{equation}
Here, the velocity $(\mathbf{x}_{t}-\mathbf{x}_{t-1})/\delta$ at
time slot $t$ depends on the velocity from the previous time slot
following an auto-regressive model with parameter $0<\gamma\leq1$
and randomness $\bm{\epsilon}\sim\mathcal{N}(\bm{0},\sigma_{\text{v}}^{2}\mathbf{I})$.
This is to capture the fact that acceleration is bounded in practice.
The parameter $\bar{\mathbf{v}}$ models the average velocity. A higher
$\gamma$ value indicates a stronger correlation between consecutive
velocities, resulting in smoother movement. When $\gamma=1$, the
mobile user maintains a constant velocity.

\subsection{An \ac{hmm} Formulation}

\label{subsec:An-HMM-Formulation}

Denote $\mathcal{X}_{t}=(\mathbf{x}_{1},\mathbf{x}_{2},\dots,\mathbf{x}_{t})$
and $\mathcal{Y}_{t}=(\mathbf{y}_{1},\mathbf{y}_{2},\dots,\mathbf{y}_{t})$
as the trajectory of the mobile user and the accumulated measurements
up to time $t$, respectively. The goal here is to estimate the trajectory
$\mathcal{X}_{t}$ based on the measurements $\mathcal{Y}_{t}$.

Based on the Gauss-Markov model (\ref{eq:mobility-model}), the \ac{pdf}
of the location $\mathbf{x}_{t}$ at time slot $t$ can be written
as $p(\mathbf{x}_{t}|\mathbf{x}_{t-1},\mathbf{x}_{t-2};\bm{\Theta}_{\text{m}})$,
where $\bm{\Theta}_{\text{{m}}}=\{\bar{\mathbf{v}},\sigma_{\text{v}}^{2}\}$
is the collection of mobility parameters. The Bayes' rule of probability
suggests that
\begin{align}
p(\mathcal{Y}_{t},\mathcal{X}_{t}) & =p(\mathbf{y}_{t}|\mathcal{Y}_{t-1},\mathbf{x}_{t},\mathcal{X}_{t-1})p(\mathcal{Y}_{t-1},\mathbf{x}_{t},\mathcal{X}_{t-1})\nonumber \\
 & =p(\mathbf{y}_{t}|\mathbf{x}_{t})p(\mathcal{Y}_{t-1},\mathbf{x}_{t},\mathcal{X}_{t-1})\nonumber \\
 & =p(\mathbf{y}_{t}|\mathbf{x}_{t})p(\mathbf{x}_{t}|\mathbf{x}_{t-1},\mathbf{x}_{t-2})p(\mathcal{Y}_{t-1},\mathcal{X}_{t-1}),\label{eq:p_Yt_Xt}
\end{align}
which is due to the fact that\textbf{ }$\mathbf{y}_{t}$ is independent
of $\mathcal{Y}_{t-1}$ and $\mathcal{X}_{t-1}$ given $\mathbf{x}_{t}$,
and $\mathbf{x}_{t}$ is independent of $\mathcal{Y}_{t-1}$ given
$\mathbf{x}_{t-1}$ and $\mathbf{x}_{t-2}$.

Recursively applying (\ref{eq:p_Yt_Xt}), one arrives at 
\[
p(\mathcal{Y}_{T},\mathcal{X}_{T})=\prod_{t=1}^{T}p(\mathbf{y}_{t}|\mathbf{x}_{t})\prod_{t=3}^{T}p(\mathbf{x}_{t}|\mathbf{x}_{t-1},\mathbf{x}_{t-2}).
\]

Consider to maximize the log-likelihood $\log p(\mathcal{Y}_{T},\mathcal{X}_{T})$.
We arrive at the following problem\textcolor{black}{
\begin{align}
\underset{\mathcal{X}_{T},\bm{\Theta}_{\text{{p}}},\bm{\Theta}_{\text{{m}}}}{\mathrm{maximize}} & \;\;\sum_{t=1}^{T}\log p(\mathbf{y}_{t}|\mathbf{x}_{t})+\sum_{t=3}^{T}\log p(\mathbf{x}_{t}|\mathbf{x}_{t-1},\mathbf{x}_{t-2})\label{eq:prob-J}
\end{align}
}which jointly fits the parameters $\bm{\Theta}_{\text{p}}$ of the
observation model (\ref{eq:prob-yt-xt}) and the parameters $\Theta_{\text{m}}$
of the mobility model to the data $\mathcal{Y}_{T}$ for the recovery
of the trajectory $\mathcal{X}_{T}$. Due to the assumption on independent
measurements, the models for multiple users are identical. Thus, it
is straight-forward to apply the algorithm to the case of multiple
users and multiple trajectories.

\section{Fundamental Limits}

\label{sec:Fundamental-Limits}

We study the fundamental limit on recovering the trajectory $\mathcal{X}_{T}$
from the power measurements under the most challenging case $N_{\text{t}}=1$
with a single antenna at each \ac{bs}, where we allow to scale the
number of \acpl{bs} to infinity. Increasing $N_{\text{t}}$ enables
the extraction of richer spatial features, thereby improving estimation
accuracy and trajectory identifiability. Specifically, we will examine
the \ac{crlb} for estimating $\mathbf{x}_{t}$. It is known that
the \ac{crlb} provides a lower bound on the \ac{mse} of any unbiased
estimator of a parameter. Under certain regularity conditions, the
\ac{mle} asymptotically achieves the \ac{crlb} as the number of
measurements increases to infinity \cite{KaySte:J93}. Thus, our goal
is to understand the best possible performance for recovering the
trajectory $\mathbf{x}_{t}$ and identify the critical parameters
that affect the performance.

To understand the performance upper bound, consider the case of $\gamma=1$
in the mobility model (\ref{eq:mobility-model}), where the Markovian
mobility model degenerates to a chain of states with transition probability
1, which corresponds to the user moving at a constant speed. Consequently,
the mobility $\mathbf{x}_{t}$ reduces to a deterministic trajectory:
\begin{equation}
\mathbf{x}_{t}=\mathbf{x}+t\mathbf{v}\label{eq:constant-speed-mobility}
\end{equation}
which can be fully determined by the parameters $(\mathbf{x},\mathbf{v})\in\mathbb{R}^{4}$,
i.e., the starting position $\mathbf{x}$ and the velocity $\mathbf{v}$.
It naturally follows that decreasing $\gamma$ will increase the randomness
of $\mathbf{x}_{t}$, and more information is needed to determine
$\mathbf{x}_{t}$. In the limiting case where $\gamma=0$, the trajectory
recovery problem degenerates to a series of conventional Bayesian
estimations for the positions $\mathbf{x}_{t}$ based on \ac{rss}
measurements. Hence, the case $\gamma=1$ corresponds to the best
possible performance we can obtain for the \ac{rss}-based trajectory
recovery problem.

Based on the constant speed mobility model (\ref{eq:constant-speed-mobility}),
the function $d(\mathbf{x}_{t},\mathbf{o}_{q})$ for the distance
between the mobile location $\mathbf{x}_{t}$ and the $q$th \ac{bs}
location $\mathbf{o}_{q}$ is simplified as $d_{t,q}(\mathbf{x},\mathbf{v})\triangleq\|\bm{l}_{q}(\mathbf{x})+t\mathbf{v}\|_{2}$,
where $\bm{l}_{q}(\mathbf{x})=\mathbf{x}-\mathbf{o}_{q}$ is the direction
from the $q$th \ac{bs} to the initial position $\mathbf{x}$ of
the trajectory. From the observation model (\ref{eq:prob-observ})
under $N_{\text{t}}=1$ and the mobility model (\ref{eq:mobility-model})
under $\gamma=1$, the log-likelihood function (\ref{eq:prob-J})
becomes 
\begin{align}
f(\bm{\phi},\bm{\psi}) & =\sum_{t=1}^{T}\sum_{q=1}^{Q}\bigg[-\ln2\pi\sigma_{q}^{2}\label{eq:log-likelihood-constant-speed}\\
 & \qquad-\frac{1}{2\sigma_{q}^{2}}\left(y_{t,q}-\beta_{q}-\alpha_{q}\log d_{t,q}(\mathbf{x},\mathbf{v})\right)^{2}\Big]\nonumber 
\end{align}
where $\bm{\phi}=\{\alpha_{q},\beta_{q}\}$ is the propagation parameters
and $\bm{\psi}=(\mathbf{x},\mathbf{v})$ is the mobility parameters,
and the term $\log p(\mathbf{x}_{t}|\mathbf{x}_{t-1},\mathbf{x}_{t-2})$
in (\ref{eq:prob-J}) disappears under constant speed mobility.

From the log-likelihood function (\ref{eq:log-likelihood-constant-speed}),
it is not surprising that once the mobility parameter $\bm{\psi}$
is available, the \ac{crlb} of the estimator for the propagation
parameter $\bm{\phi}$ approaches zero as $T\to\infty$. This is because
as $T$ approaches infinity, the number of independent measurements
under the observation model (\ref{eq:measurement-model}) approaches
infinity, and since equation (\ref{eq:measurement-model}) establishes
a linear model on the propagation parameter $\bm{\phi}$ with Gaussian
noise, the \ac{crlb} of estimating $\bm{\phi}$ approaches zero \cite{KaySte:J93}.

Hence, the focus here is to understand the fundamental limit of estimating
the mobility parameter $\bm{\psi}=(\mathbf{x},\mathbf{v})$.

\subsection{The Fisher Information Matrix}

The \ac{fim} $\mathbf{F}_{T,\psi}$ of $\bm{\psi}=(\mathbf{x},\mathbf{v})\in\mathbb{R}^{4}$
from the measurements over a duration $T$ can be computed as
\begin{align*}
\mathbf{F}_{T,\psi} & \triangleq\mathbb{E}\{-\nabla_{\bm{\psi}\bm{\psi}}^{2}f(\bm{\phi},\bm{\psi})\}\\
 & =\sum_{t,q}\frac{\alpha_{q}^{2}}{\sigma_{q}^{2}d_{t,q}^{2}(\mathbf{x},\mathbf{v})}\nabla_{\bm{\psi}}d_{t,q}(\mathbf{x},\mathbf{v})(\nabla_{\bm{\psi}}d_{t,q}(\mathbf{x},\mathbf{v}))^{\mathrm{T}}
\end{align*}
where the derivative $\nabla_{\bm{\psi}}d_{t,q}(\mathbf{x},\mathbf{v})$
is derived as
\begin{align*}
\nabla_{\bm{\psi}}d_{t,q}(\mathbf{x},\mathbf{v}) & =\left[\begin{array}{c}
\frac{\partial}{\partial\mathbf{x}}\|\bm{l}_{q}(\mathbf{x})+t\mathbf{v}\|_{2}\\
\frac{\partial}{\partial\mathbf{v}}\|\bm{l}_{q}(\mathbf{x})+t\mathbf{v}\|_{2}
\end{array}\right]=\frac{\bm{l}_{q}(\mathbf{x})+t\mathbf{v}}{d_{t,q}(\mathbf{x},\mathbf{v})}\left[\begin{array}{c}
1\\
t
\end{array}\right]^{\text{T}}.
\end{align*}
Thus, the \ac{fim} can be expressed as
\begin{align}
\mathbf{F}_{T,\psi} & =\sum_{t,q}\frac{\alpha_{q}^{2}}{\sigma_{q}^{2}d_{t,q}^{4}(\mathbf{x},\mathbf{v})}\left[\begin{array}{cc}
1 & t\\
t & t^{2}
\end{array}\right]\label{eq:F-Txtil}\\
 & \qquad\otimes\left((\bm{l}_{q}(\mathbf{x})+t\mathbf{v})(\bm{l}_{q}(\mathbf{x})+t\mathbf{v})^{\text{T}}\right)\nonumber 
\end{align}
in which, $\otimes$ is the Kronecker product.

For an unbiased estimator $\hat{\bm{\psi}}$, the \ac{mse} is lower
bounded by $\mathbb{E}\{\|\hat{\bm{\psi}}-\bm{\psi}\|^{2}\}\geq\mathrm{tr}\{\mathbf{F}_{T,\psi}^{-1}\}$,
where $\mathrm{tr}\{\mathbf{F}_{T,\psi}^{-1}\}$ is the \ac{crlb}
of estimating $\bm{\psi}=(\mathbf{x},\mathbf{v})$. Similarly, we
define the \acpl{fim} $\mathbf{F}_{T,x}=\mathbb{E}\left\{ -\nabla_{\mathbf{x}\mathbf{x}}^{2}f(\bm{\phi},\bm{\psi})\right\} $
and $\mathbf{F}_{T,v}=\mathbb{E}\left\{ -\nabla_{\mathbf{v}\mathbf{v}}^{2}f(\bm{\phi},\bm{\psi})\right\} $,
which are the diagonal blocks of $\mathbf{F}_{T,\psi}$ and are associated
with the \ac{crlb} $B(\mathbf{x})=\mathrm{tr}\{\mathbf{F}_{T,x}^{-1}\}$
and \ac{crlb} $B(\mathbf{v})=\mathrm{tr}\{\mathbf{F}_{T,v}^{-1}\}$
for the parameters $\mathbf{x}$ and $\mathbf{v}$, respectively.

\subsection{\ac{bs} Deployed in a Limited Region}

We first investigate the case where the \acpl{bs} are deployed in
a limited region, but the measurement trajectory is allowed to go
unbounded as $T$ goes to infinity. Signals can always be collected
by the \acpl{bs} regardless of the distance. As a result, an infinite
amount of measurements can be collected as $T\to\infty$.

It is observed that $\mathbf{F}_{T,\psi}\prec\mathbf{F}_{T+1,\psi}$,
indicating that the Fisher information is strictly increasing. This
is because each term in the summation in (\ref{eq:F-Txtil}) is positive
definite, provided that $\bm{l}_{q}(\mathbf{x})$ and $\mathbf{v}$
are linear independent for at least one $q$.

However, it is somewhat surprising that the \ac{crlb} for $\mathbf{x}_{}$
and $\mathbf{v}$ does not decrease to zero as $T\to\infty$, despite
the infinitely increasing amount of independent data.

Specifically, assume that the trajectory $\mathbf{x}_{t}$ does not
pass any of the \ac{bs} location $\mathbf{o}_{q}$, and hence, $d_{\min,q}=\min_{t}\{d_{t,q}(\mathbf{x},\mathbf{v})\}>0$
for all $q$. Define $\alpha_{\max}^{2}=\max_{q}\{\alpha_{q}^{2}\}$,
$\sigma_{\min}^{2}=\min_{q}\{\sigma_{q}^{2}\}$.
\begin{thm}
\label{thm:LB-F-x}The \ac{crlb} of $\mathbf{x}$ satisfies $B(\mathbf{x})=\mathrm{tr}\{\mathbf{F}_{T,x}^{-1}\}\geq\bar{\Delta}_{T,x}$,
where equality can be achieved when $\sigma_{\mathrm{\min}}^{2}=\sigma_{q}^{2}$
and $\alpha_{\mathrm{max}}^{2}=\alpha_{q}^{2}$ for all $q$. In addition,
$\bar{\Delta}_{T,x}$ is strictly decreasing in $T$, provided that
at least two vectors in $\{\bm{l}_{1}(\mathbf{x}),\bm{l}_{2}(\mathbf{x}),\dots,\bm{l}_{Q}(\mathbf{x}),\mathbf{v}\}$
are linear independent, but $\bar{\Delta}_{T,x}$ converges to a strictly
positive number as $T\to\infty$.
\end{thm}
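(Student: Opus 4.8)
The plan is to sandwich the \ac{fim} $\mathbf{F}_{T,x}$ in the Loewner order between matrices with uniform constants and then to exploit the order-reversing property of matrix inversion. Reading off the $\mathbf{x}$-block of (\ref{eq:F-Txtil}), write
\[
\mathbf{F}_{T,x}=\sum_{t=1}^{T}\sum_{q=1}^{Q}\frac{\alpha_{q}^{2}}{\sigma_{q}^{2}}\frac{\mathbf{u}_{t,q}\mathbf{u}_{t,q}^{\mathrm{T}}}{d_{t,q}^{2}(\mathbf{x},\mathbf{v})},\qquad\mathbf{u}_{t,q}\triangleq\frac{\bm{l}_{q}(\mathbf{x})+t\mathbf{v}}{d_{t,q}(\mathbf{x},\mathbf{v})},
\]
where $\|\mathbf{u}_{t,q}\|_{2}=1$ so that every summand is positive semidefinite. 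Replacing each coefficient $\alpha_{q}^{2}/\sigma_{q}^{2}$ by $\alpha_{\max}^{2}/\sigma_{\min}^{2}$ enlarges every summand in the Loewner order, giving $\mathbf{F}_{T,x}\preceq\bar{\mathbf{F}}_{T,x}\triangleq(\alpha_{\max}^{2}/\sigma_{\min}^{2})\sum_{t,q}d_{t,q}^{-2}\mathbf{u}_{t,q}\mathbf{u}_{t,q}^{\mathrm{T}}$. Provided both are positive definite (established below), inversion reverses the order, so $B(\mathbf{x})=\mathrm{tr}\{\mathbf{F}_{T,x}^{-1}\}\geq\mathrm{tr}\{\bar{\mathbf{F}}_{T,x}^{-1}\}\triangleq\bar{\Delta}_{T,x}$, with equality exactly when $\alpha_{q}^{2}=\alpha_{\max}^{2}$ and $\sigma_{q}^{2}=\sigma_{\min}^{2}$ for all $q$, since then $\mathbf{F}_{T,x}=\bar{\mathbf{F}}_{T,x}$.

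Next I would settle positive definiteness and strict monotonicity together. The hypothesis that two vectors of $\{\bm{l}_{1}(\mathbf{x}),\dots,\bm{l}_{Q}(\mathbf{x}),\mathbf{v}\}$ are linearly independent, combined with $\mathbf{v}\neq\mathbf{0}$ (forced by the unboundedness of the trajectory), guarantees some $\bm{l}_{q_{0}}(\mathbf{x})$ not collinear with $\mathbf{v}$; then $\mathbf{u}_{1,q_{0}}$ and $\mathbf{u}_{2,q_{0}}$ are linearly independent, so their rank-one terms already make $\bar{\mathbf{F}}_{T,x}\succ\mathbf{0}$ for $T\geq2$. Writing $\bar{\mathbf{F}}_{T+1,x}=\bar{\mathbf{F}}_{T,x}+\mathbf{D}_{T}$ with $\mathbf{D}_{T}=(\alpha_{\max}^{2}/\sigma_{\min}^{2})\sum_{q}d_{T+1,q}^{-2}\mathbf{u}_{T+1,q}\mathbf{u}_{T+1,q}^{\mathrm{T}}\succeq\mathbf{0}$ and $\mathbf{D}_{T}\neq\mathbf{0}$ (finite since $d_{\min,q}>0$), strict monotonicity follows from the fact that $\mathbf{X}\mapsto\mathrm{tr}\{\mathbf{X}^{-1}\}$ strictly decreases under a nonzero positive semidefinite increment: its directional derivative $-\mathrm{tr}\{\mathbf{X}^{-1}\mathbf{D}_{T}\mathbf{X}^{-1}\}=-\|\mathbf{D}_{T}^{1/2}\mathbf{X}^{-1}\|_{\mathrm F}^{2}<0$ integrated along the segment yields $\bar{\Delta}_{T+1,x}<\bar{\Delta}_{T,x}$.

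The crux, and the genuinely surprising part, is that $\bar{\Delta}_{T,x}$ converges to a strictly positive limit rather than to zero. The plan is to show $\bar{\mathbf{F}}_{T,x}$ converges to a finite, positive definite $\bar{\mathbf{F}}_{\infty,x}$. Since $\mathbf{v}\neq\mathbf{0}$, we have $d_{t,q}(\mathbf{x},\mathbf{v})=\|\bm{l}_{q}(\mathbf{x})+t\mathbf{v}\|_{2}\sim t\|\mathbf{v}\|_{2}$, so each entry of $d_{t,q}^{-2}\mathbf{u}_{t,q}\mathbf{u}_{t,q}^{\mathrm{T}}$ is bounded in magnitude by $d_{t,q}^{-2}=O(1/t^{2})$; because $\sum_{t}1/t^{2}<\infty$, the matrix series converges absolutely entrywise to a finite $\bar{\mathbf{F}}_{\infty,x}$. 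At the same time $\bar{\mathbf{F}}_{\infty,x}\succeq\bar{\mathbf{F}}_{2,x}\succ\mathbf{0}$, so the limit is invertible, and continuity of inversion on the positive definite cone gives $\bar{\Delta}_{T,x}\to\mathrm{tr}\{\bar{\mathbf{F}}_{\infty,x}^{-1}\}>0$. This is exactly the saturation of accumulated Fisher information: as the user recedes from the \acpl{bs} every observation direction $\mathbf{u}_{t,q}$ collapses toward $\mathbf{v}/\|\mathbf{v}\|_{2}$ and the per-slot information about the initial location $\mathbf{x}$ decays like $1/t^{2}$, a summable rate. I expect the main obstacle to be precisely this finiteness argument --- certifying that the geometric collapse of the observation directions is fast enough for the information series to converge --- since the positive definiteness and monotonicity parts are comparatively routine consequences of the Loewner-order machinery.
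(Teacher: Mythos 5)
Your proof is correct and follows essentially the same route as the paper: upper-bound $\mathbf{F}_{T,x}$ in the Loewner order by homogenizing the per-\ac{bs} constants to $\alpha_{\max}^{2}/\sigma_{\min}^{2}$, invert to get the lower bound $\bar{\Delta}_{T,x}$, and show the bounding matrix converges to a finite positive definite limit because the per-slot information decays like $d_{t,q}^{-2}=O(1/t^{2})$, which is summable (the paper packages this as boundedness of the sums $s_{T,q}^{(n)}=\sum_{t}t^{n}/d_{t,q}^{4}$ for $n\le 2$). One small point in your favor: your monotonicity argument via the strict decrease of $\mathrm{tr}\{\mathbf{X}^{-1}\}$ under any nonzero positive semidefinite increment is slightly more robust than the paper's claim that each per-slot increment is itself positive definite, which can fail when $Q=1$ (a single rank-one term per slot).
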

\begin{proof}See Appendix \ref{sec:app-theo1}.\end{proof}

Theorem \ref{thm:LB-F-x} suggests that the \ac{crlb} of $\mathbf{x}$
cannot decrease to zero even when we estimate only two parameters
for the initial location $\mathbf{x}\in\mathbb{R}^{2}$ based on {\em infinite}
measurement samples collected over an infinite geographical horizon
as $T\to\infty$.

Through the development of the proof, a physical interpretation of
Theorem \ref{thm:LB-F-x} can be given as follows. As $T$ increases,
the distances $d_{t,q}(\mathbf{x},\mathbf{v})=\|\mathbf{x}_{t}-\mathbf{o}_{q}\|_{2}$
grow larger because the user moves away from the \acpl{bs}. For a
position $\mathbf{x}_{t}$ at a sufficiently large distance, the term
$\mathbf{x}_{t}-\mathbf{o}_{q}$ approximates to $\mathbf{x}_{t}$
since $\|\mathbf{x}_{t}\|\gg\|\mathbf{o}_{q}\|$. Consequently, all
measurements point to approximately the same direction independent
of the \ac{bs} locations $\mathbf{o}_{q}$, i.e., the incremental
information provided by each new measurement diminishes rapidly. Therefore,
although the \ac{fim} $\mathbf{F}_{T,x_{}}$ strictly increases with
$T$, the increment $\mathbf{F}_{T+1,x_{}}-\mathbf{F}_{T,x_{}}$ decreases
quickly. As a result, the \acpl{crlb} strictly decreases but only
approaches a non-zero lower bound, similar to how the series $\sum_{t=1}^{\infty}1/t^{r}$
converges for $r>1$.

While it is not possible to perfectly estimate the starting point
$\mathbf{x}$ under $T=\infty$ with a finite number of \acpl{bs},
one might expect that estimating the velocity $\mathbf{v}$ could
be more promising. However, we have the following result.
\begin{thm}
\label{thm:LB-F-v}The \ac{crlb} of $\mathbf{v}$ satisfies $B(\mathbf{v})=\mathrm{tr}\{\mathbf{F}_{T,v}^{-1}\}\geq\bar{\Delta}_{T,v}$
with equality achieved when $\sigma_{\mathrm{\min}}^{2}=\sigma_{q}^{2}$,
$\alpha_{\mathrm{max}}^{2}=\alpha_{q}^{2}$ for all $q$. In addition,
\[
\bar{\Delta}_{T,v}\rightarrow C_{v}=\left(\frac{\alpha_{\max}^{2}}{\sigma_{\min}^{2}}\sum_{q=1}^{Q}s_{\infty,q}^{(2)}\|\mathbf{P}_{v}^{\bot}\bm{l}_{q}(\mathbf{x})\|^{2}\right)^{-1}
\]
 as $T\rightarrow\infty$, where
\[
s_{\infty,q}^{(2)}=\lim_{T\to\infty}\sum_{t=1}^{T}\frac{t^{2}}{d_{t,q}^{4}(\mathbf{x},\mathbf{v})},\:\mathbf{P}_{v}^{\bot}=\mathbf{I}-\mathbf{v}\mathbf{v}^{T}/\|\mathbf{v}\|^{2}
\]
in which, the parameter $s_{\infty,q}^{(2)}$ is upper bounded by
$1/\rho^{4}\lim_{T\to\infty}\sum_{t=1}^{T}1/t^{2}\approx\pi^{2}/(6\rho^{4})$,
where $\rho>0$ is sufficiently small such that $d_{t,q}(\mathbf{x},\mathbf{v})>\rho t$
for all $t\geq1$.
\end{thm}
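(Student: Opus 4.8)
The plan is to isolate the velocity block $\mathbf{F}_{T,v}$ from the Kronecker structure in (\ref{eq:F-Txtil}) and reduce the trace of its inverse to a scalar computation, exploiting that $\mathbf{v}\in\mathbb{R}^2$ so $\mathbf{F}_{T,v}$ is $2\times2$. Reading off the $(2,2)$ entry of the $2\times2$ factor gives
\[
\mathbf{F}_{T,v} = \sum_{t=1}^{T}\sum_{q=1}^{Q} \frac{\alpha_q^2 t^2}{\sigma_q^2 d_{t,q}^4(\mathbf{x},\mathbf{v})} \left(\bm{l}_q(\mathbf{x}) + t\mathbf{v}\right)\left(\bm{l}_q(\mathbf{x}) + t\mathbf{v}\right)^{\mathrm{T}}.
\]
I would define $\bar{\mathbf{F}}_{T,v}$ by replacing every coefficient $\alpha_q^2/\sigma_q^2$ with the uniform $\alpha_{\max}^2/\sigma_{\min}^2$. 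Since each summand is positive semidefinite and the replacement only scales coefficients upward, $\bar{\mathbf{F}}_{T,v} \succeq \mathbf{F}_{T,v}$ in the Loewner order, whence $\mathrm{tr}\{\bar{\mathbf{F}}_{T,v}^{-1}\} \leq \mathrm{tr}\{\mathbf{F}_{T,v}^{-1}\} = B(\mathbf{v})$. Setting $\bar{\Delta}_{T,v} \triangleq \mathrm{tr}\{\bar{\mathbf{F}}_{T,v}^{-1}\}$ yields the claimed bound, with equality exactly when $\sigma_{\min}^2 = \sigma_q^2$ and $\alpha_{\max}^2 = \alpha_q^2$ for all $q$.

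To evaluate the limit of $\bar{\Delta}_{T,v}$, I would rotate to the orthonormal basis $\{\hat{\mathbf{v}}, \hat{\mathbf{v}}^{\bot}\}$ with $\hat{\mathbf{v}} = \mathbf{v}/\|\mathbf{v}\|$. The crucial observation is that the component of $\bm{l}_q(\mathbf{x}) + t\mathbf{v}$ along $\hat{\mathbf{v}}^{\bot}$ equals $\mathbf{P}_v^{\bot}\bm{l}_q(\mathbf{x})$ and is \emph{independent of} $t$, whereas the component along $\hat{\mathbf{v}}$ grows as $a_q + t\|\mathbf{v}\|$ with $a_q = \hat{\mathbf{v}}^{\mathrm{T}}\bm{l}_q(\mathbf{x})$. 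Writing $\bar{\mathbf{F}}_{T,v}$ in this basis produces a parallel entry $A_T$, a cross entry $B_T$, and a perpendicular entry $C_T$. Using $d_{t,q}\sim t\|\mathbf{v}\|$, the summands of $A_T$ tend to a nonzero constant (so $A_T = \Theta(T)\to\infty$), those of $C_T$ decay like $1/t^2$ (so $C_T \to \bar{C}_\infty \triangleq \frac{\alpha_{\max}^2}{\sigma_{\min}^2}\sum_q s_{\infty,q}^{(2)}\|\mathbf{P}_v^{\bot}\bm{l}_q(\mathbf{x})\|^2$, finite and, under the mild non-collinearity condition that some $\bm{l}_q(\mathbf{x})$ is not parallel to $\mathbf{v}$, strictly positive), and those of $B_T$ decay like $1/t$ (so $B_T = O(\log T)$). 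Applying the planar identity $\mathrm{tr}\{\mathbf{F}^{-1}\} = \mathrm{tr}\{\mathbf{F}\}/\det\mathbf{F}$ and dividing through by $A_T$ gives
\[
\bar{\Delta}_{T,v} = \frac{A_T + C_T}{A_T C_T - B_T^2} = \frac{1 + C_T/A_T}{C_T - B_T^2/A_T} \longrightarrow \frac{1}{\bar{C}_\infty} = C_v,
\]
since $C_T/A_T\to0$ and $B_T^2/A_T = O\!\left((\log T)^2/T\right)\to0$.

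Finally, the bound on $s_{\infty,q}^{(2)}$ follows from a linear lower envelope of the distance. Because $d_{t,q}(\mathbf{x},\mathbf{v})/t \to \|\mathbf{v}\| > 0$ while $d_{t,q}(\mathbf{x},\mathbf{v}) \geq d_{\min,q} > 0$ for the finitely many small $t$, the infimum of $d_{t,q}(\mathbf{x},\mathbf{v})/t$ over $t\geq1$ is strictly positive; choosing $\rho$ below it gives $d_{t,q}(\mathbf{x},\mathbf{v}) > \rho t$, hence $t^2/d_{t,q}^4 < 1/(\rho^4 t^2)$, and summing yields $s_{\infty,q}^{(2)} < \rho^{-4}\sum_{t=1}^\infty t^{-2} = \pi^2/(6\rho^4)$.

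I expect the trace-of-inverse limit to be the main obstacle. The velocity block has one eigendirection whose information \emph{diverges} (parallel to the motion, because the lever arm $t\mathbf{v}$ amplifies the angular response) and one that \emph{saturates} (perpendicular to the motion), and the delicate point is to show the cross-information $B_T$ is asymptotically negligible against the diverging $A_T$, i.e. $B_T^2/A_T\to0$, so that the saturating perpendicular direction alone fixes the limiting \ac{crlb}. Pinning down the $\Theta(T)$, $O(\log T)$, and $\Theta(1)$ growth orders — not merely their leading coefficients — is the technical heart of the argument.
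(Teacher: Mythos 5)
Your proposal is correct. The first half — majorizing the \ac{fim} by replacing every $\alpha_q^2/\sigma_q^2$ with $\alpha_{\max}^2/\sigma_{\min}^2$ so that $\bar{\mathbf{F}}_{T,v}=C_0\mathbf{A}_{T,v}\succeq\mathbf{F}_{T,v}$ and then invoking operator monotonicity of the inverse — is exactly the paper's route, and your choice $\bar{\Delta}_{T,v}=\mathrm{tr}\{\bar{\mathbf{F}}_{T,v}^{-1}\}$ is in fact slightly cleaner: the paper instead sets $\bar{\Delta}_{T,v}=C_0^{-1}\lambda_{\min}^{-1}(\mathbf{A}_{T,v})$, so its stated equality under uniform $\alpha_q,\sigma_q$ only holds asymptotically (after discarding the vanishing $\lambda_{\max}^{-1}$ term), whereas yours is exact for every $T$. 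Where you genuinely diverge is the limit computation. The paper (its Lemma 2) argues that the $\sum_q s_{T,q}^{(4)}\mathbf{v}\mathbf{v}^{\mathrm{T}}$ term dominates, concludes that the eigenvectors converge to $\mathbf{v}/\|\mathbf{v}\|$ and $\mathbf{v}_\perp$, and then evaluates $\lambda_{\min}$ as the Rayleigh quotient $\mathbf{v}_\perp^{\mathrm{T}}\mathbf{A}_{T,v}\mathbf{v}_\perp$ — which is really only an upper bound on $\lambda_{\min}$ unless the cross term is controlled, a point the paper leaves implicit. You instead rotate to the $\{\hat{\mathbf{v}},\hat{\mathbf{v}}^\perp\}$ basis, use the exact planar identity $\mathrm{tr}\{\mathbf{F}^{-1}\}=\mathrm{tr}\{\mathbf{F}\}/\det\mathbf{F}$, and pin down the growth orders $A_T=\Theta(T)$, $B_T=O(\log T)$, $C_T=\Theta(1)$, so that $B_T^2/A_T\to 0$ is proved rather than assumed. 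This makes the asymptotic eigenvector claim rigorous and is the more careful argument; the only costs are that you must assume $\mathbf{F}_{T,v}\succ 0$ (some $\bm{l}_q$ not parallel to $\mathbf{v}$, which the paper also needs) and a trivial notational slip in calling the scalar component along $\hat{\mathbf{v}}^\perp$ the vector $\mathbf{P}_v^\bot\bm{l}_q(\mathbf{x})$. The limiting constant $C_v$ and the $\pi^2/(6\rho^4)$ bound on $s_{\infty,q}^{(2)}$ match the paper's exactly.
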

\begin{proof}See Appendix \ref{sec:app-theo2}.\end{proof}

While traditional work emphasizes \ac{bs} geometric diversity for
single-point localization, Theorems \ref{thm:LB-F-x} and \ref{thm:LB-F-v}
suggest that, under a finite number of \acpl{bs} in a limited region,
neither the initial position $\mathbf{x}_{}$ nor the velocity $\mathbf{v}$
can be perfectly estimated by merely increasing the observation time
$T$.

Theorem \ref{thm:LB-F-v} implies that the fundamental limit to the
estimation accuracy is affected by the spatial distribution of the
\acpl{bs} and the nature of the \ac{rss} measurements. Specifically,
the non-diminishing error lower bound $C_{v}$ in estimating the velocity
$\mathbf{v}\in\mathbb{R}^{2}$ is inversely affected by the norm of
$\mathbf{P}_{v}^{\bot}\bm{l}_{q}(\mathbf{x})$, i.e., the vector $\bm{l}_{q}(\mathbf{x})=\mathbf{x}-\mathbf{o}_{q}$
projected on the orthogonal direction of $\mathbf{v}$. The more spread
of the \acpl{bs} in the orthogonal direction of the moving direction
$\mathbf{v}$, the lower the estimation error of $\mathbf{v}$.

Another surprising observation that differentiates the trajectory
recovery problem (\ref{eq:prob-J}) from a conventional \ac{rss}-based
localization problem is that a large path loss exponent $|\alpha_{q}|$
is expected to decrease the \ac{crlb} for recovering the trajectory.
Recall that in the empirical path loss model, $\alpha_{q}=-20$ corresponds
to free space propagation, and $\alpha_{q}<-20$ usually corresponds
to propagations in the \ac{nlos} scenarios. Hence, for the same shadowing
deviation $\sigma$, a “deep fade” that empirically leads to a large
$|\alpha_{q}|$ seems to be preferred for a small \ac{crlb} in trajectory
recovery, or more rigorously, a large ratio $|\alpha_{q}/\sigma|$
is preferred. This phenomenon can be interpreted by the property that
a large $|\alpha_{q}|$ can better differentiate the movement distance
along ${\mathbf{x}_{t}}$ based on the \ac{rss} measurements in the
model (\ref{eq:measurement-model}).

\subsection{\ac{bs} Deployed in an Unlimited Region}

We now study the case where the \acpl{bs} follow a \ac{ppp} with
density $\kappa$ in an unlimited region, but the users can only connect
with a subset of \acpl{bs} within a radius of $R$. As a result,
the number of connected \acpl{bs} is still finite. We investigate
the \acpl{crlb} as the user trajectory goes unbounded as $T$ goes
to infinity.

Specifically, on average, measurements from $\bar{Q}=\kappa\pi R^{2}$
\acpl{bs} can be collected in each time slot. It turns out that,
in such a scenario, although we still have a limited number of active
\acpl{bs}, the estimation lower bound now can decrease to zero. Denote
$\alpha_{\min}^{2}=\min_{q}\{\alpha_{q}^{2}\}$, $\sigma_{\max}^{2}=\max_{q}\{\sigma_{q}^{2}\}$.
\begin{thm}
\label{thm:LB-F-x-un}Assume that the minimum distance to the nearest
\ac{bs} is greater than $r_{0}$ along the trajectory.\footnote{In practice, the parameter $r_0$ can be understood as the height of the  antenna. More rigorously, we should employ a 3D model to compute the distance $d_{t,q}$, but the asymptotic result would be the same.}
The \ac{crlb} of $\mathbf{x}$ satisfies $B(\mathbf{x})=\mathrm{tr}\{\mathbf{F}_{T,x}^{-1}\}\leq\tilde{\Delta}_{T,x}$
and as $T\to\infty$
\[
T\tilde{\Delta}_{T,x}\rightarrow\frac{2\sigma_{\max}^{2}}{\alpha_{\min}^{2}\kappa\pi\ln(R/r_{0})}.
\]
\end{thm}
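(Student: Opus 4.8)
The plan is to upper-bound the CRLB $B(\mathbf{x})=\mathrm{tr}\{\mathbf{F}_{T,x}^{-1}\}$ by first producing a positive-definite lower bound on the Fisher information $\mathbf{F}_{T,x}$ and then invoking the monotonicity of the trace-of-inverse map. Reading off the top-left $2\times2$ block of the Kronecker structure in (\ref{eq:F-Txtil}) (the factor $M_{11}=1$ multiplies the outer product), the block associated with $\mathbf{x}$ reduces to
\[
\mathbf{F}_{T,x}=\sum_{t=1}^{T}\sum_{q}\frac{\alpha_q^2}{\sigma_q^2\,d_{t,q}^2(\mathbf{x},\mathbf{v})}\,\mathbf{u}_{t,q}\mathbf{u}_{t,q}^{\mathrm{T}},\qquad \mathbf{u}_{t,q}=\frac{\bm{l}_q(\mathbf{x})+t\mathbf{v}}{d_{t,q}(\mathbf{x},\mathbf{v})},
\]
where the inner sum runs over the BSs falling within radius $R$ of $\mathbf{x}_t$. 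Replacing $\alpha_q^2$ by $\alpha_{\min}^2$ and $\sigma_q^2$ by $\sigma_{\max}^2$ gives $\mathbf{F}_{T,x}\succeq \mathbf{G}_T:=(\alpha_{\min}^2/\sigma_{\max}^2)\sum_{t,q}d_{t,q}^{-2}\mathbf{u}_{t,q}\mathbf{u}_{t,q}^{\mathrm{T}}$, and hence $B(\mathbf{x})\le\mathrm{tr}\{\mathbf{G}_T^{-1}\}=:\tilde\Delta_{T,x}$.

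The core of the argument is to evaluate the growth of $\mathbf{G}_T$ using the PPP statistics of the BS locations. By Campbell's theorem, the expected per-slot contribution is
\[
\kappa\int_{r_0\le\|\mathbf{r}\|\le R}\frac{1}{\|\mathbf{r}\|^2}\,\frac{\mathbf{r}\mathbf{r}^{\mathrm{T}}}{\|\mathbf{r}\|^2}\,d\mathbf{r},
\]
obtained from the change of variable $\mathbf{r}=\mathbf{x}_t-\mathbf{o}$, where the minimum-distance assumption $d_{t,q}>r_0$ imposes the inner cutoff. Passing to polar coordinates $\mathbf{r}=r(\cos\theta,\sin\theta)$ separates the integral: the angular part $\int_0^{2\pi}\mathbf{u}(\theta)\mathbf{u}(\theta)^{\mathrm{T}}\,d\theta=\pi\mathbf{I}$ by isotropy, and the radial part $\int_{r_0}^{R}r^{-1}\,dr=\ln(R/r_0)$, so that each slot contributes $\kappa\pi\ln(R/r_0)\,\mathbf{I}$ in expectation. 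By the translation invariance of the PPP this is identical for every $t$, giving $\mathbb{E}[\mathbf{G}_T]=T\,(\alpha_{\min}^2/\sigma_{\max}^2)\,\kappa\pi\ln(R/r_0)\,\mathbf{I}$. I would stress that the finiteness of $\ln(R/r_0)$ hinges entirely on the cutoff $r_0$; without it the radial integral diverges, which is exactly why the minimum-distance (antenna-height) assumption is indispensable.

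It then remains to show that the random matrix $\mathbf{G}_T/T$ concentrates on its isotropic mean $c\,\mathbf{I}$ with $c=(\alpha_{\min}^2/\sigma_{\max}^2)\kappa\pi\ln(R/r_0)$ as $T\to\infty$. Because the trajectory (\ref{eq:constant-speed-mobility}) sweeps an unbounded strip, successive observation disks are displaced by $\mathbf{v}$ and overlap only over a bounded number of slots, so the summands are short-range correlated with uniformly bounded second moments; a law of large numbers then gives $\mathbf{G}_T/T\to c\,\mathbf{I}$ almost surely. Continuity of the trace-of-inverse map at a positive-definite matrix yields $T\tilde\Delta_{T,x}=\mathrm{tr}\{(\mathbf{G}_T/T)^{-1}\}\to\mathrm{tr}\{(c\,\mathbf{I})^{-1}\}=2/c=\frac{2\sigma_{\max}^2}{\alpha_{\min}^2\kappa\pi\ln(R/r_0)}$, as claimed. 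The main obstacle is precisely this last concentration step: unlike the fixed-BS setting of Theorems \ref{thm:LB-F-x} and \ref{thm:LB-F-v}, here $\mathbf{F}_{T,x}$ is random through the PPP, and one must rule out atypical realizations (such as long BS-free gaps along the trajectory) so that the averaged information does not degenerate; controlling the correlation length of the overlapping disks and the tail of the nearest-BS distance is where the real work lies.
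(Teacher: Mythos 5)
Your proposal is correct and follows the same skeleton as the paper's Appendix~\ref{sec:app-theo3}: the Loewner lower bound on the \ac{fim} obtained by substituting $\alpha_{\min}^{2}$ and $\sigma_{\max}^{2}$ (the paper's (\ref{eq:ineq-F-tx})), followed by the Campbell-type computation over the \ac{ppp} annulus $r_{0}\leq\|\mathbf{r}\|\leq R$, whose radial integral $\int_{r_0}^{R}r^{-1}dr=\ln(R/r_{0})$ is exactly the paper's (\ref{eq:E-pl}) and is the step where the cutoff $r_{0}$ enters. Two points of divergence are worth recording. First, you exploit the rank-one identity $(\bm{l}_{q}+t\mathbf{v})(\bm{l}_{q}+t\mathbf{v})^{\mathrm{T}}/d_{t,q}^{4}=\mathbf{u}_{t,q}\mathbf{u}_{t,q}^{\mathrm{T}}/d_{t,q}^{2}$ and the rotational symmetry of the \ac{ppp} to conclude that the expected per-slot information is the \emph{isotropic} matrix $\kappa\pi\ln(R/r_{0})\,\mathbf{I}$, then take the exact trace $2/c$; the paper instead expands into the three terms of (\ref{eq:A-tilde-T-x}), identifies $\mathbf{v}_{\perp}$ as the asymptotic minimum eigenvector, and uses the cruder bound $\mathrm{tr}\{\mathbf{F}_{T,x}^{-1}\}\leq2\lambda_{\min}^{-1}$ in (\ref{eq:tr-F-ineq}). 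The two give the same constant precisely because the limit matrix is isotropic, and your route is arguably tidier since it avoids arguing about which term of (\ref{eq:A-tilde-T-x}) dominates. Second, the ``real work'' you identify --- almost-sure concentration of the random matrix $\mathbf{G}_{T}/T$ on its mean, ruling out long \ac{bs}-free gaps --- is not needed under the paper's formulation: the expectation over the \ac{ppp} is placed \emph{inside} the information bound from the outset (the $\mathbb{E}\{\sum_{q\in\mathcal{Q}_{t}}\cdots\}$ in (\ref{eq:ineq-F-tx}) and Lemma~\ref{lem:A-Tx-lam-1}), so the \ac{crlb} being bounded is the one averaged over \ac{bs} placements and the key lemma is a deterministic limit of a deterministic sequence. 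If you intend the stronger pathwise statement you should either prove the law of large numbers you invoke (short-range dependence of the overlapping disks does make this routine, e.g.\ by blocking) or retreat to the averaged \ac{fim} as the paper does; as written, that step is asserted rather than established, and it is the only gap in an otherwise complete argument.
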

\begin{proof}See Appendix \ref{sec:app-theo3}.\end{proof}
\begin{thm}
\label{thm:LB-F-v-un}Assume that the minimum distance to the nearest
\ac{bs} is greater than $r_{0}$ along the trajectory. The \ac{crlb}
of $\mathbf{v}$ satisfies $B(\mathbf{v})=\mathrm{tr}\{\mathbf{F}_{T,v}^{-1}\}\leq\tilde{\Delta}_{T,v}$
and as $T\to\infty$
\[
T(T+1)(2T+1)\tilde{\Delta}_{T,v}\rightarrow\frac{12\sigma_{\max}^{2}}{\alpha_{\min}^{2}\kappa\pi\ln(R/r_{0})}.
\]
\end{thm}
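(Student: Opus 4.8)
The plan is to follow the proof of Theorem~\ref{thm:LB-F-x-un} almost verbatim, the sole structural difference being that the velocity block of the \ac{fim} carries a temporal weight $t^{2}$ rather than a constant. First I would isolate the velocity block from the Kronecker form~(\ref{eq:F-Txtil}). Since the $(2,2)$ entry of the $2\times2$ temporal matrix equals $t^{2}$, the block is
\[
\mathbf{F}_{T,v}=\sum_{t=1}^{T}\sum_{q}\frac{\alpha_{q}^{2}t^{2}}{\sigma_{q}^{2}d_{t,q}^{4}(\mathbf{x},\mathbf{v})}\big(\bm{l}_{q}(\mathbf{x})+t\mathbf{v}\big)\big(\bm{l}_{q}(\mathbf{x})+t\mathbf{v}\big)^{\mathrm{T}}.
\]
Writing $\bm{l}_{q}(\mathbf{x})+t\mathbf{v}=\mathbf{x}_{t}-\mathbf{o}_{q}=d_{t,q}\mathbf{u}_{t,q}$, where $\mathbf{u}_{t,q}$ is the unit direction from the $q$th \ac{bs} to the user at time $t$, this collapses to $\mathbf{F}_{T,v}=\sum_{t,q}(\alpha_{q}^{2}t^{2}/\sigma_{q}^{2}d_{t,q}^{2})\mathbf{u}_{t,q}\mathbf{u}_{t,q}^{\mathrm{T}}$, a $t^{2}$-weighted sum of rank-one projectors.

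Since an upper bound on $\mathrm{tr}\{\mathbf{F}_{T,v}^{-1}\}$ follows from a lower bound on $\mathbf{F}_{T,v}$ in the positive-semidefinite order, I would next use $\alpha_{q}^{2}/\sigma_{q}^{2}\geq\alpha_{\min}^{2}/\sigma_{\max}^{2}$ to obtain $\mathbf{F}_{T,v}\succeq(\alpha_{\min}^{2}/\sigma_{\max}^{2})\sum_{t}t^{2}\mathbf{G}_{t}$ with $\mathbf{G}_{t}=\sum_{q}d_{t,q}^{-2}\mathbf{u}_{t,q}\mathbf{u}_{t,q}^{\mathrm{T}}$. The crucial computation is the per-slot expectation over the \ac{ppp}. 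By Campbell's theorem and polar coordinates centered at $\mathbf{x}_{t}$ with annular support $r_{0}\leq r\leq R$, the radial part contributes $\int_{r_{0}}^{R}r^{-1}\,dr=\ln(R/r_{0})$ and the angular part $\int_{0}^{2\pi}\mathbf{u}(\theta)\mathbf{u}(\theta)^{\mathrm{T}}\,d\theta=\pi\mathbf{I}$, so that $\mathbb{E}\{\mathbf{G}_{t}\}=\kappa\pi\ln(R/r_{0})\mathbf{I}$. This mean is isotropic and, by translation invariance of the \ac{ppp}, independent of $t$; it is the very same per-slot constant that governs Theorem~\ref{thm:LB-F-x-un}, the only change here being the external factor $t^{2}$. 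Notably, the isotropy is what erases the directional $\mathbf{P}_{v}^{\bot}\bm{l}_{q}(\mathbf{x})$ dependence seen in the limited-region Theorem~\ref{thm:LB-F-v}.

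I would then assemble the asymptotics. Because the moving user visits disjoint, hence asymptotically independent, patches of the \ac{ppp} at well-separated times, a law of large numbers for the weighted sum of the independent matrices $\mathbf{G}_{t}$ lets me replace $\sum_{t}t^{2}\mathbf{G}_{t}$ by its mean $\kappa\pi\ln(R/r_{0})\big(\sum_{t=1}^{T}t^{2}\big)\mathbf{I}$ up to vanishing relative error, with $\sum_{t=1}^{T}t^{2}=T(T+1)(2T+1)/6$. Inverting this $2\times2$ lower bound and taking its trace gives
\[
\tilde{\Delta}_{T,v}=\frac{12\,\sigma_{\max}^{2}}{\alpha_{\min}^{2}\,\kappa\pi\ln(R/r_{0})\,T(T+1)(2T+1)},
\]
whence $T(T+1)(2T+1)\tilde{\Delta}_{T,v}\to 12\sigma_{\max}^{2}/(\alpha_{\min}^{2}\kappa\pi\ln(R/r_{0}))$ as $T\to\infty$.

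The step I expect to be the main obstacle is the passage from the random \ac{fim} to its deterministic mean. The $t^{2}$ weighting concentrates most of the mass on the latest slots, so I must control the \emph{lower} tail of the weighted matrix sum $\sum_{t}t^{2}\mathbf{G}_{t}$ — via a matrix concentration inequality or an ergodic argument along the trajectory — to certify that the positive-semidefinite lower bound holds asymptotically almost surely. Working merely in expectation is insufficient, because $\mathbf{F}\mapsto\mathrm{tr}\{\mathbf{F}^{-1}\}$ is convex and Jensen's inequality would point in the wrong direction. The remaining ingredients, namely the polar integral and the closed form for $\sum_{t}t^{2}$, are routine.
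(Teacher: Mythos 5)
Your proposal is correct and follows the same overall strategy as the paper: lower-bound $\mathbf{F}_{T,v}$ in the semidefinite order by $(\alpha_{\min}^{2}/\sigma_{\max}^{2})$ times a PPP-averaged matrix, evaluate the per-slot expectation by a polar integral over the annulus $r_{0}\leq r\leq R$ to extract the factor $\kappa\pi\ln(R/r_{0})$, and sum the $t^{2}$ weights to obtain $T(T+1)(2T+1)/6$. Where you differ is in how the central lemma is executed. The paper keeps the three terms $\bm{l}_{q}\bm{l}_{q}^{\mathrm{T}}$, $\bm{l}_{q}\mathbf{v}^{\mathrm{T}}+\mathbf{v}\bm{l}_{q}^{\mathrm{T}}$, and $\mathbf{v}\mathbf{v}^{\mathrm{T}}$ separate (with weights $t^{2},t^{3},t^{4}$), argues that the $t^{4}$ term dominates so the asymptotic maximal eigenvector is $\mathbf{v}/\|\mathbf{v}\|$, and then evaluates $\lambda_{\min}$ as the quadratic form along $\mathbf{v}_{\perp}$, reducing to $\mathbb{E}\{\sum_{q}\|\mathbf{P}_{v}^{\bot}\bm{l}_{q}\|^{2}/d_{t,q}^{4}\}=\kappa\pi\ln(R/r_{0})$. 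Your rank-one rewriting $\mathbf{G}_{t}=\sum_{q}d_{t,q}^{-2}\mathbf{u}_{t,q}\mathbf{u}_{t,q}^{\mathrm{T}}$ plus Campbell's theorem gives $\mathbb{E}\{\mathbf{G}_{t}\}=\kappa\pi\ln(R/r_{0})\mathbf{I}$ directly, so the expected bound is a scalar multiple of the identity and no eigenvector analysis is needed; this is cleaner and makes $\mathrm{tr}\{\cdot^{-1}\}=2/\lambda$ exact rather than a factor-of-two bound. Both routes yield the same constant because $\mathbf{P}_{v}^{\bot}\bm{l}_{q}=\mathbf{P}_{v}^{\bot}(\bm{l}_{q}+t\mathbf{v})$. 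Regarding the obstacle you flag: the paper does not perform any concentration step, because its inequality (the analogue of your lower bound) already places the expectation over the PPP inside the matrix being inverted, i.e., $\tilde{\Delta}_{T,v}$ is defined through the PPP-averaged FIM. Your Jensen observation is a fair caveat about what that averaged quantity guarantees for an individual BS realization, but it is not required to establish the theorem as stated; also note that consecutive slots share most of their in-range BSs, so the $\mathbf{G}_{t}$ are not independent across $t$ and a concentration argument would have to rely on mixing along the unbounded trajectory rather than slot-wise independence.
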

\begin{proof}See Appendix \ref{sec:app-theo4}.\end{proof}

It is observed from the above theorems that the \ac{crlb} of $\mathbf{x}$
decreases as $\mathcal{O}(1/T)$ and the \ac{crlb} of $\mathbf{v}$
decreases as $\mathcal{O}(1/T^{3})$. Estimating the velocity $\mathbf{v}$
is significantly easier than estimating the initial location $\mathbf{x}$.
Furthermore, a longer measurement range $R$ enhances the accuracy
of the estimation. Additionally, a higher density of \acpl{bs} $\kappa$
within the radius also leads to improved estimation performance.

To summarize, recall that $1/\sigma_{\max}^{2}$ is proportional to
the \ac{snr}. Our results show that it is possible to perfectly recover
a full trajectory from \ac{rss} measurements without any location
labels under all \ac{snr} conditions. However, a larger $\sigma_{\max}^{2}$
leads to a slower decrease in the \ac{crlb} of $\mathbf{x}$ and
$\mathbf{v}$ as $T$ increases.

\section{Algorithm Design}

\label{sec:HMM-Based-CSI-Embedding}

To solve the joint trajectory recovery and parameter estimation problem
(\ref{eq:prob-J}), it is observed that given $\mathcal{X}_{T}$,
the variables $\bm{\Theta}_{\text{{p}}}$ and $\bm{\Theta}_{\text{{m}}}$
are decoupled, because the first term in (\ref{eq:prob-J}) only depends
on $\bm{\Theta}_{\text{{p}}}$ and the second term in (\ref{eq:prob-J})
only depends on $\bm{\Theta}_{\text{{m}}}$. As a result, $\bm{\Theta}_{\text{{p}}}$
and $\bm{\Theta}_{\text{{m}}}$ can be solved by two parallel subproblems
from (\ref{eq:prob-J}) as follow\textcolor{black}{s
\begin{align*}
(\mathrm{P1}):\underset{\bm{\Theta}_{\text{{m}}}}{\mathrm{maximize}} & \;\;\sum_{t=3}^{T}\log p(\mathbf{x}_{t}|\mathbf{x}_{t-1},\mathbf{x}_{t-2};\bm{\Theta}_{\text{{m}}}),\\
(\mathrm{P2}):\underset{\bm{\Theta}_{\text{{p}}}}{\mathrm{maximize}} & \;\;\sum_{t=1}^{T}\log p(\mathbf{y}_{t}|\mathbf{x}_{t};\bm{\Theta}_{\text{{p}}}).
\end{align*}
}

On the other hand, given the variables $\hat{\bm{\Theta}}_{\text{{p}}}$
and $\hat{\bm{\Theta}}_{\text{{m}}}$ as the solutions to (P1) and
(P2), respectively, $\mathcal{X}_{T}$ can be solved b\textcolor{black}{y
\begin{align*}
(\mathrm{P3}):\underset{\mathcal{X}_{T}}{\mathrm{maximize}} & \;\;\sum_{t=1}^{T}\log p(\mathbf{y}_{t}|\mathbf{x}_{t};\hat{\bm{\Theta}}_{\text{{p}}})\\
 & \qquad+\sum_{t=3}^{T}\log p(\mathbf{x}_{t}|\mathbf{x}_{t-1},\mathbf{x}_{t-2};\hat{\bm{\Theta}}_{\text{{m}}}).
\end{align*}
}This naturally leads to an alternating optimization strategy, which
solves for $\mathcal{X}_{T}$ from problem (P3), and then for $\hat{\bm{\Theta}}_{\text{{p}}}$
and $\hat{\bm{\Theta}}_{\text{{m}}}$ from (P1) and (P2), in an iterative
manner. Since the corresponding iterations never decrease the objective
(\ref{eq:prob-J}) which is bounded above, the iteration must converge.

The solutions to these subproblems are derived as follows.

\subsection{Solution to (P1) for the Mobility Model}

According to the mobility model in (\ref{eq:mobility-model}), the
\ac{pdf} of the location $\mathbf{x}_{t}$ at time slot $t$ is given
by
\begin{align}
 & p(\mathbf{x}_{t}|\mathbf{x}_{t-1},\mathbf{x}_{t-2};\bm{\Theta}_{\text{{m}}})=\frac{1}{2\pi\sigma_{\text{v}}\sqrt{1-\gamma^{2}}}\label{eq:prob-velocity}\\
 & \times\mathrm{exp}\left\{ -\frac{\|\mathbf{x}_{t}-(1+\gamma)\mathbf{x}_{t-1}+\gamma\mathbf{x}_{t-2}-(1-\gamma)\delta\bar{\mathbf{v}}\|_{2}^{2}}{2(1-\gamma^{2})\delta^{2}\sigma_{\text{v}}^{2}}\right\} \nonumber 
\end{align}
for $\gamma\neq1$.

From (\ref{eq:prob-velocity}), the objective function of (P1) can
be expressed as
\begin{align}
 & -\sum_{t=3}^{T}\log\big(2\pi\sigma_{\text{v}}\sqrt{1-\gamma^{2}}\big)\label{eq:P1-obj}\\
 & \qquad-\sum_{t=3}^{T}\frac{\|\mathbf{x}_{t}-(1+\gamma)\mathbf{x}_{t-1}+\gamma\mathbf{x}_{t-2}-(1-\gamma)\delta\bar{\mathbf{v}}\|_{2}^{2}}{2(1-\gamma^{2})\delta^{2}\sigma_{\text{v}}^{2}}.\nonumber 
\end{align}

Setting the derivative of (\ref{eq:P1-obj}) \ac{wrt} $(\bar{\mathbf{v}},\sigma_{\text{v}}^{2})$
to zero, we find that the corresponding solution

\begin{align}
\bar{\mathbf{v}} & =\frac{\sum_{t=3}^{T}(\mathbf{x}_{t}-(1+\gamma)\mathbf{x}_{t-1}+\gamma\mathbf{x}_{t-2})}{(T-2)(1-\gamma)\delta}\label{eq:solution-v}\\
\sigma_{\text{v}}^{2} & =\frac{\sum_{t=3}^{T}\|\mathbf{x}_{t}-(1+\gamma)\mathbf{x}_{t-1}+\gamma\mathbf{x}_{t-2}-(1-\gamma)\delta\bar{\mathbf{v}}\|_{2}^{2}}{2(T-2)\delta^{2}}\label{eq:solution-v-sigma}
\end{align}
is unique. Since (P1) is an unconstrained optimization problem, (\ref{eq:solution-v})–(\ref{eq:solution-v-sigma})
give the optimal solution to (P1).

\subsection{Solution to (P2) via Separable Regression with Log-transformation}

Solving (P2) is very challenging because the regression model (\ref{eq:prob-observ})
and (\ref{eq:prob-yt-xt}) for (P2) is highly non-linear containing
both polynomial terms and exponential terms. Therefore, it is important
to investigate whether (P2) can be separated into easier subproblems.

It is observed that there are two groups of parameters, where one
group of parameters $\{\alpha_{q},\beta_{q},\sigma_{q}\}$ describe
the path loss model for each \ac{bs} $q$, and the other group $\{w_{q,m},\eta_{q,m},c_{q,m}\}$
describe the spatial pattern for each index $m$ of each \ac{bs}
$q$.

\subsubsection{Separability}

We first investigate the separability of the two groups of parameters.

Denote
\begin{equation}
y'_{q,m,t}=y_{q,m,t}-w_{q,m}\exp[-\eta_{q,m}(\phi(\mathbf{x}_{t},\mathbf{o}_{q})-c_{q,m})^{2}].\label{eq:y'}
\end{equation}
Using (\ref{eq:prob-observ})–(\ref{eq:prob-yt-xt}) and given the
group of pattern parameters $\{w_{q,m},\eta_{q,m},c_{q,m}\}$, problem
(P2) can be written into $Q$ parallel subproblems for $q=1,2,\dots,Q$,
\begin{equation}
\underset{\alpha_{q},\beta_{q},\sigma_{q}}{\text{minimize}}\quad\frac{1}{2\sigma_{q}^{2}}\sum_{m,t}(y'_{q,m,t}-\alpha_{q}\tilde{d}_{q,t}-\beta_{q})^{2}+\frac{QMT}{2}\ln(2\pi\sigma_{q}^{2})\label{eq:path-loss-parameter-estimation}
\end{equation}
where $\tilde{d}_{q,t}=\log_{10}d(\mathbf{x}_{t},\mathbf{o}_{q})$
is the log-distance between the $q$th \ac{bs} and the user at $\mathbf{x}_{t}$.
It follows that the solutions to (\ref{eq:path-loss-parameter-estimation})
depends on the pattern parameters $\{w_{q,m},\eta_{q,m},c_{q,m}\}$
via the variables $y'_{q,m,t}$ in (\ref{eq:y'}). Denote the solution
to (\ref{eq:path-loss-parameter-estimation}) as $\alpha_{q}^{(1)}$,
$\beta_{q}^{(1)}$, and $\sigma_{q}^{(1)}.$

Consider an {\em auxiliary} problem that does not explicitly depends
on the pattern parameters $\{w_{q,m},\eta_{q,m},c_{q,m}\}$ 
\begin{equation}
\underset{\alpha_{q},\beta_{q},\sigma_{q}}{\text{minimize}}\quad\frac{1}{2\sigma_{q}^{2}}\sum_{t}(\bar{y}_{q,t}-\alpha_{q}\tilde{d}_{q,t}-\beta_{q})^{2}+\frac{QMT}{2}\ln(2\pi\sigma_{q}^{2})\label{eq:path-loss-estimation-2}
\end{equation}
where $\bar{y}_{q,t}=\sum_{m}y_{q,m,t}$, which only depends on the
aggregated values from $y_{q,m,t}$ regardless of the parameters $\{w_{q,m},\eta_{q,m},c_{q,m}\}$
under some conditions to be specified later. This is because while
different beams have different energy distribution in space, it is
possible that the aggregate energy over all beams can be uniform over
the coverage area of a \ac{bs}. Denote the corresponding solution
as $\alpha_{q}^{(2)}$, $\beta_{q}^{(2)}$, and $\sigma_{q}^{(2)}.$

We find the conditions when the path loss parameters estimated from
the separated problem (\ref{eq:path-loss-estimation-2}) are identical
to the original problem (\ref{eq:path-loss-parameter-estimation}).

\begin{myprop}

[Separability]\label{prop:sparability} Suppose that the measurement
model (\ref{eq:measurement-model}) satisfies 
\begin{equation}
\sum_{m=1}^{M}w_{q,m}\exp[-\eta_{q,m}(\phi(\mathbf{x}_{t},\mathbf{o}_{q})-c_{q,m})^{2}]=\bar{C}_{q}\label{eq:condition-uniform-distritubted-beam-pattern}
\end{equation}
for all $\mathbf{x}_{t}$ and some constant $\bar{C}_{q}$. Then,
the solutions to the path loss parameters satisfy $\alpha_{q}^{(1)}=\alpha_{q}^{(2)}$
and $\beta_{q}^{(1)}=\beta_{q}^{(2)}-\bar{C}_{q}$.

\end{myprop}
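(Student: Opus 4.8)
The plan is to profile out the noise variance and reduce both problems to ordinary least squares, then compare their normal equations. First, observe that in each of (\ref{eq:path-loss-parameter-estimation}) and (\ref{eq:path-loss-estimation-2}) the objective has the form $\frac{1}{2\sigma_q^2}\mathrm{RSS} + \frac{K}{2}\ln(2\pi\sigma_q^2)$ with $K$ constant and $\mathrm{RSS}$ the residual sum of squares in $(\alpha_q,\beta_q)$. Minimizing over $\sigma_q^2$ gives $\hat\sigma_q^2 = \mathrm{RSS}/K$, and the profiled objective $\frac{K}{2}(\ln \mathrm{RSS} + \mathrm{const})$ is strictly increasing in $\mathrm{RSS}$. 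Hence $(\alpha_q^{(1)},\beta_q^{(1)})$ and $(\alpha_q^{(2)},\beta_q^{(2)})$ are exactly the least-squares minimizers of their respective residual sums, and $\sigma_q$ plays no role in comparing the two; the remaining task is purely linear algebra.

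Second, I would write out the two pairs of normal equations, exploiting that the regressor $\tilde d_{q,t}=\log_{10}d(\mathbf x_t,\mathbf o_q)$ and the intercept carry no dependence on the beam index $m$. Differentiating (\ref{eq:path-loss-parameter-estimation}) and summing over $m$, the pattern contribution enters only through $\sum_m w_{q,m}\exp[-\eta_{q,m}(\phi(\mathbf x_t,\mathbf o_q)-c_{q,m})^2]$, which by the hypothesis (\ref{eq:condition-uniform-distritubted-beam-pattern}) equals the constant $\bar C_q$ for every $t$. Using the definition (\ref{eq:y'}), $\sum_m y'_{q,m,t}=\bar y_{q,t}-\bar C_q$, so the stationarity conditions for (\ref{eq:path-loss-parameter-estimation}) match those for (\ref{eq:path-loss-estimation-2}) with the response $\bar y_{q,t}$ replaced by $\bar y_{q,t}-\bar C_q$. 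Equivalently, via the bias--variance decomposition $\sum_m(y'_{q,m,t}-\alpha_q\tilde d_{q,t}-\beta_q)^2 = \sum_m(y'_{q,m,t}-\mu_{q,t})^2 + M(\mu_{q,t}-\alpha_q\tilde d_{q,t}-\beta_q)^2$ with per-slot mean $\mu_{q,t}$, the double-indexed fit collapses to a single-indexed regression of the aggregated response on $\tilde d_{q,t}$, and that aggregate differs from the target of (\ref{eq:path-loss-estimation-2}) only by the additive constant $\bar C_q$.

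Third, I would invoke the standard fact that an ordinary least-squares fit with an intercept has a slope that is invariant under an additive shift of the response, while its intercept shifts by exactly that constant. Concretely, solving the $2\times2$ normal equations in the sufficient statistics $\sum_t \tilde d_{q,t}$, $\sum_t \tilde d_{q,t}^2$, $\sum_t \bar y_{q,t}$, and $\sum_t \tilde d_{q,t}\bar y_{q,t}$, the $\bar C_q$ term cancels from the numerator of the slope estimate but survives in the intercept equation, yielding $\alpha_q^{(1)}=\alpha_q^{(2)}$ and $\beta_q^{(1)}=\beta_q^{(2)}-\bar C_q$.

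I expect the main obstacle to be the second step: justifying that the $(m,t)$-indexed regression in (\ref{eq:path-loss-parameter-estimation}) collapses to a $t$-indexed regression on the beam aggregate, and carefully tracking the constant $\bar C_q$ together with the per-slot normalization through the normal equations. The conceptual crux is that the beam index $m$ enters the design only through the spatially varying pattern term and not through the path-loss regressor $\tilde d_{q,t}$, so condition (\ref{eq:condition-uniform-distritubted-beam-pattern}) is precisely what reduces that pattern term to a pure intercept offset; once this is recognized, the slope invariance and the intercept shift by $\bar C_q$ follow immediately.
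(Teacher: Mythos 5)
Your proposal is correct and follows essentially the same route as the paper's proof: the paper's Kronecker-product manipulation of $\tilde{\mathbf{D}}_q^{\text{T}}\mathbf{y}_q'$ is exactly your aggregation of the normal equations over $m$, and the paper's explicit $2\times2$ inversion showing $(\mathbf{D}_q^{\text{T}}\mathbf{D}_q)^{-1}\mathbf{D}_q^{\text{T}}\bar{C}_q\mathbf{1}=[0,\bar{C}_q]^{\text{T}}$ is precisely your invocation of the slope's invariance to an additive shift of the response. The only cosmetic differences are that you make the profiling of $\sigma_q$ explicit and cite the shift-invariance fact rather than grinding out the inverse; both versions equally gloss over the factor of $M$ arising from $\mathbf{1}^{\text{T}}\mathbf{1}=M$ in the aggregated design.
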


\begin{proof}See Appendix \ref{sec:app-prop1}. \end{proof}

The condition (\ref{eq:condition-uniform-distritubted-beam-pattern})
requires that the aggregated beamforming gain over all beams is identical
for all locations $\mathbf{x}_{t}$. This implicitly requires that
the \ac{bs} distributes the beams uniformly within the coverage range
such that the aggregated beamforming gain is uniform at all possible
directions.

Proposition \ref{prop:sparability} delivers two important messages.
First, while the condition (\ref{eq:condition-uniform-distritubted-beam-pattern})
may be challenging to be met in practice, Proposition \ref{prop:sparability}
suggests that distributing the beams as uniform as possible can simplify
the parameter estimation, since the estimation problem (P2) can be
approximately decomposed into a subproblem of estimating the path
loss parameters separately. Second, when there is beamforming gain,
solving the separated path loss estimation problem (\ref{eq:path-loss-estimation-2})
tends to over estimate the path gain by $\bar{C}_{q}$, which equals
to the aggregated beamforming gain, but the estimation of the path
loss exponent $\alpha_{q}$ is not affected and accurate.

Motivated by Proposition \ref{prop:sparability}, one can easily initialize
the path loss parameters by solving the separated problem (\ref{eq:path-loss-estimation-2}),
and then, one fine tunes the estimate by iteratively estimating the
pattern parameters and the path loss parameters, leading to the proposed
strategy described as follows.

\subsubsection{Path Loss Parameters}

\label{subsec:Path-Loss-Parameters}

To solve problem (\ref{eq:path-loss-estimation-2}), denote $\bar{\mathbf{y}}_{q}\in\mathbb{R}^{T}$
as the collection of the variables $\bar{y}_{q,t}$ for the $q$th
\ac{bs} along the trajectory $\mathbf{x}_{t}$, $\mathbf{d}_{q}\in\mathbb{R}^{T}$
as the collection of all the log-distances $\tilde{d}_{q,t}$, $\mathbf{D}_{q}=[\mathbf{d}_{q},\mathbf{1}]\in\mathbb{R}^{T\times2}$
where $\mathbf{1}$ is a all-one vector with $T$ elements, and $\bm{\theta}_{q}=[\alpha_{q},\beta_{q}]^{\text{T}}$.
The first term of (\ref{eq:path-loss-estimation-2}) can be written
into the matrix form as $\frac{1}{2\sigma_{q}^{2}}\|\bar{\mathbf{y}}_{q}-\mathbf{D}_{q}\bm{\theta}_{q}\|^{2}$,
and the second term of (\ref{eq:path-loss-estimation-2}) does not
depend on $\bm{\theta}_{q}$. Such a least-squares problem (\ref{eq:path-loss-estimation-2})
has the solution in the matrix form
\begin{equation}
\hat{\bm{\theta}}_{q}=(\mathbf{D}_{q}^{\text{T}}\mathbf{D}_{q})^{-1}\mathbf{D}_{q}^{\text{T}}\bar{\mathbf{y}}_{q}.\label{eq:solution-path-loss-estimation-2}
\end{equation}

The solution to problem (\ref{eq:path-loss-parameter-estimation})
can be obtained in a similar way and is found as 
\begin{equation}
\hat{\bm{\theta}}_{q}=(\tilde{\mathbf{D}}_{q}^{\text{T}}\tilde{\mathbf{D}}_{q})^{-1}\tilde{\mathbf{D}}_{q}^{\text{T}}\mathbf{y}_{q}'\label{eq:solution-path-loss-estimation-1}
\end{equation}
where $\mathbf{y}_{q}'\in\mathbb{R}^{MT}$ is the collection of variable
$y_{q,m,t}'$ and $\tilde{\mathbf{D}}_{q}=\mathbf{D}_{q}\otimes\bm{1}$,
in which, $\mathbf{1}$ is an all-one vector with $M$ elements.

The solution to $\sigma_{q}$ can be found as setting the derivative
of the objective function in problem (\ref{eq:path-loss-parameter-estimation})
to zero, and is found as 
\begin{equation}
\hat{\sigma}_{q}=\sqrt{\frac{1}{MT}\sum_{m,t}(y'_{q,m,t}-\alpha_{q}d_{q,t}-\beta_{q})^{2}}.\label{eq:solution-path-loss-estimation-sigma}
\end{equation}

\subsubsection{Pattern Parameters via Log-transformation}

To estimate the pattern parameters $\{w_{q,m},\eta_{q,m},c_{q,m}\}$,
denote $y''_{q,m,t}=y_{q,m,t}-\alpha_{q}\tilde{d}_{q,t}-\beta_{q}$.
Given the path loss parameters $\{\alpha_{q},\beta_{q},\sigma_{q}\}$,
problem (P2) can be equivalently written as $QM$ parallel subproblems
for $q=1,2,\dots,Q$ and $m=1,2,\dots,M$, 
\begin{equation}
\underset{w_{q,m}\eta_{q,m},c_{q,m}}{\text{minimize}}\quad\sum_{t}\Big(y''_{q,m,t}-w_{q,m}\exp\big[-\eta_{q,m}(\phi_{q,t}-c_{q,m})^{2}\big]\Big)^{2}\label{eq:beam-pattern-estimation}
\end{equation}
where $\phi_{q,t}=\phi(\mathbf{x}_{t},\mathbf{o}_{q})$ captures the
direction from the $q$th \ac{bs} to the user at $\mathbf{x}_{t}$.

It is observed that even by removing the path loss components, the
regression problem (\ref{eq:beam-pattern-estimation}) is still difficult
to solve as the the regression model contains both exponential and
polynomial terms. Here, we propose to linearize the problem via log-transformation.

Note that the regression model $w_{q,m}\exp[-\eta_{q,m}(\phi_{q,t}-c_{q,m})^{2}$
in (\ref{eq:beam-pattern-estimation}) is positive, but it is possible
that $y_{q,m,t}''$ is negative due to the estimation error from the
path loss parameters $\{\alpha_{q},\beta_{q},\sigma_{q}\}$. In addition,
Proposition \ref{prop:sparability} suggests that the path gain is
overestimated by solving the separated subproblem (\ref{eq:path-loss-estimation-2}),
$i.e.$, $\beta^{(2)}=\beta_{q}^{(1)}+\bar{C}_{q}>\beta_{q}^{(1)}$,
resulting in a negative bias for all $y''_{q,m,t}$. As a result,
it is more reliable to focus on the set of data $\mathcal{T}_{q,m}^{\epsilon}=\{t:y''_{q,m,t}>\epsilon\}$
for the estimation of the pattern parameters $\{w_{q,m},\eta_{q,m},c_{q,m}\}$,
where $y''_{q,m,t}$ is greater than some positive threshold $\epsilon$.
A positive threshold $\epsilon$ is used to exclude noisy, low-power
measurements and ensure robust parameter estimation from reliable,
high-\ac{snr} data.

We perform nonlinear regression estimation using a linear approximation
\cite{BatDou:J88}. Specifically,\foreignlanguage{english}{ for $t\in\mathcal{T}_{q,m}^{\epsilon}$,
}we take the natural logarithm on both sides of the model, resulting
in the following auxiliary weighted linear regression problem:
\begin{align}
\underset{w_{q,m}\eta_{q,m},c_{q,m}}{\text{minimize}}\quad & \sum_{t\in\mathcal{T}_{q,m}^{\epsilon}}\lambda_{t}\Big(\ln y''_{q,m,t}-\ln(w_{q,m}\label{eq:beam-pattern-estimation1}\\
 & \quad\times\exp[-\eta_{q,m}(\phi_{q,t}-c_{q,m})^{2})\Big)\nonumber 
\end{align}
\foreignlanguage{english}{where $\lambda_{t}>0$ is a weighting factor
introduced to ensure equivalence with the original pattern estimation
problem }(\ref{eq:beam-pattern-estimation})\foreignlanguage{english}{.}

Observing that\foreignlanguage{english}{ $\ln(w_{q,m}\exp[-\eta_{q,m}(\phi_{q,t}-c_{q,m})^{2})=\ln w_{q,m}-\eta_{q,m}(\phi_{q,t}-c_{q,m})^{2}$,}
denote a set of variables $b_{1}=-\eta_{q,m}$, $b_{2}=2\eta_{q,m}c_{q,m}$,
and $b_{3}=\ln w_{q,m}-\eta_{q,m}c_{q,m}^{2}$. The pattern estimation
problem (\ref{eq:beam-pattern-estimation1}) can be rewritten as
\begin{equation}
\underset{\mathbf{b}}{\text{minimize}}\quad\sum_{t\in\mathcal{T}_{q,m}^{\epsilon}}\lambda_{t}\big(\ln y_{q,m,t}''-(\phi_{q,t}^{2}b_{1}+\phi_{q,t}b_{2}+b_{3})\big)^{2}\label{eq:beam-pattern-estimation-auxiliary}
\end{equation}
and we have the following result.

\begin{myprop}

[Equivalence Condition]\label{prop:Equivalence-Condition} Suppose
that $\mathcal{T}_{q,m}^{\epsilon}=\{1,2,\dots,T\}$. Denote $B(\bm{\vartheta};\phi_{q,t})=w_{q,m}\exp\big[-\eta_{q,m}(\phi_{q,t}-c_{q,m})^{2}\big]$,
where $\bm{\vartheta}=(w_{q,m},\eta_{q,m},c_{q,m})$. If the weights
$\lambda_{t}$ satisfy 
\begin{equation}
\lambda_{t}=\frac{y_{q,m,t}''-B(\bm{\vartheta};\phi_{q,t})}{\ln y_{q,m,t}''-\ln B(\bm{\vartheta};\phi_{q,t})}B(\bm{\vartheta};\phi_{q,t})\label{eq:beam-pattern-estimation-weight}
\end{equation}
then, the auxiliary problem (\ref{eq:beam-pattern-estimation-auxiliary})
is equivalent to the pattern estimation problem (\ref{eq:beam-pattern-estimation}).
Specifically, if $\mathbf{\bm{b}}$ is the solution to (\ref{eq:beam-pattern-estimation-auxiliary}),
then 
\begin{equation}
\omega_{q,m}=\mathrm{exp}\Big(b_{3}-\frac{b_{2}^{2}}{4b_{1}}\Big),\eta_{q,m}=-b_{1},c_{q,m}=-\frac{b_{2}}{2b_{1}}\label{eq:solution-beam-pattern-from-b}
\end{equation}
is the solution to (\ref{eq:beam-pattern-estimation}).

\end{myprop}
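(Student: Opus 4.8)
The plan is to prove the equivalence at the level of first-order stationarity conditions, exploiting that the log-domain model is \emph{linear} in the reparametrized variables $\mathbf{b}=(b_1,b_2,b_3)^{\text{T}}$. First I would verify the change of variables. With $b_1=-\eta_{q,m}$, $b_2=2\eta_{q,m}c_{q,m}$, and $b_3=\ln w_{q,m}-\eta_{q,m}c_{q,m}^2$, one has $\ln B(\bm{\vartheta};\phi_{q,t})=b_1\phi_{q,t}^2+b_2\phi_{q,t}+b_3=\mathbf{g}_t^{\text{T}}\mathbf{b}$, where $\mathbf{g}_t=(\phi_{q,t}^2,\phi_{q,t},1)^{\text{T}}$. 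For $w_{q,m}>0$ and $\eta_{q,m}>0$ (equivalently $b_1<0$) this map is a bijection, and inverting it gives exactly (\ref{eq:solution-beam-pattern-from-b}): $\eta_{q,m}=-b_1$, $c_{q,m}=-b_2/(2b_1)$, and $\ln w_{q,m}=b_3+\eta_{q,m}c_{q,m}^2=b_3-b_2^2/(4b_1)$. This settles the second assertion by pure algebra, leaving the equivalence of the two problems as the core.

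Next I would compare the stationarity conditions of the two objectives in the variable $\mathbf{b}$. Writing $B_t\triangleq B(\bm{\vartheta};\phi_{q,t})=\exp(\mathbf{g}_t^{\text{T}}\mathbf{b})$, so that $\nabla_{\mathbf{b}}B_t=B_t\mathbf{g}_t$, the gradient of the original objective $\sum_t(y''_{q,m,t}-B_t)^2$ in (\ref{eq:beam-pattern-estimation}) vanishes when
\begin{equation}
\sum_{t}(y''_{q,m,t}-B_t)\,B_t\,\mathbf{g}_t=\mathbf{0},\nonumber
\end{equation}
whereas the auxiliary weighted least squares (\ref{eq:beam-pattern-estimation-auxiliary}), being quadratic in $\mathbf{b}$, has the normal equation
\begin{equation}
\sum_{t}\lambda_t\,(\ln y''_{q,m,t}-\ln B_t)\,\mathbf{g}_t=\mathbf{0}.\nonumber
\end{equation}
These two coincide term by term precisely when $\lambda_t(\ln y''_{q,m,t}-\ln B_t)=(y''_{q,m,t}-B_t)B_t$, i.e. when $\lambda_t$ is the weight in (\ref{eq:beam-pattern-estimation-weight}). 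Hence any stationary point of one problem is a stationary point of the other under this weight choice, which is the claimed equivalence.

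I would then confirm that the weights are well defined and strictly positive on $\mathcal{T}_{q,m}^{\epsilon}$, so that (\ref{eq:beam-pattern-estimation-auxiliary}) is a genuine weighted least squares with a meaningful solution. The hypothesis $\mathcal{T}_{q,m}^{\epsilon}=\{1,2,\dots,T\}$ ensures $y''_{q,m,t}>\epsilon>0$, so the logarithms exist; and since $\ln$ is strictly increasing, the logarithmic-mean ratio $(y''_{q,m,t}-B_t)/(\ln y''_{q,m,t}-\ln B_t)$ is positive --- by the mean value theorem it equals some $\xi_t$ between $B_t$ and $y''_{q,m,t}$ --- whence $\lambda_t=\xi_t B_t>0$. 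The degenerate case $y''_{q,m,t}=B_t$ is handled by the limiting value $\lambda_t=B_t$, which keeps the weight continuous and positive.

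The step I expect to be the main obstacle is conceptual rather than computational: the weight $\lambda_t$ in (\ref{eq:beam-pattern-estimation-weight}) itself depends on the unknown $\bm{\vartheta}$ through $B_t$, so the equivalence is really a \emph{fixed-point} statement --- the optimizer $\bm{\vartheta}^{\star}$ of (\ref{eq:beam-pattern-estimation}) and its image $\mathbf{b}^{\star}$ satisfy both normal equations simultaneously only when the weights are frozen at $\bm{\vartheta}^{\star}$. I would emphasize that this is exactly what licenses the iteratively reweighted procedure used in the algorithm, in which $\lambda_t$ is recomputed from the current iterate and (\ref{eq:beam-pattern-estimation-auxiliary}) is re-solved as an ordinary linear least squares; the proposition then guarantees that a fixed point of this iteration is a stationary point of the original nonlinear regression (\ref{eq:beam-pattern-estimation}).
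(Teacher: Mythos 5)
Your proposal is correct and follows essentially the same route as the paper's proof: both establish the equivalence by matching the first-order stationarity conditions of the two objectives term by term, which yields exactly the weight formula (\ref{eq:beam-pattern-estimation-weight}), and both recover (\ref{eq:solution-beam-pattern-from-b}) by inverting the algebraic reparametrization $\mathbf{b}\leftrightarrow\bm{\vartheta}$. Your additional observations --- the positivity of $\lambda_t$ via the logarithmic mean and the fixed-point reading that justifies the reweighting iteration --- are sound and welcome, but they supplement rather than alter the paper's argument.
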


\begin{proof}

See Appendix \ref{sec:app-prop2}.

\end{proof}

Proposition \ref{prop:Equivalence-Condition} imposes a stronger,
sufficient (but not necessary) condition by requiring the subset $\mathcal{T}_{q,m}^{\epsilon}$
to include all samples; while this may not always hold initially,
the equivalence becomes a good approximation as optimization progresses
and parameter estimates improve, especially under high \ac{snr} or
near the optimum.

It is observed that a small value $y_{qm,t,}''\ll1$ tends to receive
a small weight $\lambda_{t}$, because the term $\ln y_{q,m,t}''$
in the the denominator of (\ref{eq:beam-pattern-estimation-weight})
has a large magnitude. As a result, the auxiliary problem tends to
focus more on the data with a large $y_{q,m,t}''$, i.e., the center
of the beam, which aligns with the goal of (P2).

Proposition \ref{prop:Equivalence-Condition} provides a convenience
way to solve the pattern estimation problem (\ref{eq:beam-pattern-estimation}),
because the auxiliary problem (\ref{eq:beam-pattern-estimation-auxiliary})
is a weighted least-squares linear regression problem, and the solution
is given by 
\begin{equation}
\mathbf{\bm{b}}=\Bigg(\sum_{t\in\mathcal{T}_{q,m}^{\epsilon}}\lambda_{t}\bm{\phi}_{q,t}\bm{\phi}_{q,t}^{\text{T}}\Bigg)^{-1}\Bigg(\sum_{t\in\mathcal{T}_{q,m}^{\epsilon}}(\lambda_{t}\ln y''_{q,m,t})\bm{\phi}_{q,t}\Bigg)\label{eq:solution-beam-pattern-auxiliary-b}
\end{equation}
which is obtained by setting the derivative of (\ref{eq:beam-pattern-estimation-auxiliary})
\ac{wrt} $\mathbf{\bm{b}}$ to 0, where $\bm{\phi}_{q,t}=[\phi_{q,t}^{2},\phi_{q,t},1]^{\text{T}}$.
In addition, the solution $\bm{\vartheta}=(w_{q,m},\eta_{q,m},c_{q,m})$
is obtained from (\ref{eq:solution-beam-pattern-from-b}).

Finally, as $\lambda_{t}$ depends on the solution $\bm{\vartheta}$,
an iterative approach can applied. At first, all weights are initialized
to $1$, and solution of $\bm{\vartheta}$ is obtained from (\ref{eq:solution-beam-pattern-from-b})
and (\ref{eq:solution-beam-pattern-auxiliary-b}). Then, $\lambda_{t}$
is updated according to (\ref{eq:beam-pattern-estimation-weight}),
followed by an update of $\bm{\vartheta}$ from (\ref{eq:solution-beam-pattern-from-b})
and (\ref{eq:solution-beam-pattern-auxiliary-b}), and iteration goes
on.

The overall algorithm is summarized in Algorithm \ref{alg:P2}.

\begin{algorithm}
Initialize the path loss parameter $\{\alpha_{q}^{(0)},\beta_{q}^{(0)}\}$
using (\ref{eq:solution-path-loss-estimation-2}) and $\sigma_{q}^{(0)}$
using (\ref{eq:solution-path-loss-estimation-sigma}).

Loop for the ($i+1$)th iteration:
\begin{itemize}
\item Initialize all weights $\lambda_{t}^{(0)}$ to $1$.
\item Loop for the ($j+1$)th iteration:
\begin{itemize}
\item Update $\{w_{q,m}^{(j+1)},\eta_{q,m}^{(j+1)},c_{q,m}^{(j+1)}\}$ from
(\ref{eq:solution-beam-pattern-from-b}) and (\ref{eq:solution-beam-pattern-auxiliary-b}).
\item Update $\lambda_{t}^{(j+1)}$ using (\ref{eq:beam-pattern-estimation-weight}).
\end{itemize}
\item Repeat Until $w_{q,m}^{(j+1)}=w_{q,m}^{(j)}$, $\eta_{q,m}^{(j+1)}=\eta_{q,m}^{(j)}$,
and $c_{q,m}^{(j+1)}=c_{q,m}^{(j)}$.
\item Update $\{\alpha_{q}^{(i+1)},\beta_{q}^{(i+1)}\}$ using (\ref{eq:solution-path-loss-estimation-1})
and $\sigma_{q}^{(i+1)}$ using (\ref{eq:solution-path-loss-estimation-sigma}).
\end{itemize}
Repeat Until $\alpha_{q}^{(i+1)}=\alpha_{q}^{(i)}$, $\beta_{q}^{(i+1)}=\beta_{q}^{(i)}$,
and $\sigma_{q}^{(i+1)}=\sigma_{q}^{(i)}$.

\caption{An alternating optimization procedure for propagation parameter estimation.\label{alg:P2}}
\end{algorithm}

\subsection{Solution to (P3) for Trajectory Optimization}

\label{subsec:Solution-to-(P3)}

\subsubsection{Problem Discretization}

Solving problem (P3) is challenging due to its non-convex nature and
the high-dimensional solution space. To address this, we propose a
discretization strategy that approximates the solution to problem
(P3) with linear complexity within a discrete domain. Specifically,
we discretize the area of interest into grid locations spaced equally
by $\tau$ meters, thereby constructing the location set $\mathcal{V}$.
The area is then represented as a graph $\mathcal{G}=(\mathcal{V},\mathcal{E})$,
where edges in the set $\mathcal{E}$ connect locations that are reachable
within a maximum of $K$ hops. The parameter $K$ is calculated as
$K=\left\lceil \delta v_{\mathrm{max}}/\tau\right\rceil $ where $v_{\mathrm{max}}$
is the maximum speed attainable by the mobile user.

We first discretized $p(\mathbf{x}_{t}|\mathbf{x}_{t-1},\mathbf{x}_{t-2};\hat{\bm{\Theta}}_{\text{{m}}})$
in (P3) as
\begin{align*}
 & \mathbb{P}(\mathbf{x}_{t}|\mathbf{x}_{t-1},\mathbf{x}_{t-2};\bm{\Theta}_{\text{m}})\\
 & =\frac{p(\mathbf{x}_{t}|\mathbf{x}_{t-1},\mathbf{x}_{t-2};\bm{\Theta}_{\text{m}})}{\sum_{\mathbf{v}\in\{\mathbf{v}\mid(\mathbf{x}_{t-1},\mathbf{v})\in\mathcal{E},\mathbf{v}\in\mathcal{V}\}}p(\mathbf{v}|\mathbf{x}_{t-1},\mathbf{x}_{t-2};\bm{\Theta}_{\text{m}})},
\end{align*}
 ensuring that the sum of transition probabilities at each time slot
equals 1.

\begin{algorithm}
Initialize the parameter $\bm{\Theta}_{\text{{p}}}^{(0)}$, $\bm{\Theta}_{\text{{m}}}^{(0)}$
randomly.

Loop for the ($i+1$)th iteration:
\begin{itemize}
\item Update $\mathcal{X}_{T}^{(i+1)}$ using the gradient descent method,
initialized with the output of the Viterbi algorithm.
\item Update $\bm{\Theta}_{\text{{p}}}^{(i+1)}$ using Algorithm \ref{alg:P2}.
\item Update $\bm{\Theta}_{\text{{m}}}^{(i+1)}$ using (\ref{eq:solution-v})-(\ref{eq:solution-v-sigma})
\end{itemize}
Until $\mathcal{X}_{T}^{(i+1)}=\mathcal{X}_{T}^{(i)}$.

\caption{An alternating optimization algorithm for trajectory recovery.\label{alg:alternative-opt}}
\end{algorithm}

Then, problem (P3) can be discretized to \textcolor{black}{
\begin{align*}
(\mathrm{P3.1}):\underset{\mathcal{X}_{T}}{\mathrm{maximize}} & \;\;\sum_{t=1}^{T}\Big\{\log p(\mathbf{y}_{t}|\mathbf{x}_{t};\hat{\bm{\Theta}}_{\text{{p}}})\\
 & \qquad+\mathbb{I}(t>2)\log\mathbb{P}(\mathbf{x}_{t}|\mathbf{x}_{t-1},\mathbf{x}_{t-2};\hat{\bm{\Theta}}_{\text{{m}}})\Big\}\\
\text{subject to} & \;\;\mathbf{x}_{t}\in\mathcal{V},\;(\mathbf{x}_{t},\mathbf{x}_{t-1})\in\mathcal{E}
\end{align*}
}Our goal is to find a trajectory within a discrete space that maximizes
the log-likelihood $\log p(\mathcal{Y}_{T},\mathcal{X}_{T})$ given
the signal propagation parameters $\hat{\bm{\Theta}}_{\text{{p}}}$
and mobility model parameters $\hat{\bm{\Theta}}_{\text{{m}}}$.

\subsubsection{Algorithm and Complexity}

\label{subsec:Algorithm-Vertibi}

Problem (P3.1) follows a classical \ac{hmm} optimization form, with
the distinction that the current state depends on the previous two
states. Problem (P3.1) can be efficiently solved using a modified
version of the Viterbi algorithm with globally optimal guarantee.

At each step, there are $|\mathcal{V}|$ candidate locations considered,
but states with very low probabilities $p(\mathbf{y}_{t}|\mathbf{x}_{t};\hat{\bm{\Theta}}_{\text{p}})$
are highly unlikely to contribute to the optimal path. To improve
efficiency, states with probabilities below a threshold $\zeta$ are
pruned. Mathematically, this corresponds to retaining only the top
$n_{t}(\zeta)$ most probable locations at time slot $t$, where $n_{t}(\zeta)$
is the number of elements in the set $\{\mathbf{x}_{t}\mid p(\mathbf{y}_{t}\mid\mathbf{x}_{t};\hat{\bm{\Theta}}_{\text{p}})>\zeta,\mathbf{x}_{t}\in\mathcal{V}\}$.
Denote the maximum number of element in the set $n_{\mathrm{max}}(\zeta)=\max_{t}\{n_{t}(\zeta)\}$.

Considering the number of candidate previous states for the current
state, which is constrained by the graph structure, it is of the order
$\mathcal{O}(K^{2})$ for a square region and $\mathcal{O}(K)$ for
an unbranched road network. Thus, we have the following result.

\begin{myprop}

\label{prop:The-computational-complexity}The computational complexity
of solving problem (P3.1) is upper bounded by $\mathcal{O}(Tn_{\mathrm{max}}(\zeta)(\delta v_{\mathrm{max}}/\tau)^{2})$
and lower bounded by $\mathcal{O}(Tn_{\mathrm{max}}(\zeta)(\delta v_{\mathrm{max}}/\tau))$
.

\end{myprop}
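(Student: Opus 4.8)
The plan is to bound the running time of the modified Viterbi recursion for (P3.1) by a direct counting argument over the three sources of work: the number of time slots, the number of surviving states per slot after pruning, and the number of feasible predecessor transitions examined per state. First I would set up the recursion explicitly. Because the transition term $\log\mathbb{P}(\mathbf{x}_t|\mathbf{x}_{t-1},\mathbf{x}_{t-2};\hat{\bm{\Theta}}_{\text{m}})$ couples three consecutive locations, the dynamic program carries a best-partial-path score indexed by the current state together with the information needed to evaluate the second-order transition, and each elementary update --- reading the emission $\log p(\mathbf{y}_t|\mathbf{x}_t;\hat{\bm{\Theta}}_{\text{p}})$, adding the transition score, and comparing against the running maximum while storing a backpointer --- costs $\mathcal{O}(1)$. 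The total cost is then $T$ multiplied by the number of (state, feasible-predecessor) updates performed in a single slot.

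Next I would count the states retained per slot. By the pruning rule, at slot $t$ only locations with $p(\mathbf{y}_t\mid\mathbf{x}_t;\hat{\bm{\Theta}}_{\text{p}})>\zeta$ are kept, of which there are $n_t(\zeta)\le n_{\mathrm{max}}(\zeta)$; this replaces the naive factor $|\mathcal{V}|$ by $n_{\mathrm{max}}(\zeta)$ and accounts for the linear-in-$T$ scaling claimed in the statement. The crux is then to bound the number of candidate predecessors per retained current state through the reachability structure of $\mathcal{G}$. By construction an edge connects locations within $K=\lceil\delta v_{\mathrm{max}}/\tau\rceil$ hops, since a user moving at speed at most $v_{\mathrm{max}}$ over a slot of duration $\delta$ cannot travel farther than $\delta v_{\mathrm{max}}$ meters, i.e.\ $K$ grid spacings of $\tau$. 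Counting grid nodes inside a $K$-hop neighborhood then gives the two regimes: for a two-dimensional square grid the neighborhood discretizes a disk of radius $\sim K\tau$, whose area $\sim (K\tau)^2$ contains $\mathcal{O}(K^2)=\mathcal{O}\big((\delta v_{\mathrm{max}}/\tau)^2\big)$ nodes, whereas for a one-dimensional unbranched road it is a segment of length $\sim K\tau$ containing only $\mathcal{O}(K)=\mathcal{O}(\delta v_{\mathrm{max}}/\tau)$ nodes.

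Multiplying the three factors then yields the result: $T\cdot n_{\mathrm{max}}(\zeta)$ times the $\mathcal{O}(K^2)$ node count in the densest (square-region) topology gives the upper bound $\mathcal{O}\big(Tn_{\mathrm{max}}(\zeta)(\delta v_{\mathrm{max}}/\tau)^2\big)$, while times the $\mathcal{O}(K)$ count in the sparsest (unbranched-road) topology gives the lower bound $\mathcal{O}\big(Tn_{\mathrm{max}}(\zeta)(\delta v_{\mathrm{max}}/\tau)\big)$; every admissible planar road network sits between these two extremes.

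The step I expect to be the main obstacle is reconciling the second-order dependence with a \emph{single} $K$-hop predecessor factor. A literal super-state formulation over pairs $(\mathbf{x}_{t-1},\mathbf{x}_t)$ enumerates reachable neighbors once to index the pair and again to maximize over $\mathbf{x}_{t-2}$, which would inflate the predecessor factor beyond what the stated bound allows. The delicate part is therefore to argue that, per retained current state, the transitions that actually need to be evaluated in the modified recursion are still organized by one $K$-hop neighborhood traversal (so that the governing quantity is the number of reachable nodes rather than its square), together with verifying the two geometric node counts above; once these are in place, the final multiplication is routine.
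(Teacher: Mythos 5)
Your counting argument---$T$ slots, at most $n_{\mathrm{max}}(\zeta)$ retained states per slot after pruning, and a predecessor neighborhood of $\mathcal{O}(K^{2})$ nodes for a square grid versus $\mathcal{O}(K)$ for an unbranched road with $K=\lceil\delta v_{\mathrm{max}}/\tau\rceil$---is exactly the justification the paper itself gives; it offers no formal proof beyond the paragraph preceding the proposition, which asserts precisely these predecessor counts and multiplies the three factors. The subtlety you flag about the second-order transition $\mathbb{P}(\mathbf{x}_{t}|\mathbf{x}_{t-1},\mathbf{x}_{t-2})$ is genuine (a literal super-state Viterbi over pairs would incur an extra neighborhood factor, e.g.\ $\mathcal{O}(K^{4})$ per retained state in 2D unless $n_{\mathrm{max}}(\zeta)$ caps it), but the paper leaves this unaddressed as well and implicitly treats the per-state work as governed by a single $K$-hop traversal.
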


We can solve problem (P3.1) with linear complexity, as stated in Proposition
\ref{prop:The-computational-complexity}. Given the solution $\mathcal{X}_{T}^{(0)}$
of (P3.1), we employ the gradient descent method with a learning rate
$l_{r}$ to obtain a convergent solution to problem (P3), initialized
with $\mathcal{X}_{T}^{(0)}$. The overall algorithm is summarized
in Algorithm \ref{alg:alternative-opt}. We first initialize the propagation
parameter $\bm{\Theta}_{\text{{p}}}$ and and the mobility parameter
$\bm{\Theta}_{\text{{m}}}$ randomly and then begin the alternating
update of $\mathcal{X}_{T}$, $\bm{\Theta}_{\text{{p}}}$ and $\bm{\Theta}_{\text{{m}}}$
alternatively until convergence. Since each iteration of this procedure
never decreases the objective function, which is bounded above, the
iterative process is therefore guaranteed to converge. Moreover, as
problem (\ref{eq:prob-J}) is inherently non-convex, the algorithm
is not guaranteed to find a unique global optimum, and may yield different
solutions depending on the initialization.

\section{Numerical Results}

\label{sec:Numerical-Results}

\subsection{Datasets}

\label{subsec:Datasets}

\begin{figure}
\begin{centering}
\includegraphics[width=1\columnwidth]{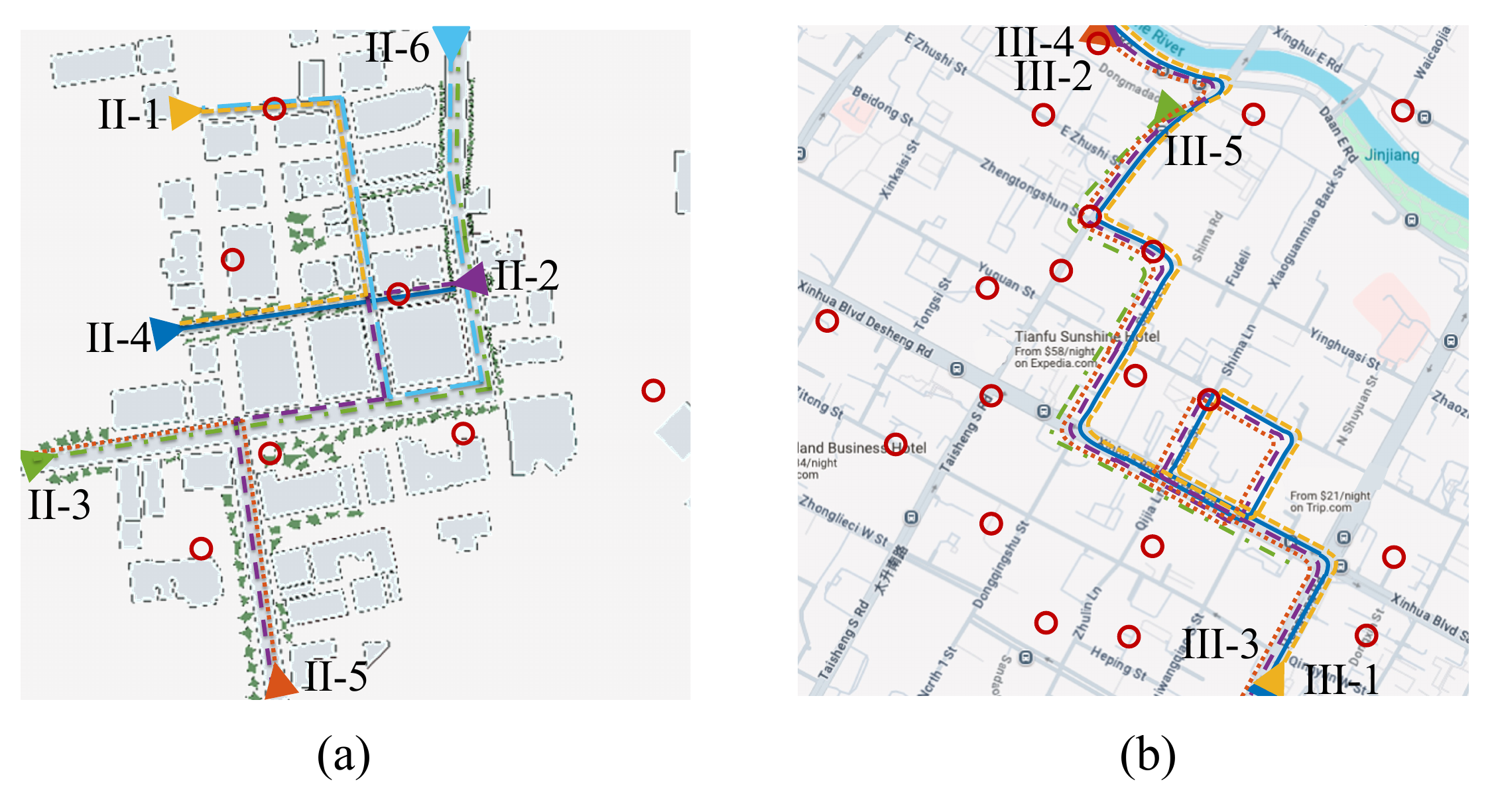}\vspace{-0.1in}
\par\end{centering}
\centering{}\caption{The data collection environment of (a) synthetic \ac{mimo} dataset
and (b) real \ac{mimo} dataset. The signal from \acpl{bs} (gray
points) is measured along the trajectory (distinct line styles and
colors, begin with triangles). \label{fig:DatasetShow}}
\vspace{-0.1in}
\end{figure}

This paper validates the proposed algorithm using three datasets:
\begin{itemize}
\item \textbf{Synthetic Single-Antenna Dataset I:} We simulate a trajectory
of length $1\times10^{5}$ meters using the mobility model defined
in (\ref{eq:mobility-model}), parameterized by $\gamma=1$, $\mathbf{v}=[10,0]^{\mathrm{T}}$
m/s, $\mathbf{x}=[0,0]^{\mathrm{T}}$ m, and $\delta=0.5$ seconds.
We consider \ac{bs} at a height of 50 meters and a mobile user at
a height of 2 meters equipped with a single antenna. The parameters
in the signal propagation model (\ref{eq:measurement-model}) are
$N_{\text{t}}=1$, $\beta_{q}=5$, $\alpha_{q}=-20$, and $\sigma_{q}=0.2,0.5,1$.
Two scenarios are considered: in Scenario 1 (\ac{bs} deployed in
a limited region, c.f., Section III-B), the number of \acpl{bs} surrounding
the trajectory is fixed at $Q=4$, 8, 12, 16, 20; in Scenario 2 (\ac{bs}
deployed in an unlimited region, c.f., Section III-C), the \acpl{bs}
in the target area follow a \ac{ppp} with densities $\kappa=1.02\times10^{-3}$,
$3.02\times10^{-3}$, $5.02\times10^{-3}$, $7.02\times10^{-3}$,
$9.02\times10^{-3}$, and $1.02\times10^{-2}$ units per $\mathrm{m}^{2}$.
The mobile user can only detect \acpl{bs} within a radius of $R=50$,
100, 200, 300, 400, 500 meters.
\item \textbf{Synthetic \ac{mimo} Dataset} \textbf{II:} We utilized Wireless
Insite\foreignlanguage{english}{$^{\circledR}$} to simulate a environment
encompassing a 700 m \texttimes{} 700 m area in San Francisco, USA,
featuring building heights ranging from 12 m to 204 m. As illustrated
in Figure \ref{fig:DatasetShow}(a), seven \acpl{bs} with a height
of 55 meters, each were manually deployed on selected rooftops to
ensure comprehensive coverage of the area of interest. Each \ac{bs}
is equipped with a 64-antenna dual-polarized \ac{mimo} array and
configured with $M=7$ beams. The antenna orientation of each \ac{bs}
spans approximately $120^{\circ}$, and the transmit power is set
to $0$ dBm. The sensing vectors are constructed using the Kronecker
product of the array responses and are utilized to compute the \ac{ssb}
\ac{rsrp} by evaluating signal strengths across various receivers
equipped with isotropic antennas along the route. We recorded the
\ac{ssb} \ac{rsrp} in receivers positioned at a height of 2 meters
long six predefined trajectories with lengths 675 m, 792 m, 1085 m,
355 m, 627 m, and 1182 m, called II-1$\sim$6. All measurements are
conducted at speeds of 6, 12, 18, 24, 30 m/s with a sampling interval
of $\text{\ensuremath{\delta}}=0.5$ s.
\item \textbf{Real \ac{mimo} Dataset III: }We conducted a driving procedure
in an urban area of China, covering a 1350 m \texttimes  1350 m region,
where we collected \ac{gps}-reported location data, as well as the
\ac{rsrp}, \ac{rssi}, and \ac{sinr} of 32 \ac{csi} beams from
the serving cell. Additionally, we measured the \ac{rsrp} of $M=8$
\ac{ssb} beams both the serving and neighboring cells using a 5G-enabled
smartphone. The receiver reliably acquired 8 beam values from the
\ac{bs} in the serving cell, while \acpl{bs} in neighboring cells
provided only 0 to 14 beam values due to device limitations and signal
propagation interference. The vehicle traversed five distinct trajectories,
labeled III-1 through III-5, and measured signals from 39 surrounding
\acpl{bs}. Specifically, Trajectory III-1 followed a 2652 m path
with speeds between 0 and 13.1 m/s and a sampling interval of $\delta=0.5$
s, yielding 692 samples. Trajectory III-2 retraced III-1 in reverse,
III-3 doubled the average speed on III-1, III-4 doubled the speed
while reversing III-1, and III-5 covered a 1253 m segment of III-1.
\end{itemize}

\subsection{Numerical Validation of the Theoretical Results}

\begin{table*}[t]
\centering{}\caption{Comparison of average localization error on synthetic \ac{mimo} dataset
II (six trajectories) and real \ac{mimo} dataset III (five trajectories).
\label{tab:traj-performance}}
\begin{tabular}{>{\raggedright}V{\linewidth}|>{\centering}m{1.2cm}>{\centering}m{1.2cm}>{\centering}m{1.2cm}>{\centering}m{1.2cm}>{\centering}m{1cm}|>{\centering}m{1.2cm}>{\centering}m{1.2cm}>{\centering}m{1.2cm}>{\centering}m{1.2cm}>{\centering}m{1cm}}
\hline 
 & \multicolumn{5}{c|}{without a road map} & \multicolumn{5}{c}{use a road map to constrain the trajectory}\tabularnewline
 & MaR\cite{IqSha:J24} & WCL\cite{MagGioKanYu:J18} & Proposed ($M=1$) & Proposed & GMA & MaR\cite{IqSha:J24} & WCL\cite{MagGioKanYu:J18} & Proposed ($M=1$) & Proposed & GMA\tabularnewline
\hline 
Dataset II & \centering{}47.6 & 41.4 & 10.7 & \underline{7.2} & \textbf{7.0} & 42.3 & 34.8 & 9.1 & \underline{7.0} & \textbf{6.3}\tabularnewline
Dataset III & 167.8 & 124.5 & 22.7 & \underline{18.7} & \textbf{17.8} & 102.2 & 86.3 & 19.2 & \underline{15.9} & \textbf{14.9}\tabularnewline
\hline 
\end{tabular}\vspace{-0.1in}
\end{table*}

\begin{figure}
\centering{}\includegraphics[width=1\columnwidth]{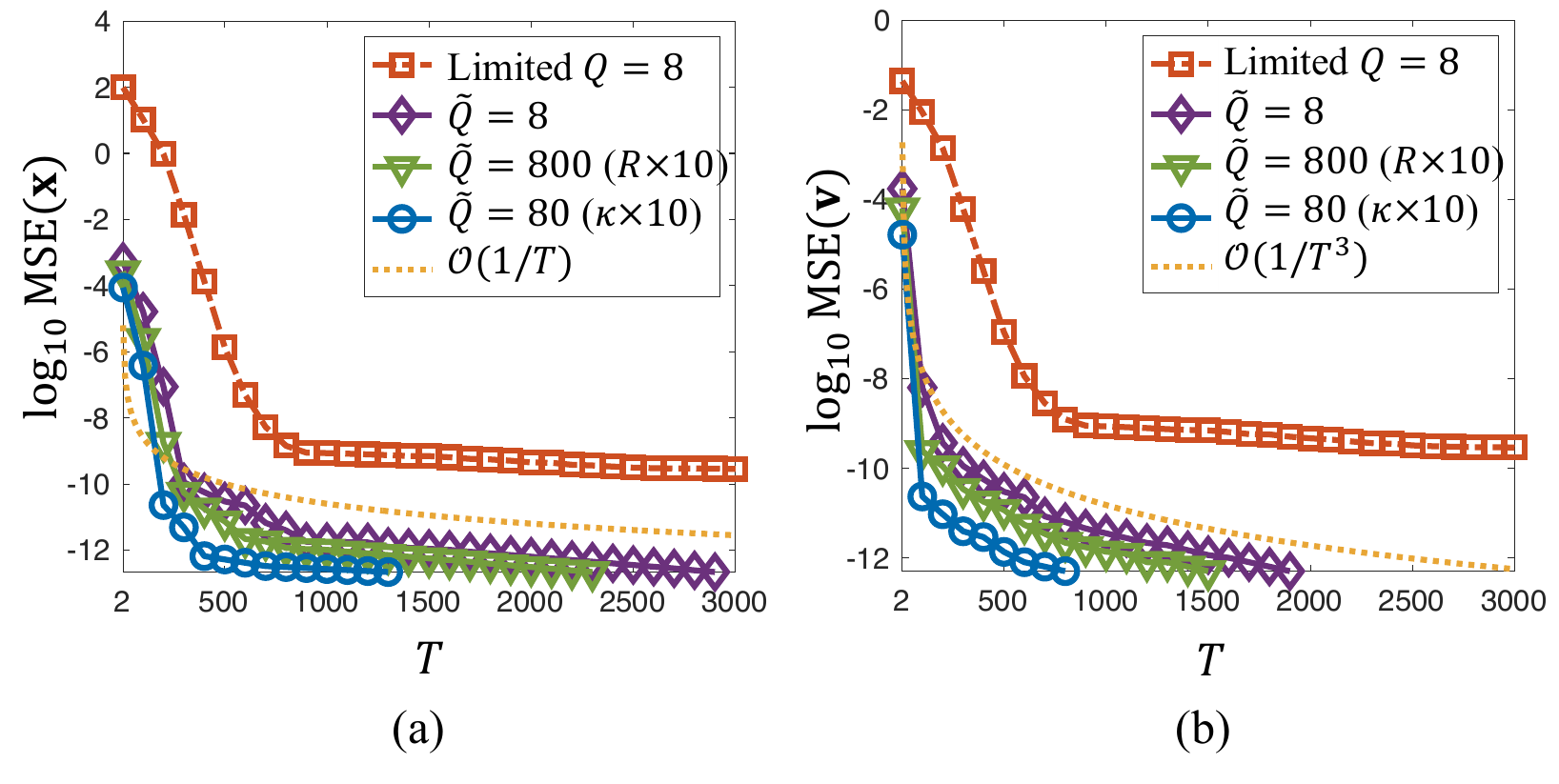}\vspace{-0.1in}
\caption{\ac{mse} of (a) $\mathbf{x}$ and (b)\textbf{ $\mathbf{v}$} with
different sample number $T$, the number of \ac{bs} $Q$, radius
$R$, and density $\kappa$.\label{fig:vx-T}}
\vspace{-0.1in}
\end{figure}

Figure \ref{fig:vx-T} illustrates the \ac{mse} defined as MSE($\mathbf{x}$)$=\|\mathbf{x}-\hat{\mathbf{x}}\|_{2}^{2}$
m$^{2}$ and MSE($\mathbf{v}$)$=\|\mathbf{v}-\hat{\mathbf{v}}\|_{2}^{2}$
on the synthetic single-antenna dataset with the parameter $\beta_{q}=5$,
$\alpha_{q}=-20$, and $\sigma_{q}=0.1$, where $\hat{\mathbf{x}}$
and $\hat{\mathbf{v}}$ are the outputs of the proposed algorithm.

In Scenario 1 of the synthetic single-antenna dataset with $Q=8$,
the \ac{mse} of $\mathbf{x}$ and $\mathbf{v}$ decreases as $T$
increases within a limited region but does not converge to zero even
when $T=20000$ in our experiments. This behavior is consistent with
Theorem \ref{thm:LB-F-x} and Theorem \ref{thm:LB-F-v}.

In Scenario 2 of the synthetic single-antenna dataset, we set $R=50$
m and $\kappa=1.02\times10^{-3}$ units per m$^{2}$, resulting in
$\text{\ensuremath{\tilde{Q}}}\approx8$. As $T$ increases, the rate
at which MSE($\mathbf{x}$) decreases follows $\mathcal{O}(1/T)$,
and the rate at which MSE($\mathbf{v}$) decreases follows $\mathcal{O}(1/T^{3})$,
which is consistent with Theorem \ref{thm:LB-F-x-un} and Theorem
\ref{thm:LB-F-v-un}. The MSE($\mathbf{x}$) and MSE($\mathbf{v}$)
for the curves $\tilde{Q}=8$ and $\tilde{Q}=800$ in Figure \ref{fig:vx-T}
both reach zero when $T>3200$. In addition, we found that increasing
the radius $R$ from 50 to 500 meters results in a lower \ac{mse},
and increasing the density $\kappa$ from $1.02\times10^{-3}$ to
$1.02\times10^{-2}$ also yields a lower \ac{mse}.

\begin{figure}
\centering{}\includegraphics[width=1\columnwidth]{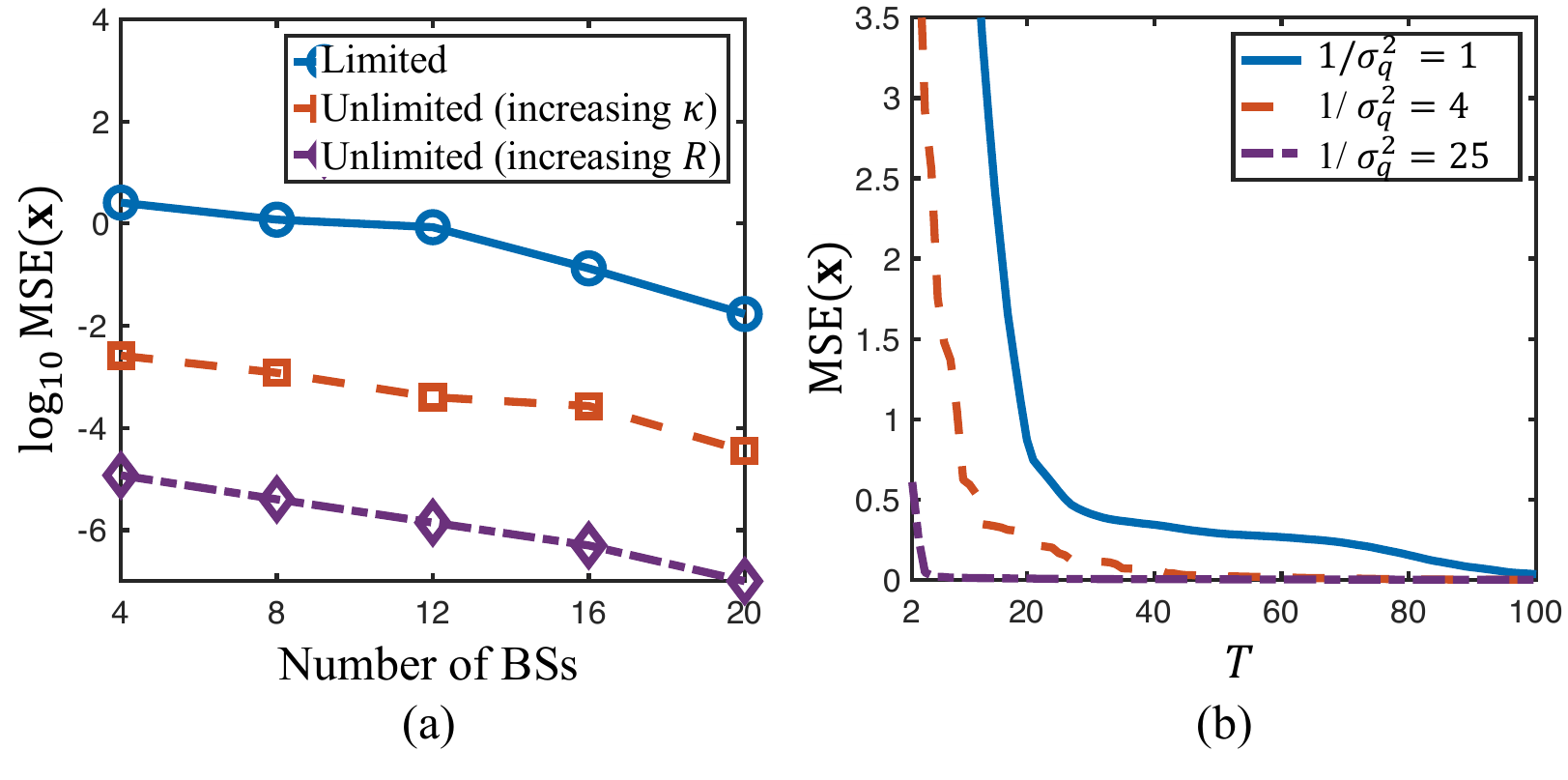}\vspace{-0.1in}
\caption{(a) The relationship between MSE($\mathbf{x}$) and the number of
\acpl{bs}. (b) MSE($\mathbf{x}$) under different noise $\sigma_{q}$.
\label{fig:Q-sigma}}
\vspace{-0.1in}
\end{figure}

We investigate the effect of the number of \acpl{bs}, $Q$ and $\tilde{Q}$,
on MSE($\mathbf{x}$) and MSE($\mathbf{v}$). The trajectory length
is set to 100 m, with $\beta_{q}=5$, $\alpha_{q}=-5$, and $\sigma_{q}=0.1$.
For Scenario 2, we consider a fixed radius $R=50$ m and incrementally
increase the density $\kappa$ to achieve $\tilde{Q}=4,8,...,20$.
Additionally, we also examine a fixed $\kappa=1.02\times10^{-3}$
units per m$^{2}$ while increasing $R$ to obtain $\tilde{Q}=4$,
8,..., 20. As shown in Figure \ref{fig:Q-sigma}(a), the unlimited
scenario consistently achieves a lower \ac{mse} compared to the limited
scenario under the same number of \acpl{bs}. This observation is
also validated in Figure \ref{fig:vx-T}, where Scenario 2 with $\text{\ensuremath{\tilde{Q}}}\approx8$
achieves a lower \ac{mse} compared to Scenario 1 with $Q=8$. Furthermore,
we found that increasing $R$ results in a lower \ac{mse} than increasing
$\kappa$ under the same number of \acpl{bs}. This is because $\tilde{\Delta}_{T,x}$
and $\tilde{\Delta}_{T,v}$ in Theorem \ref{thm:LB-F-x-un} and Theorem
\ref{thm:LB-F-v-un} is related to $\mathcal{O}(1/(\kappa\ln R)$.

We also investigate the effect of the noise variance $\sigma_{q}^{2}$
under the unlimited scenario, with $R=50$ m, $\kappa=1.02\times10^{-3}$
per m$^{2}$, $\beta_{q}=5$, $\alpha_{q}=-5$, and a trajectory length
of $500$ m. We consider $1/\sigma_{q}^{2}=1,4,25$ for all \acpl{bs}.
As shown in Figure \ref{fig:Q-sigma}(b), a larger $1/\sigma_{q}^{2}$
results in a faster convergence rate. Recall that $1/\sigma_{\max}^{2}$
is proportional to the \ac{snr}. Thus, a smaller $1/\sigma_{q}^{2}$
leads to a slower decrease in the \ac{crlb} of $\mathbf{x}$ and
$\mathbf{v}$ as $T$ increases, as stated in Theorem \ref{thm:LB-F-x-un}
and Theorem \ref{thm:LB-F-v-un}.

\subsection{Trajectory Recovery Performance}

We use the average localization error $E_{\mathrm{l}}=\frac{1}{T}\sum_{t=1}^{T}\|\mathbf{x}_{t}-\hat{\mathbf{x}}_{t}\|_{2}$
to evaluate the trajectory recovery performance of the proposed method.
Here, $\mathbf{x}_{t}$ is the data collection location at time slot
$t$, and $\hat{\mathbf{x}}_{t}$ is the $t$th location in the output
trajectory of the proposed algorithm. Four baselines are designed
for comparison:
\begin{itemize}
\item Max-\ac{rss} (MaR): At time $t$, the strongest \ac{rss} among the
surrounding \acpl{bs} is selected, and the estimated position is
determined by the location of the \ac{bs} with the strongest \ac{rss}.
\item Weighted Centroid Localization (WCL): This method computes a weighted
location $\hat{\mathbf{p}}_{t}=\sum_{q=1}^{Q}w_{t,q}\mathbf{o}_{q}$,
where $w_{t,q}=\sum_{j=1}^{M}10^{y_{q,m,t}/20}/[\sum_{j=1}^{M}\sum_{l=1}^{Q}10^{y_{q,j,l}/20}]$.
\item Proposed ($M=1$) \cite{XinChe:C24}: This method uses only the max
\ac{rsrp} among the $M$ beams of a \ac{bs} as the \ac{rss} for
that \ac{bs}, and considers only the path-loss model (\ref{eq:path-loss})
in the signal propagation probability.
\item Genius-aided map-assisted (GMA): This method utilizes the true location
information available in the real MIMO dataset III (via GPS) to fit
the propagation models described by equation (\ref{eq:measurement-model})
and assumes that the propagation model parameters are known. Only
the mobility model parameters and the trajectory are updated alternately
until convergence. The GMA method serves as an upper bound for performance
comparison.
\item Direct AI positioning: We adopt the 'fingerprinting based on channel
observation' method, as defined in \cite{Report3GPP}, for direct
AI positioning. A multi-layer perceptron (MLP) is constructed with
fully connected layers of dimensions 49, 128, 64, 16, and 2. This
method is supervised, requiring user location labels for training.
\end{itemize}
For the comparison, if the road map knowledge is given, the output
location of the comparison is projected to the nearest road as stated
in \cite{Hu:J23} to utilize the map knowledge. For the proposed method,
the target area will be the road network space if the map information
is given, and the whole target area otherwise. We set $\tau=1$ m,
$v_{\mathrm{max}}=120$ km/h, $\zeta=0.8$, $l_{r}=0.01$, $\epsilon=0.01$,
and $\gamma=0.9$ for the proposed method.

\subsubsection{Synthetic \ac{mimo} Dataset}

We first evaluate the performance of the proposed method on a synthetic
\ac{mimo} dataset. As shown in Table \ref{tab:traj-performance},
first, MaR exhibits the poorest performance, demonstrating that relying
solely on proximity to \acpl{bs} is ineffective for accurate mobile
user positioning. While the WCL method improves estimation by applying
weights to the distances from surrounding \acpl{bs}, it only marginally
outperforms the MaR method. Second, although the proposed $(M=1)$
method outperforms MaR and WCL, its performance is slightly inferior
to the proposed method that accounts for beam effects due to the absence
of beam consideration, underscoring the importance of incorporating
angular domain considerations in modeling signal propagation. Third,
the proposed method consistently outperforms the comparison methods
MaR and WCL, including the $M=1$ variant. Additionally, the proposed
method performs slightly worse than GMA. This discrepancy arises because
GMA and the proposed method solve the same optimization problem; however,
GMA assumes a known real signal propagation model, thereby establishing
an upper bound for the proposed method. Nevertheless, the gap between
them is only less than 1 meter. Finally, compared to the method with
map information, the method without map exhibits a slight increase
in localization error, suggesting that all methods perform better
when road network information is available.

\begin{table}[t]
\begin{centering}
\caption{Average localization error ($E_{\mathrm{l}}$) on the six trajectories
separately in synthetic \ac{mimo} dataset with the speed 6 m/s under
map knowledge free. \label{tab:PerformanceII}}
\par\end{centering}
\begin{centering}
\begin{tabular}{l|cccccc}
\hline 
Trajectory & II-4 & II-5 & II-3 & II-1 & \multicolumn{1}{c}{II-2} & II-6\tabularnewline
Length {[}m{]} & 355 & 627 & 1085 & 675 & \multicolumn{1}{c}{792} & 1182\tabularnewline
No. of turns & 0 & 1 & 1 & 2 & \multicolumn{1}{c}{3} & 3\tabularnewline
\hline 
$E_{\mathrm{l}}$ {[}m{]} & 4.39 & 7.41 & 6.67 & 7.80 & 8.03 & 7.88\tabularnewline
\hline 
\end{tabular}
\par\end{centering}
\centering{}\vspace{-0.1in}
\end{table}

We investigate the trajectory recovery performance by varying the
trajectory length and the complexity of the trajectory in terms of
the number of turns. As shown in Table \ref{tab:PerformanceII}, firstly,
increasing the trajectory length indeeds decreases the localization
error. This can be seen by comparing the \ac{mse} of trace II-3 and
II-5, and the \ac{mse} of trace II-2 and II-6. Second, for roughly
the same trajectory length, making a turn can slightly decreases the
localization performance. As seen from traces II-1, II-2, II-4, II-5,
the more turns, the higher the \ac{mse}.

\begin{table}[t]
\begin{centering}
\caption{Average localization error ($E_{\mathrm{l}}$) on all six trajectories
in synthetic \ac{mimo} dataset with different speeds under map knowledge
free. \label{tab:localization-error-speed}}
\par\end{centering}
\centering{}%
\begin{tabular}{l|ccccc}
\hline 
Speed m/s & 6 & 12 & 18 & 24 & 30\tabularnewline
$E_{\mathrm{l}}$ {[}m{]} & 7.29 & 8.12 & 8.36 & 8.77 & 9.49\tabularnewline
\hline 
\end{tabular}\vspace{-0.1in}
\end{table}

We also investigate the effect of moving speed on trajectory recovery
performance by varying the speed from 6 m/s to 30 m/s. As shown in
Table \ref{tab:localization-error-speed}, the error gradually increases
with increasing speed. For a fixed trajectory length, increasing the
speed results in a reduced number of samples. As stated in Theorems
\ref{thm:LB-F-x}$\sim$\ref{thm:LB-F-v-un}, the \ac{crlb} of the
\ac{mse} decreases as $T$ increases.

For comparison with the Direct AI method, we use datasets II-1, II-2,
and II-3 for training, and datasets II-4, II-5, and II-6 for testing.
Note that, the trajectories in the test datasets are subsets of that
in the training datasets. As shown in Table \ref{tab:Comparison-supervised},
the Direct AI method is able to achieve a small training error, but
the test performance is poor. the proposed method (w/o road map) outperforms
the Direct AI approach. This is primarily because our method directly
estimates the entire trajectory with mobility model constraints, whereas
the Direct AI approach predicts each location independently at each
time slot. Additionally, our method incorporates a \ac{bs}-specific
channel model for more accurate \ac{rsrp} fitting, while the Direct
AI approach does not leverage any underlying physical models.

\begin{table}[t]
\centering{}\caption{Comparison of average localization error ($E_{\mathrm{l}}$ {[}m{]})
performance between the proposed method and the supervised method
on synthetic \ac{mimo} dataset II.\label{tab:Comparison-supervised}}
\begin{tabular}{c|cccc}
\hline 
 & \multirow{2}{*}{Training error (m)} & \multicolumn{3}{c}{Test error (m)}\tabularnewline
\cline{3-5} \cline{4-5} \cline{5-5} 
 &  & II-4 & II-5 & II-6\tabularnewline
\hline 
Direct AI positioning & 6.24 & 8.35 & 10.23 & 9.59\tabularnewline
Proposed (w/o road map) & (no training) & 4.39 & 7.41 & 7.88\tabularnewline
\hline 
\end{tabular}
\end{table}

\subsubsection{Real \ac{mimo} Dataset}

\begin{figure}[t]
\begin{centering}
\includegraphics[width=1\columnwidth]{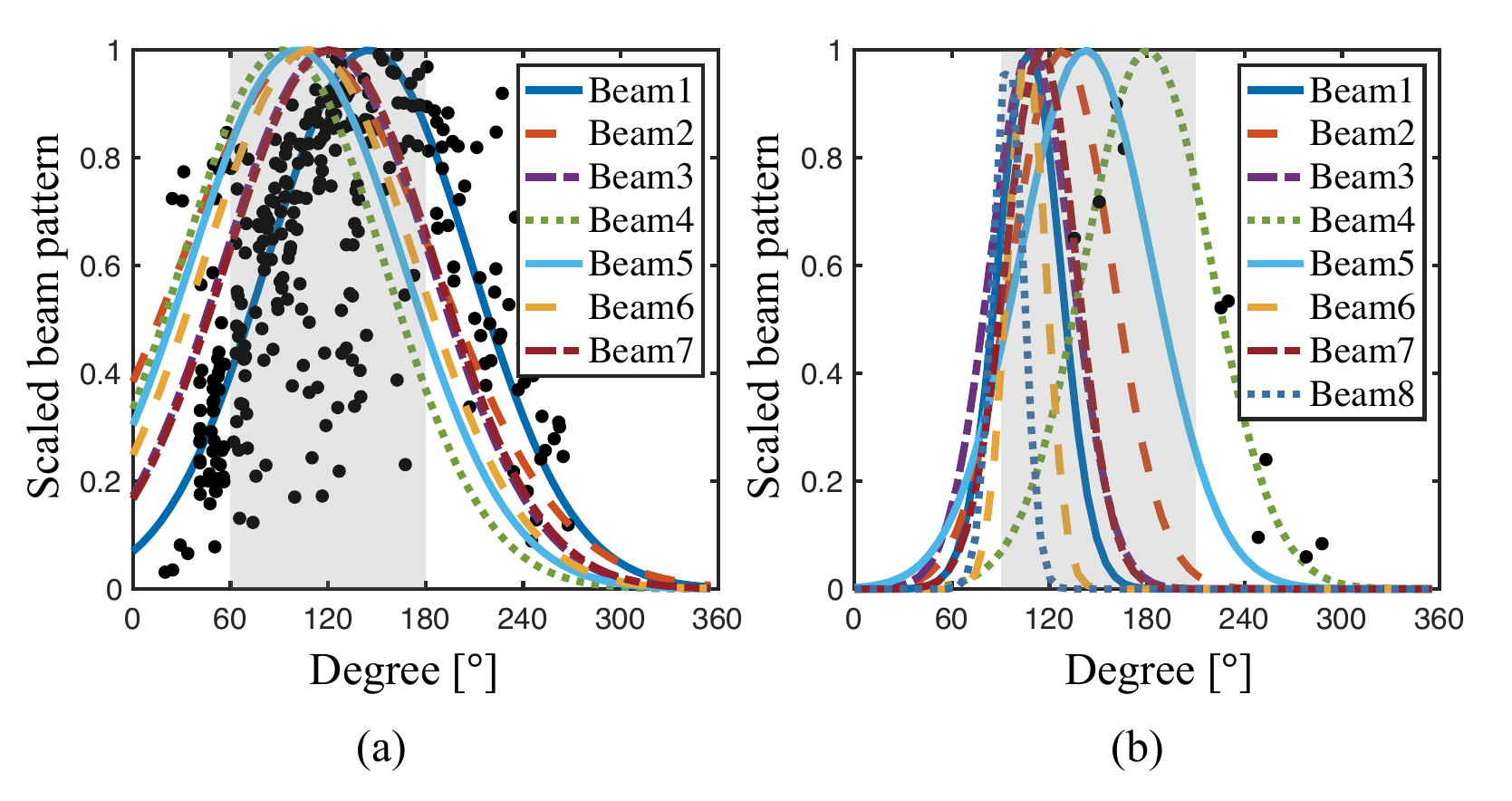}\vspace{-0.1in}
\par\end{centering}
\centering{}\caption{(a) The seven scaled patterns of the $q$th \ac{bs} in synthetic
\ac{mimo} dataset with measurements (black points) belonging to beam
1. (b) The eight scaled patterns of the $q$th \ac{bs} in real \ac{mimo}
dataset with measurements (black points) belonging to beam 4. \label{fig:example-model-plot}}
\vspace{-0.1in}
\end{figure}

Similar to the performance observed in the synthetic \ac{mimo} dataset,
the proposed method still achieves the lowest average localization
error on real \ac{mimo} datasets compared with MaR, WCL, and Proposed
($M=1$), and shows a very small gap to the GMA method. However, we
observed that the performance on the real \ac{mimo} dataset is inferior
compared to the synthetic \ac{mimo} dataset. To investigate the source
of this discrepancy, we conducted the following analysis.

Although the synthetic dataset comprises only seven \acpl{bs} compared
to 39 \acpl{bs} in the real \ac{mimo} dataset, in the real \ac{mimo}
dataset, an average of only 16 values can be measured for each beam
in the whole trajectory, whereas the synthetic \ac{mimo} dataset
allows for the measurement of approximately 1050 values for each beam.
The sparsity observed in the real \ac{mimo} dataset arises because
the mobile device can measure beam values from only six neighboring
\acpl{bs} and records only the strongest beam value for each neighboring
\ac{bs}. As illustrated in Figure \ref{fig:example-model-plot},
the amount of data used to fit beam 1 in the synthetic \ac{mimo}
dataset is significantly larger than that used to fit beam 4 in the
real \ac{mimo} dataset. This larger dataset facilitates a more accurate
estimation of the beam pattern in the synthetic scenario.

Figure \ref{fig:example-model-plot}(b) demonstrates the beam pattern
fitting, where the standard deviation $\sigma_{q}$ is found to be
0.25. The reconstructed beam patterns align with the expectations
communicated by the network operator: eight beams point in different
directions for spatial multiplexing. The beam coverage spans approximately
120 degrees; for example, beams are concentrated between $60^{\circ}$
and $180^{\circ}$ in the synthetic \ac{mimo} dataset and between
$90^{\circ}$ and $210^{\circ}$ in the real \ac{mimo} dataset. The
estimated channel model parameters vary across different \acpl{bs}
in Dataset III. This variation reflects the diversity of propagation
environments experienced by each \ac{bs}.

\subsection{Application: \ac{csi} Prediction}

\label{sec:Integrated-Loc-pred}

\selectlanguage{english}%
\begin{table*}[t]
\centering{}\centering{}\caption{\foreignlanguage{american}{Performance of \ac{csi} beam \ac{sinr} and \ac{rssi} prediction.
\label{tab:Perfor-SINR-RSSI}}}
\begin{tabular}{>{\centering}p{1.5cm}|>{\centering}p{1cm}>{\centering}p{1cm}>{\centering}p{1cm}>{\centering}p{1cm}>{\centering}p{1.6cm}|>{\centering}p{1cm}>{\centering}p{1cm}>{\centering}p{1cm}>{\centering}p{1cm}>{\centering}p{1.6cm}}
\hline 
\foreignlanguage{american}{} & \multicolumn{5}{c|}{\selectlanguage{american}%
\ac{sinr}\selectlanguage{english}%
} & \multicolumn{5}{c}{\selectlanguage{american}%
\ac{rssi}\selectlanguage{english}%
}\tabularnewline
\foreignlanguage{american}{} & \foreignlanguage{american}{MI} & \foreignlanguage{american}{AR} & \foreignlanguage{american}{CNN} & \foreignlanguage{american}{LSTM} & \foreignlanguage{american}{Proposed} & \foreignlanguage{american}{MI} & \foreignlanguage{american}{AR} & \foreignlanguage{american}{CNN} & \foreignlanguage{american}{LSTM} & \foreignlanguage{american}{Proposed}\tabularnewline
\hline 
$E_{\mathrm{q}}(1)$ & \foreignlanguage{american}{0.86} & \foreignlanguage{american}{0.68} & \foreignlanguage{american}{0.61} & \foreignlanguage{american}{0.59} & \foreignlanguage{american}{\textbf{0.42}} & \foreignlanguage{american}{0.96} & \centering{}0.70 & \foreignlanguage{american}{0.68} & \foreignlanguage{american}{0.69} & \foreignlanguage{american}{\textbf{0.43}}\tabularnewline
$E_{\mathrm{q}}(8)$ & \foreignlanguage{american}{0.70} & \foreignlanguage{american}{0.53} & \foreignlanguage{american}{0.52} & \foreignlanguage{american}{0.42} & \foreignlanguage{american}{\textbf{0.34}} & \foreignlanguage{american}{0.70} & \foreignlanguage{american}{0.50} & \foreignlanguage{american}{0.46} & \foreignlanguage{american}{0.47} & \foreignlanguage{american}{\textbf{0.35}}\tabularnewline
$E_{\mathrm{q}}(16)$ & \foreignlanguage{american}{0.47} & \foreignlanguage{american}{0.27} & \foreignlanguage{american}{0.22} & \foreignlanguage{american}{0.19} & \foreignlanguage{american}{\textbf{0.13}} & \foreignlanguage{american}{0.56} & \foreignlanguage{american}{0.37} & \foreignlanguage{american}{0.31} & \foreignlanguage{american}{0.25} & \foreignlanguage{american}{\textbf{0.16}}\tabularnewline
\hline 
$E_{\mathrm{e}}(1)$ & \foreignlanguage{american}{0.63} & \foreignlanguage{american}{0.56} & \foreignlanguage{american}{0.46} & \foreignlanguage{american}{0.41} & \foreignlanguage{american}{\textbf{0.23}} & \foreignlanguage{american}{0.76} & \foreignlanguage{american}{0.50} & \foreignlanguage{american}{0.49} & \foreignlanguage{american}{0.46} & \foreignlanguage{american}{\textbf{0.31}}\tabularnewline
$E_{\mathrm{e}}(8)$ & \foreignlanguage{american}{0.60} & \foreignlanguage{american}{0.49} & \foreignlanguage{american}{0.39} & \foreignlanguage{american}{0.30} & \foreignlanguage{american}{\textbf{0.28}} & \foreignlanguage{american}{0.61} & \foreignlanguage{american}{0.46} & \foreignlanguage{american}{0.40} & \foreignlanguage{american}{0.38} & \foreignlanguage{american}{\textbf{0.27}}\tabularnewline
$E_{\mathrm{e}}(16)$ & \foreignlanguage{american}{0.41} & \foreignlanguage{american}{0.32} & \foreignlanguage{american}{0.27} & \foreignlanguage{american}{0.21} & \foreignlanguage{american}{\textbf{0.10}} & \foreignlanguage{american}{0.54} & \foreignlanguage{american}{0.38} & \foreignlanguage{american}{0.29} & \foreignlanguage{american}{0.22} & \foreignlanguage{american}{\textbf{0.12}}\tabularnewline
\hline 
\end{tabular}\foreignlanguage{american}{\vspace{-0.1in}
}
\end{table*}

\selectlanguage{american}%
When the trajectories are recovered based on unlabeled \ac{rsrp}
measurements of the \ac{ssb} beam, a {\em radio map} can be constructed
by pairing the recovered location labels with the \ac{csi} data.
Note that such a construction can be done in an accumulative way to
keep improving the accuracy of the radio map. When \ac{csi} prediction
is needed \cite{LiuSin:C12}, a sequence of {\em sparse} and {\em coarse}
\ac{ssb} \ac{csi} data is observed over consecutive $L+1$ time
slots, and then, the full \ac{csi} data can be recovered from the
radio map using a finger-printing approach; consequently, applications
such as \ac{csi} tracking and quality-of-service prediction may follow.

Consider a constructed radio map includes signal propagation parameters,
\ac{rsrp} of \ac{ssb} beams, and \ac{rsrp}\foreignlanguage{english}{,
}\ac{rssi}\foreignlanguage{english}{ and }\ac{sinr}\foreignlanguage{english}{
of }\ac{csi}\foreignlanguage{english}{ beams}, along with the location
label. Given the current \ac{ssb} \ac{rsrp} measurement $\mathbf{y}_{t}$
and its $L$-length history $\{\mathbf{y}_{i}\}_{i=t-L}^{t-1}$, our
objective is to predict the \ac{rsrp}\foreignlanguage{english}{,
}\ac{rssi}\foreignlanguage{english}{ and }\ac{sinr}\foreignlanguage{english}{
of }\ac{csi}\foreignlanguage{english}{ beams} for the next time slot
$t+1$.

We begin by addressing the estimation of \ac{ssb} \ac{rsrp} $\mathbf{y}_{t+1}$
at time slot $t+1$ through maximizing $\sum_{i=t-L}^{t}\log p(\mathbf{y}_{i}|\mathbf{x}_{i})+\log p(\mathbf{y}_{t+1}|\bar{\mathbf{x}}_{t+1})+\sum_{j=t-L_{p}+2}^{t}\log p\{\mathbf{x}_{j}|\mathbf{x}_{j-1},\mathbf{x}_{j-2}\}$
\ac{wrt} $\mathbf{y}_{t+1}$, where the location for the next time
slot is estimated as $\bar{\mathbf{x}}_{t+1}=(1+\gamma)\mathbf{x}_{t}-\gamma\mathbf{x}_{t-1}+(1-\gamma)\delta\bar{\mathbf{v}}$.
This optimization problem is solved by alternately updating $\{\mathbf{x}_{i}\}_{i=t-L_{p}}^{t}$
and $\{\bar{\mathbf{v}},\sigma_{v}^{2}\}$ based on Algorithm \ref{alg:alternative-opt}.
Subsequently, the estimated location $\bar{\mathbf{x}}_{t+1}$ is
calculated, and the estimated \ac{ssb} \ac{rsrp} $\hat{\mathbf{y}}_{t+1}$
is obtained using the measurement model in (\ref{eq:measurement-model}).
Finally, the \ac{rsrp}\foreignlanguage{english}{, }\ac{rssi}\foreignlanguage{english}{
and }\ac{sinr}\foreignlanguage{english}{ of }\ac{csi}\foreignlanguage{english}{
beams} at time slot $t+1$ are predicted by matching $\hat{\mathbf{y}}_{t+1}$
with the constructed radio map through a nearest \ac{ssb} \ac{rsrp}
search.

We construct the radio map using III-1 in the real \ac{mimo} dataset
and evaluate \ac{csi} prediction performance on III-2 to III-5. We
compare our method with four baselines: Mean Inference (MI), which
computes the average beam variation over a window of length $L$ and
adds this average to the current measurement $\mathbf{y}_{t}$ to
estimate $\hat{\mathbf{y}}_{t+1}$; AutoRegressive (AR), a linear
order-$L$ model trained via gradient descent; Convolutional Neural
Network (CNN), featuring three $3\times3$ convolutional layers with
ReLU activations and a fully connected layer; and Long Short-Term
Memory Network (LSTM), comprising three LSTM layers with dropout and
a fully connected output. All models input sequences of past normalized
\ac{ssb} \ac{rsrp} measurements to predict the \ac{ssb} \ac{rsrp}
of the next time slot, with CNN and LSTM trained using the Adam optimizer
for up to 1000 epochs at a learning rate of 0.001, and $L$ set to
12 for all methods.

\begin{figure}[t]
\begin{centering}
\includegraphics[width=1\columnwidth]{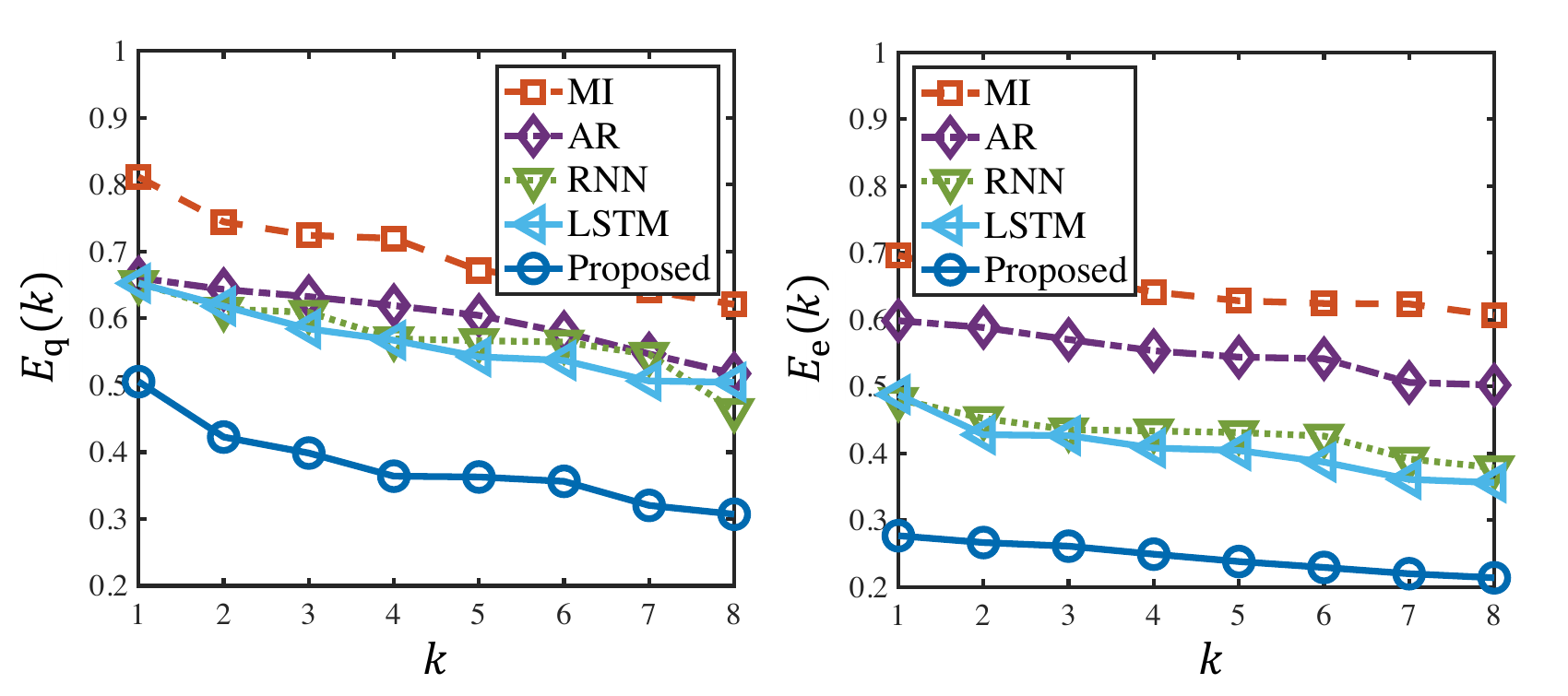}\vspace{-0.1in}
\par\end{centering}
\caption{Performance of \ac{csi} beam \ac{rsrp} prediction. \label{fig:Per_CSI_Prediction_RSRP}}
\vspace{-0.1in}
\end{figure}

\selectlanguage{english}%
We employ three metrics to evaluate the accuracy and reliability of
prediction models in estimating future \foreignlanguage{american}{\ac{csi}}
values based on current and historical measurements. Let $\{\hat{\mathbf{h}}_{n}\}_{n=1}^{N}$
represent the predicted \foreignlanguage{american}{\ac{csi}} beam
information, and $\{\mathbf{h}_{n}\}_{n=1}^{N}$ denote the real measurements.
The first metric, \emph{average quantity deviation error} on the $k$-strongest
beams, \foreignlanguage{american}{$E_{\mathrm{q}}(k)=\frac{1}{NQ}\sum_{n=1}^{N}\sum_{q=1}^{Q}\Big(1-|\hat{\mathcal{B}}_{q,n,k}\cap\mathcal{B}_{q,n,k}|/k\Big)$},
measures the average mismatch in top-$k$ beam selection across all
base stations and time indices, where $\hat{\mathcal{B}}_{q,n,k}$
denotes the set of indices corresponding to the $k$ strongest beams
in $\hat{\mathbf{h}}_{n}$ for BS $q$, and $\mathcal{B}_{q,n,k}$
denotes the same for $\mathbf{h}_{n}$. The quantity $|\hat{\mathcal{B}}_{q,n,k}\cap\mathcal{B}_{q,n,k}|$
represents the number of beams correctly predicted within the top-$k$
set for BS $q$. Lower values of $E_{\mathrm{q}}(k)$ indicate more
accurate beam predictions. The second metric, \emph{average relative
energy deviation} on the $k$-strongest beams, \foreignlanguage{american}{$E_{\mathrm{e}}(k)=\frac{1}{NQ}\sum_{n=1}^{N}\sum_{q=1}^{Q}|e_{q,n,k}-\hat{e}_{q,n,k}|/e_{q,n,k}$},
assesses the average relative difference in the total energy of the
predicted versus actual top-$k$ beams, where $\hat{e}_{q,n,k}$ and
$e_{q,n,k}$ denote the total energy of the $k$ strongest beams in
the prediction and actual measurements, respectively, for \foreignlanguage{american}{\ac{bs}}
$q$. Lower values of $E_{\mathrm{e}}(k)$ reflect higher accuracy
in the energy allocation of the predictions. The third metric is the
\foreignlanguage{american}{\ac{mae}} of the predicted maximum beam
\foreignlanguage{american}{\ac{csi}}, defined as $E_{\mathrm{a}}=\frac{1}{NQ}\sum_{n=1}^{N}\sum_{q=1}^{Q}|\nu_{n,q}-\hat{\nu}_{n,q}|$
where $\hat{\nu}_{n,q}$ and $\nu_{n,q}$ denote the predicted and
real \foreignlanguage{american}{\ac{csi}} measurements of the strongest
beam, respectively for \foreignlanguage{american}{\ac{bs}} $q$.

\selectlanguage{american}%
Figure \ref{fig:Per_CSI_Prediction_RSRP} presents the \ac{csi} beam
\ac{rsrp} prediction performance of the proposed radio map-assisted
method in comparison with the baseline methods. The proposed radio
map-assisted method consistently demonstrates superior performance
over the baselines. As the parameter $k$ increases, both $E_{\mathrm{q}}(k)$
and \foreignlanguage{english}{$E_{\mathrm{e}}(k)$} decrease, indicating
enhanced prediction accuracy. The proposed radio map-assisted method
achieves average energy deviation errors of 21\% and average quantity
deviation error of 31\% for predicting the eight strongest \ac{rsrp}
of 32 \ac{csi} beams. \foreignlanguage{english}{Table \ref{tab:Performance-fmaxRSRP}
presents the $E_{\mathrm{a}}$ performance for CSI beam maximum }\ac{rsrp}\foreignlanguage{english}{
prediction. The proposed method achieves a prediction error of less
than 4.44 dB.}

\begin{table}
\centering{}\caption{Performance for CSI beam maximum RSRP prediction.\label{tab:Performance-fmaxRSRP}}
\begin{tabular}{c|ccccc}
\hline 
MAE (dB) & MI & AR & CNN & LSTM & Proposed\tabularnewline
\hline 
\selectlanguage{english}%
$E_{\mathrm{a}}$\selectlanguage{american}%
 & 11.16 & 8.93 & 8.80 & 7.49 & 4.44\tabularnewline
\hline 
\end{tabular}
\end{table}

Table \ref{tab:Perfor-SINR-RSSI} displays the prediction performance
for \ac{csi} beam \ac{sinr} and \ac{rssi}. The proposed radio map-assisted
method always outperforms the comparison methods. Specifically, the
proposed radio map-assisted method achieves average energy deviation
errors of 28\%, 27\% and and average quantity deviation error of 34\%,
35\% for predicting the eight strongest \ac{rssi}\foreignlanguage{english}{
and }\ac{sinr} beams of \ac{csi}, respectively. For the strongest
$k=16$ beams prediction, the $E_{\mathrm{q}}$ of the proposed radio
map-assisted method is only 0.13 and 0.16 for \ac{sinr} and \ac{rssi},
respectively, which are reductions of 0.07 and 0.09 compared to the
LSTM method. Additionally, the $E_{\mathrm{e}}(16)$ of the proposed
method is 0.10 and 0.12 for \ac{sinr} and \ac{rssi}, respectively,
which are decreases of 0.11 and 0.10 relative to the LSTM method.

We investigate the effect of $L$ on the \ac{csi} prediction performance.
As shown in Figure \ref{fig:Per-Various-L}, as $L$ increases, the
proposed radio map-assisted method exhibits a decreasing trend in
$E_{\mathrm{e}}(1)$ and $E_{\mathrm{q}}(1)$. This is because incorporating
more historical \ac{rsrp} measurements enables more accurate estimation
of the current location, resulting in improved \ac{csi} prediction
accuracy. Furthermore, the proposed radio map-assisted method consistently
outperforms the comparisons.

\begin{figure}[t]
\begin{centering}
\includegraphics[width=1\columnwidth]{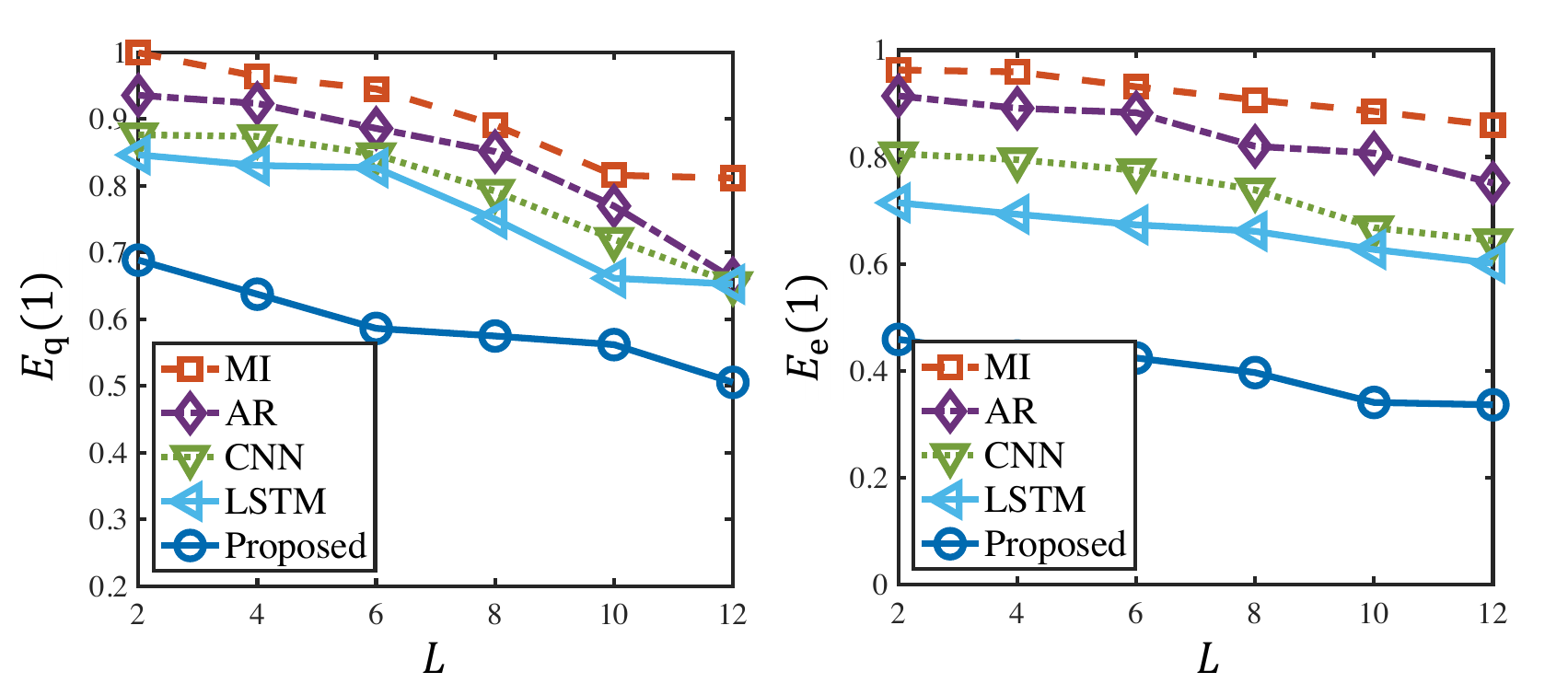}\vspace{-0.1in}
\par\end{centering}
\caption{Performance of \ac{csi} beam \ac{rsrp} prediction with different
$L$. \label{fig:Per-Various-L}}
\vspace{-0.1in}
\end{figure}

\begin{table}[t]
\begin{centering}
\caption{Performance (\foreignlanguage{english}{$E_{\mathrm{q}}(8)$}) of \ac{csi}
beam \ac{rsrp} prediction on different testing trajectories in the
real \ac{mimo} dataset. \label{tab:Perfo-differ-traj}}
\par\end{centering}
\centering{}%
\begin{tabular}{>{\raggedright}p{1.6cm}|>{\centering}p{1.1cm}>{\centering}p{1.1cm}>{\centering}p{1.1cm}>{\centering}p{1.1cm}}
\hline 
 & III-2 & III-3 & III-4 & III-5\tabularnewline
\hline 
\foreignlanguage{english}{MI} & 0.51 & 0.61 & 0.81 & 0.46\tabularnewline
AR & \centering{}0.42 & \centering{}0.51 & \centering{}0.68 & \centering{}0.36\tabularnewline
CNN & \centering{}0.38 & \centering{}0.45 & \centering{}0.60 & \centering{}0.32\tabularnewline
LSTM & \centering{}0.41 & \centering{}0.49 & \centering{}0.66 & \centering{}0.35\tabularnewline
Proposed & \centering{}0.18 & \centering{}0.35 & \centering{}0.44 & \centering{}0.16\tabularnewline
\hline 
\end{tabular}\vspace{-0.1in}
\end{table}

We investigate the performance of the proposed radio map-assisted
method on different testing trajectories as shown in Table \ref{tab:Perfo-differ-traj}.
First, we compare the performance on Trajectory III-5 with that on
Trajectories III-2 to III-4. The proposed radio map-assisted method,
along with the baseline methods, achieves the best performance on
Trajectory III-5. This superior performance is attributed to the fact
that Trajectory III-5 is included in the training set, resulting in
prediction errors that are closely aligned with the training error.
Second, we assess the performance on Trajectory III-2 in comparison
with Trajectories III-3 and III-4. The proposed radio map-assisted
method and the baseline methods demonstrate better performance on
Trajectory III-2. The only difference between Trajectory III-2 and
the training trajectory is the direction of movement, with Trajectory
III-2 moving in the opposite direction. Although reversing the direction
introduces some increase in prediction error, the increase remains
limited. Third, we examine the performance on Trajectories III-3 and
III-4 relative to Trajectories III-2 and III-5. We observe that doubling
the speed results in an increase in prediction error significantly.

\section{Conclusion}

\label{sec:Conclusion}

This paper presents a method for constructing an angular power map
without the need for location labels. We begin by proposing a mobility
model for the mobile user and modeling the signal propagation of each
beam. Subsequently, we introduce a novel \ac{hmm}-based \ac{rsrp}
embedding technique to recover the data collection trajectory of \ac{csi}
sequences in massive \ac{mimo} networks. As a result, an angular
power map is constructed without requiring calibration efforts. We
establish theoretical results demonstrating that under uniform rectilinear
mobility and \ac{ppp} \acpl{bs}, the \ac{crlb} of the localization
error can asymptotically approach zero at any \ac{snr}. Moreover,
if the \ac{bs} are deployed only within a limited region, the localization
error cannot approach zero even with an infinite amount of independent
measurement data. Experiments conducted in a real commercial 5G network
confirm the effectiveness of our method, achieving a mean localization
error below 18 meters based on sparse \ac{ssb} \ac{rsrp} measurements.

\appendices{}

\section{Proof of Theorem \ref{thm:LB-F-x}}

\label{sec:app-theo1}

From (\ref{eq:F-Txtil}), the FIM $\mathbf{F}_{T,x}$, as the upper
diagonal block in $\mathbf{F}_{T,\psi}$, can be expressed as
\begin{align}
\mathbf{F}_{T,x} & =\sum_{t=1}^{T}\sum_{q=1}^{Q}\frac{\alpha_{q}^{2}}{\sigma_{q}^{2}d_{t,q}^{4}(\mathbf{x},\mathbf{v})}(\bm{l}_{q}(\mathbf{x})+t\mathbf{v})(\bm{l}_{q}(\mathbf{x})+t\mathbf{v})^{\mathrm{T}}.\label{eq:F-Tv}
\end{align}

In the following text, we simplify the notation by writing $\bm{l}_{q}(\mathbf{x})$
as $\bm{l}_{q}$ and $d_{t,q}(\mathbf{x},\mathbf{v})$ as $d_{t,q}$.
Denoting $\alpha_{\max}^{2}=\max_{q}\{\alpha_{q}^{2}\}\geq\alpha_{q}^{2}$,
$\sigma_{\min}^{2}=\min_{q}\{\sigma_{q}^{2}\}\leq\sigma_{q}^{2}$,
we have
\begin{align}
\mathbf{F}_{T,x} & \preceq\frac{\alpha_{\max}^{2}}{\sigma_{\min}^{2}}\sum_{t=1}^{T}\sum_{q=1}^{Q}\frac{1}{d_{t,q}^{4}}(\bm{l}_{q}+t\mathbf{v})(\bm{l}_{q}+t\mathbf{v})^{\mathrm{T}}\label{eq:UB-F-Tx}\\
 & =C_{0}\bigg[\sum_{q=1}^{Q}s_{T,q}^{(0)}\bm{l}_{q}\bm{l}_{q}^{\mathrm{T}}+\sum_{q=1}^{Q}s_{T,q}^{(1)}(\bm{l}_{q}\mathbf{v}^{\mathrm{T}}+\mathbf{v}\bm{l}_{q}^{\mathrm{T}})\nonumber \\
 & \qquad+\sum_{q=1}^{Q}s_{T,q}^{(2)}\mathbf{v}\mathbf{v}^{\mathrm{T}}\bigg]=C_{0}\mathbf{A}_{T,x}\nonumber 
\end{align}
where $C_{0}=\frac{\alpha_{\max}^{2}}{\sigma_{\min}^{2}}$, $s_{T,q}^{(n)}=\sum_{t=1}^{T}\frac{t^{n}}{d_{t,q}^{4}}$,
and 
\begin{equation}
\mathbf{A}_{T,x}=\sum_{q=1}^{Q}s_{T,q}^{(0)}\bm{l}_{q}\bm{l}_{q}^{\mathrm{T}}+\sum_{q=1}^{Q}s_{T,q}^{(1)}(\bm{l}_{q}\mathbf{v}^{\mathrm{T}}+\mathbf{v}\bm{l}_{q}^{\mathrm{T}})+\sum_{q=1}^{Q}s_{T,q}^{(2)}\mathbf{v}\mathbf{v}^{\mathrm{T}}\label{eq:A-Tx}
\end{equation}
and equality (\ref{eq:UB-F-Tx}) can be achieved when $\sigma_{\mathrm{\min}}^{2}=\sigma_{q}^{2}$
and $\alpha_{\mathrm{max}}^{2}=\alpha_{q}^{2}$.

It is observed that (\ref{eq:UB-F-Tx}) $\mathbf{A}_{T,x}\prec\mathbf{A}_{T+1,x}$
because each component $\sum_{q}^{Q}1/d_{t,q}^{4}(\bm{l}_{q}+t\mathbf{v})(\bm{l}_{q}+t\mathbf{v})^{\mathrm{T}}$
in (\ref{eq:UB-F-Tx}) is a positive definite matrix since at least
two vectors in $\{\bm{l}_{1},\bm{l}_{2},\dots,\bm{l}_{Q},\mathbf{v}\}$
are linear independent. Therefore, $\mathrm{tr}\{\mathbf{F}_{T,x}^{-1}\}\geq\bar{\Delta}_{T,x}\triangleq\mathrm{tr}\{(C_{0}\mathbf{A}_{T,x})^{-1}\}$
is strictly decreasing in $T$.
\begin{lem}
\label{lem:A-Tx-lam}Suppose $d_{\min,q}>0$. The sequence $s_{T,q}^{(n)}$
is bounded for $n<3$ and divergent as $s_{T,q}^{(n)}\rightarrow\infty$
as $T\rightarrow\infty$ for $n\geq3$. In addition, $s_{T,q}^{(n+1)}/s_{T,q}^{(n)}\rightarrow\infty$
as $T\rightarrow\infty$ for $n>3$.

\begin{proof}Recall $d_{t,q}=\|\bm{l}_{q}+t\mathbf{v}\|_{2}$ which
is asymptotically a linear function in $t$ for large $t$. Then,
there exits a sufficiently small $\rho_{1}>0$, such that $d_{t,q}>\rho_{1}t$
for all $t\geq1$. As a result,
\[
0<s_{T,q}^{(n)}=\sum_{t=1}^{T}\frac{t^{n}}{d_{t,q}^{4}}\leq\sum_{t=1}^{T}\frac{t^{n}}{(\rho_{1}t)^{4}}=\frac{1}{\rho_{1}^{4}}\sum_{t=1}^{T}\frac{1}{t^{4-n}}<\infty
\]
for $n<3$. This is because $\sum_{t=1}^{T}\frac{1}{t^{r}}$ is convergent
for all $r>1$.

In addition, there exists a large enough $\rho_{2}<\infty$, such
that $d_{t,q}<\rho_{2}t$ for all $t\geq1$. As a result,
\[
s_{T,q}^{(n)}=\sum_{t=1}^{T}\frac{t^{n}}{d_{t,q}^{4}}\geq\sum_{t=1}^{T}\frac{t^{n}}{(\rho_{2}t)^{4}}=\frac{1}{\rho_{2}^{4}}\sum_{t=1}^{T}\frac{1}{t^{4-n}}\rightarrow\infty
\]
as $T\rightarrow\infty$ if $4-n\leq1$, i.e., $n\geq3$.

Moreover, for $n\geq3$, we have
\[
\frac{s_{T,q}^{(n+1)}}{s_{T,q}^{(n)}}\geq\frac{\sum_{t=1}^{T}\frac{t^{n}}{(\rho_{2}t)^{4}}}{\sum_{t=1}^{T}\frac{t^{n}}{(\rho_{1}t)^{4}}}=\frac{\frac{1}{\rho_{2}^{4}}\sum_{t=1}^{T}\frac{1}{t^{4-n-1}}}{\frac{1}{\rho_{1}^{4}}\sum_{t=1}^{T}\frac{1}{t^{4-n}}}\rightarrow\infty
\]
as $T\rightarrow\infty$.\end{proof}
\end{lem}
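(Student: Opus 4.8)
The plan is to exploit the fact that the distance $d_{t,q}=\|\bm{l}_{q}+t\mathbf{v}\|_{2}$ grows linearly in $t$, so that each summand $t^{n}/d_{t,q}^{4}$ behaves like $t^{n-4}=t^{-(4-n)}$, and then reduce every assertion to the elementary convergence theory of the $p$-series $\sum_{t}1/t^{4-n}$. The first and central step is to establish two-sided linear bounds: there exist constants $0<\rho_{1}\le\rho_{2}<\infty$ such that
\[
\rho_{1}t\le d_{t,q}\le\rho_{2}t,\qquad\text{for all }t\ge1.
\]
Everything else then follows by term-by-term comparison.

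For the upper bound, the triangle inequality gives $d_{t,q}\le\|\bm{l}_{q}\|+t\|\mathbf{v}\|\le(\|\bm{l}_{q}\|+\|\mathbf{v}\|)t$ for $t\ge1$, so I may take $\rho_{2}=\|\bm{l}_{q}\|+\|\mathbf{v}\|$. The lower bound is the only delicate point: since $d_{t,q}/t\to\|\mathbf{v}\|>0$ as $t\to\infty$ and $d_{t,q}\ge d_{\min,q}>0$ for every $t$ by hypothesis, the quotient $d_{t,q}/t$ is uniformly bounded away from $0$ over the integer index $t\ge1$. Indeed, the tail $t\ge T_{0}$ satisfies $d_{t,q}/t\ge\|\mathbf{v}\|/2$, while the finitely many initial terms are each strictly positive because $d_{t,q}\ge d_{\min,q}$; taking $\rho_{1}=\inf_{t\ge1}d_{t,q}/t>0$ then works. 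This is precisely where the assumption $d_{\min,q}>0$ enters, and it relies on $\mathbf{v}\ne0$ so that the asymptotic slope is strictly positive.

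With the sandwich in hand, the boundedness/divergence dichotomy is immediate. Substituting the bounds yields
\[
\frac{1}{\rho_{2}^{4}}\sum_{t=1}^{T}\frac{1}{t^{4-n}}\le s_{T,q}^{(n)}\le\frac{1}{\rho_{1}^{4}}\sum_{t=1}^{T}\frac{1}{t^{4-n}}.
\]
For $n<3$ the exponent satisfies $4-n>1$, so the $p$-series converges and hence $s_{T,q}^{(n)}$ is bounded; for $n\ge3$ the exponent satisfies $4-n\le1$, so the series diverges and $s_{T,q}^{(n)}\to\infty$.

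For the ratio statement with $n>3$, I would lower-bound the numerator through $\rho_{2}$ and upper-bound the denominator through $\rho_{1}$, giving
\[
\frac{s_{T,q}^{(n+1)}}{s_{T,q}^{(n)}}\ge\frac{\rho_{1}^{4}}{\rho_{2}^{4}}\,\frac{\sum_{t=1}^{T}t^{\,n-3}}{\sum_{t=1}^{T}t^{\,n-4}}.
\]
Since $n>3$, both partial sums are of pure polynomial type, with $\sum_{t=1}^{T}t^{p}\sim T^{p+1}/(p+1)$; the numerator therefore grows one order faster than the denominator, so the ratio is of order $T$ and diverges. The main obstacle is the uniform positivity of $\rho_{1}$, that is, securing $\inf_{t\ge1}d_{t,q}/t>0$ rather than merely $d_{t,q}>0$ for each fixed $t$; once this is established, the remaining steps are routine $p$-series comparisons.
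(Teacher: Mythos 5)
Your proposal is correct and follows essentially the same route as the paper's proof: two-sided linear bounds $\rho_{1}t\le d_{t,q}\le\rho_{2}t$ followed by term-by-term comparison with the $p$-series $\sum_{t}1/t^{4-n}$, and the same cross-bounding of numerator and denominator for the ratio claim. Your only addition is to spell out rigorously why $\rho_{1}=\inf_{t\ge1}d_{t,q}/t>0$ exists (via the tail limit $d_{t,q}/t\to\|\mathbf{v}\|$ together with $d_{\min,q}>0$ on the finitely many initial terms), a step the paper simply asserts.
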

Using Lemma \ref{lem:A-Tx-lam}, since $s_{T,q}^{(n)}$ are bounded
for $n<3$ and $Q$ is finite, we have $\mathbf{A}_{T,x}$ bounded.
Thus, $\bar{\Delta}_{T,x}$ converges to a strictly positive number
as $T\rightarrow\infty$.

\section{Proof of Theorem \ref{thm:LB-F-v}}

\label{sec:app-theo2}

From (\ref{eq:F-Txtil}), the FIM $\mathbf{F}_{T,v}$, as the lower
diagonal block in $\mathbf{F}_{T,\psi}$ can be expressed as
\begin{align}
\mathbf{F}_{T,v} & =\sum_{t=1}^{T}\sum_{q=1}^{Q}\frac{\alpha_{q}^{2}t^{2}}{\sigma_{q}^{2}d_{t,q}^{4}}(\bm{l}_{q}+t\mathbf{v})(\bm{l}_{q}+t\mathbf{v})^{\mathrm{T}}.\label{eq:F-Tv-1}
\end{align}

Similar to (\ref{eq:UB-F-Tx}), we have,
\begin{align}
\mathbf{F}_{T,v} & \preceq\sum_{t,q}\frac{\alpha_{\mathrm{max}}^{2}}{\sigma_{\mathrm{min}}^{2}}\frac{t^{2}}{d_{t,q}^{4}}(\bm{l}_{q}+t\mathbf{v})(\bm{l}_{q}+t\mathbf{v})^{\mathrm{T}}\label{eq:UB-F-Tv}\\
 & =C_{0}\cdot\Big[\sum_{q=1}^{Q}s_{T,q}^{(2)}\bm{l}_{q}\bm{l}_{q}^{\mathrm{T}}+\sum_{q=1}^{Q}s_{T,q}^{(3)}(\bm{l}_{q}\mathbf{v}^{\mathrm{T}}+\mathbf{v}\bm{l}_{q}^{\mathrm{T}})\nonumber \\
 & \qquad+\sum_{q=1}^{Q}s_{T,q}^{(4)}\mathbf{v}\mathbf{v}^{\mathrm{T}}\Big]=C_{0}\mathbf{A}_{T,v}\nonumber 
\end{align}
where
\[
\mathbf{A}_{T,v}=\sum_{q=1}^{Q}s_{T,q}^{(2)}\bm{l}_{q}\bm{l}_{q}^{\mathrm{T}}+\sum_{q=1}^{Q}s_{T,q}^{(3)}(\bm{l}_{q}\mathbf{v}^{\mathrm{T}}+\mathbf{v}\bm{l}_{q}^{\mathrm{T}})+\sum_{q=1}^{Q}s_{T,q}^{(4)}\mathbf{v}\mathbf{v}^{\mathrm{T}}
\]
and the equality is achieved when $\alpha_{\mathrm{max}}^{2}=\alpha_{q}^{2}$,
$\sigma_{\mathrm{min}}^{2}=\sigma_{q}^{2}$.
\begin{lem}
\label{lem:A-Tv-lam}The eigenvalues of $\mathbf{A}_{T,v}$ satisfies
\[
\lambda_{\mathrm{min}}(\mathbf{A}_{T,v})\rightarrow\sum_{q=1}^{Q}s_{T,q}^{(2)}\mathbf{v}_{\perp}^{\mathrm{T}}\bm{l}_{q}\bm{l}_{q}^{\mathrm{T}}\mathbf{v}_{\perp}
\]
and $\lambda_{\mathrm{max}}(\mathbf{A}_{T,v})\rightarrow\sum_{q=1}^{Q}s_{T,q}^{(4)}\|\mathbf{v}\|^{2}$
as $T\rightarrow\infty$.
\end{lem}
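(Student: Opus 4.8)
The plan is to diagonalize the $2\times 2$ matrix $\mathbf{A}_{T,v}$ in the orthonormal basis $\{\mathbf{v}/\|\mathbf{v}\|,\mathbf{v}_{\perp}\}$ of $\mathbb{R}^2$, where $\mathbf{v}_{\perp}$ is the unit vector orthogonal to $\mathbf{v}$, and then read off the two eigenvalues from the asymptotic orders of the matrix entries. Writing $a_q=\bm{l}_q^{\mathrm{T}}\mathbf{v}/\|\mathbf{v}\|$ and $b_q=\bm{l}_q^{\mathrm{T}}\mathbf{v}_{\perp}$, I would first substitute $\mathbf{v}\mathbf{v}^{\mathrm{T}}=\|\mathbf{v}\|^2(\mathbf{v}/\|\mathbf{v}\|)(\mathbf{v}/\|\mathbf{v}\|)^{\mathrm{T}}$ together with the analogous expansions of $\bm{l}_q\bm{l}_q^{\mathrm{T}}$ and $\bm{l}_q\mathbf{v}^{\mathrm{T}}+\mathbf{v}\bm{l}_q^{\mathrm{T}}$ into the definition of $\mathbf{A}_{T,v}$. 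A direct computation gives its three entries in this basis: the $(\parallel,\parallel)$ entry equals $\sum_q(s_{T,q}^{(2)}a_q^2+2\|\mathbf{v}\|s_{T,q}^{(3)}a_q+\|\mathbf{v}\|^2 s_{T,q}^{(4)})$; the $(\perp,\perp)$ entry equals exactly $\sum_q s_{T,q}^{(2)}b_q^2=\sum_q s_{T,q}^{(2)}\mathbf{v}_{\perp}^{\mathrm{T}}\bm{l}_q\bm{l}_q^{\mathrm{T}}\mathbf{v}_{\perp}$, since the cross and $\mathbf{v}\mathbf{v}^{\mathrm{T}}$ contributions vanish under $\mathbf{v}_{\perp}^{\mathrm{T}}\mathbf{v}=0$; and the off-diagonal entry equals $\sum_q(s_{T,q}^{(2)}a_q b_q+\|\mathbf{v}\|s_{T,q}^{(3)}b_q)$.

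Next I would pin down the growth rates. Using the asymptotic linearity $d_{t,q}\sim\|\mathbf{v}\|t$ established in the proof of Lemma \ref{lem:A-Tx-lam}, the summands behave like $t^{n}/d_{t,q}^4\sim t^{n-4}/\|\mathbf{v}\|^4$, so that $s_{T,q}^{(2)}=\Theta(1)$ (convergent), $s_{T,q}^{(3)}=\Theta(\ln T)$, and $s_{T,q}^{(4)}=\Theta(T)$. Consequently the diagonal entries have sharply separated orders: the $(\parallel,\parallel)$ entry is dominated by its $s^{(4)}$ term and grows like $\|\mathbf{v}\|^2\sum_q s_{T,q}^{(4)}=\Theta(T)$, the $(\perp,\perp)$ entry stays bounded, and the off-diagonal entry is $O(\ln T)$.

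Finally I would invoke the closed-form eigenvalues of a symmetric matrix $\bigl(\begin{smallmatrix}a_T & c_T\\ c_T & b_T\end{smallmatrix}\bigr)$, namely $\tfrac{a_T+b_T}{2}\pm\sqrt{(\tfrac{a_T-b_T}{2})^2+c_T^2}$, and Taylor-expand the square root. With $a_T=\Theta(T)$, $b_T=\Theta(1)$, and $c_T=O(\ln T)$, the correction $c_T^2/(a_T-b_T)=O((\ln T)^2/T)\to 0$, which yields $\lambda_{\max}=a_T+o(1)\sim\|\mathbf{v}\|^2\sum_q s_{T,q}^{(4)}$ and $\lambda_{\min}=b_T-o(1)\to\sum_q s_{T,q}^{(2)}\mathbf{v}_{\perp}^{\mathrm{T}}\bm{l}_q\bm{l}_q^{\mathrm{T}}\mathbf{v}_{\perp}$, as claimed. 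The main obstacle, and the only nontrivial step, is verifying that the off-diagonal coupling is asymptotically negligible relative to the diagonal gap, i.e.\ that $c_T^2/(a_T-b_T)$ vanishes, since this is exactly what decouples the two directions and forces each eigenvalue onto the corresponding diagonal entry; everything else is the bookkeeping of orders supplied by Lemma \ref{lem:A-Tx-lam}.
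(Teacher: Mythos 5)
Your proposal is correct and follows the same underlying idea as the paper: the $\sum_q s_{T,q}^{(4)}\mathbf{v}\mathbf{v}^{\mathrm{T}}$ term dominates, so the eigenvectors asymptotically align with $\mathbf{v}/\|\mathbf{v}\|$ and $\mathbf{v}_{\perp}$, and the two eigenvalues are read off as the corresponding Rayleigh quotients. The difference is in rigor rather than in route. The paper argues dominance qualitatively (an ``$\approx$'' in the maximization defining $\lambda_{\mathrm{max}}$), then simply \emph{evaluates} $\mathbf{v}_{\perp}^{\mathrm{T}}\mathbf{A}_{T,v}\mathbf{v}_{\perp}$ and presents it as $\lambda_{\mathrm{min}}$; strictly speaking that quadratic form is only an upper bound on $\lambda_{\mathrm{min}}$ unless $\mathbf{v}_{\perp}$ is exactly an eigenvector, which it is not at finite $T$. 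You close precisely this gap: by writing the matrix in the $\{\mathbf{v}/\|\mathbf{v}\|,\mathbf{v}_{\perp}\}$ basis, establishing the orders $s_{T,q}^{(2)}=\Theta(1)$, $s_{T,q}^{(3)}=\Theta(\ln T)$, $s_{T,q}^{(4)}=\Theta(T)$ (all consistent with Lemma~\ref{lem:A-Tx-lam}), and expanding the exact $2\times2$ eigenvalue formula, you show the off-diagonal coupling contributes only $O((\ln T)^{2}/T)$, so each eigenvalue collapses onto its diagonal entry. Your entry computations are all correct (in particular the $(\perp,\perp)$ entry is exactly $\sum_q s_{T,q}^{(2)}\mathbf{v}_{\perp}^{\mathrm{T}}\bm{l}_q\bm{l}_q^{\mathrm{T}}\mathbf{v}_{\perp}$ and the $s^{(4)}$ term vanishes there), and the interpretation of the $\lambda_{\mathrm{max}}$ limit as asymptotic equivalence of two diverging quantities is handled properly via the $O(\ln T)$ discrepancy being negligible against $\Theta(T)$. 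In short, your argument is a strictly tighter version of the paper's; what it buys is an explicit rate for the perturbation that the paper leaves implicit.
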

\begin{proof}Using Lemma \ref{lem:A-Tx-lam}, $s_{T,q}^{(2)}$ is
bounded, and $s_{T,q}^{(3)}$ and $s_{T,q}^{(4)}$ are divergent,
where $s_{T,q}^{(4)}$ dominates $s_{T,q}^{(3)}$ for asymptotically
large $T$. Therefore, the term $\sum_{q=1}^{Q}s_{t,q}^{(4)}\mathbf{v}\mathbf{v}^{T}$
dominates $\mathbf{A}_{T,v}$ for a sufficiently large $T$. Thus,
for a sufficiently large $T$, the larger eigenvalue satisfies
\begin{align}
\lambda_{\mathrm{max}}(\mathbf{A}_{T,v}) & =\underset{\|\mathbf{u}\|=1}{\mathrm{max}}\:\mathbf{u}^{\mathrm{T}}\mathbf{A}_{T,v}\mathbf{u}\nonumber \\
 & =\underset{\|\mathbf{u}\|=1}{\mathrm{max}}\:\mathbf{u}^{\mathrm{T}}\Big[\sum_{q=1}^{Q}s_{T,q}^{(2)}\bm{l}_{q}\bm{l}_{q}^{\mathrm{T}}\nonumber \\
 & \qquad+\sum_{q=1}^{Q}s_{T,q}^{(3)}(\bm{l}_{q}\mathbf{v}^{\mathrm{T}}+\mathbf{v}\bm{l}_{q}^{\mathrm{T}})+\sum_{q=1}^{Q}s_{T,q}^{(4)}\mathbf{v}\mathbf{v}^{\mathrm{T}}\Big]\mathbf{u}\nonumber \\
 & \approx\underset{\|\mathbf{u}\|=1}{\mathrm{max}}\:\sum_{q=1}^{Q}s_{T,q}^{(4)}\cdot\mathbf{u}^{\mathrm{T}}(\mathbf{v}\mathbf{v}^{\mathrm{T}})\mathbf{u}\label{eq:prob-uv}
\end{align}
where the solution to (\ref{eq:prob-uv}) is $\mathbf{u}=\mathbf{v}/\|\mathbf{v}\|$,
and $\lambda_{\mathrm{max}}(\mathbf{A}_{T,v})\rightarrow\sum_{q=1}^{Q}s_{T,q}^{(4)}\|\mathbf{v}\|^{2}$.
As a result, asymptotically, the larger eigenvector is $\mathbf{v}/\|\mathbf{v}\|\in\mathbb{R}^{2}$,
and hence, the smaller eigenvector is denoted as $\mathbf{v}_{\perp}$,
which satisfies $\mathbf{v}_{\perp}^{\mathrm{T}}\mathbf{v}=0$.

Consequentially, we have
\begin{align*}
\lambda_{\mathrm{min}}(\mathbf{A}_{T,v}) & =\mathbf{v}_{\perp}^{\mathrm{T}}\Big[\sum_{q=1}^{Q}s_{T,q}^{(2)}\bm{l}_{q}\bm{l}_{q}^{\mathrm{T}}+\sum_{q=1}^{Q}s_{T,q}^{(3)}(\bm{l}_{q}\mathbf{v}^{\mathrm{T}}+\mathbf{v}\bm{l}_{q}^{\mathrm{T}})\\
 & \qquad\qquad+\sum_{q=1}^{Q}s_{T,q}^{(4)}\mathbf{v}\mathbf{v}^{\mathrm{T}}\Big]\mathbf{v}_{\perp}\\
 & =\mathbf{v}_{\perp}^{\mathrm{T}}\sum_{q=1}^{Q}s_{T,q}^{(2)}\bm{l}_{q}\bm{l}_{q}^{\mathrm{T}}\mathbf{v}_{\perp}+\mathbf{v}_{\perp}^{\mathrm{T}}\Big(\sum_{q=1}^{Q}s_{T,q}^{(3)}(\bm{l}_{q}\mathbf{v}^{\mathrm{T}}\\
 & \qquad\qquad+\mathbf{v}\bm{l}_{q}^{\mathrm{T}})\Big)\mathbf{v}_{\perp}+\mathbf{v}_{\perp}^{\mathrm{T}}\sum_{q=1}^{Q}s_{T,q}^{(4)}\mathbf{v}\mathbf{v}^{\mathrm{T}}\mathbf{v}_{\perp}\\
 & =\sum_{q=1}^{Q}s_{T,q}^{(2)}\mathbf{v}_{\perp}^{\mathrm{T}}\bm{l}_{q}\bm{l}_{q}^{\mathrm{T}}\mathbf{v}_{\perp}
\end{align*}
\end{proof}

From $\mathbf{F}_{T,v}\preceq C_{0}\mathbf{A}_{T,v}$, since both
$\mathbf{F}_{T,\mathbf{v}}$ and $\mathbf{A}_{T,\mathbf{v}}$ are
P.S.D., we have
\begin{equation}
\lambda_{\text{min}}(\mathbf{F}_{T,v})\leq C_{0}\lambda_{\text{min}}(\mathbf{A}_{T,v}),\lambda_{\text{max}}(\mathbf{F}_{T,v})\leq C_{0}\lambda_{\text{max}}(\mathbf{A}_{T,v}).\label{eq:lam-min-ineq}
\end{equation}

Denoting the \ac{evd} of $\mathbf{F}_{T,\mathbf{v}}$ as $\mathbf{F}_{T,\mathbf{v}}=\mathbf{u}_{T,v}\bm{\varLambda}_{T,v}\mathbf{u}_{T,v}^{-1}$,
where
\[
\bm{\varLambda}_{T,v}=\left[\begin{array}{cc}
\lambda_{\mathrm{max}}(\mathbf{A}_{T,\mathbf{v}}) & 0\\
0 & \lambda_{\mathrm{min}}(\mathbf{A}_{T,\mathbf{v}})
\end{array}\right],
\]
 we have
\begin{align}
\mathrm{tr}\{\mathbf{F}_{T,v}^{-1}\} & =\mathrm{tr}\{(\mathbf{u}_{T,v}\bm{\varLambda}_{T,v}\mathbf{u}_{T,v}^{-1})^{-1}\}=\mathrm{tr}\{\mathbf{u}_{T,v}\bm{\varLambda}_{T,v}^{-1}\mathbf{u}_{T,v}^{-1}\}\nonumber \\
 & =\lambda_{\mathrm{max}}^{-1}(\mathbf{F}_{T,v})+\lambda_{\mathrm{min}}^{-1}(\mathbf{F}_{T,v})\nonumber \\
 & \geq\frac{1}{C_{0}}\lambda_{\mathrm{max}}^{-1}(\mathbf{A}_{T,v})+\frac{1}{C_{0}}\lambda_{\mathrm{min}}^{-1}(\mathbf{A}_{T,v})\label{eq:tfF-ineq1}\\
 & \geq\frac{1}{C_{0}}\lambda_{\mathrm{min}}^{-1}(\mathbf{A}_{T,v})\triangleq\bar{\Delta}_{T,v}\label{eq:tfF-ineq2}
\end{align}
where (\ref{eq:tfF-ineq1}) is due to (\ref{eq:lam-min-ineq}) with
equality achieved when $\alpha_{\mathrm{max}}^{2}=\alpha_{q}^{2}$,
$\sigma_{\mathrm{min}}^{2}=\sigma_{q}^{2}$ and (\ref{eq:tfF-ineq2})
is due to the fact that $C_{0}\lambda_{\mathrm{max}}(\mathbf{A}_{T,\mathbf{v}})>0$
and equality can be asymptotically achieved at large $T$ as $\lambda_{\mathrm{\max}}^{-1}(\mathbf{A}_{T,v})\rightarrow1/(\sum_{q=1}^{Q}s_{T,q}^{(4)}\|\mathbf{v}\|^{2})$
which converges to zero.

Using Lemma \ref{lem:A-Tv-lam}, as $T\rightarrow\infty$, we have
\[
\bar{\Delta}_{T,v}\rightarrow C_{v}=\Bigg(C_{0}\sum_{q=1}^{Q}s_{T,q}^{(2)}\mathbf{v}_{\perp}^{\mathrm{T}}\bm{l}_{q}\bm{l}_{q}^{\mathrm{T}}\mathbf{v}_{\perp}\Bigg)^{-1},
\]
which is strictly positive. Since the orthogonal projection $\mathbf{v}_{\perp}^{\mathrm{T}}\bm{l}_{q}=\bm{l}_{q}-(\bm{l}_{q}^{\mathrm{T}}\mathbf{v}/\|\mathbf{v}\|^{2})\mathbf{v}=(\mathbf{I}-\mathbf{v}\mathbf{v}^{\mathrm{T}}/\|\mathbf{v}\|^{2})\bm{l}_{q}$,
we have
\begin{align*}
 & \sum_{q=1}^{Q}s_{T,q}^{(2)}\mathbf{v}_{\perp}^{\mathrm{T}}\bm{l}_{q}\bm{l}_{q}^{\mathrm{T}}\mathbf{v}_{\perp}=\sum_{q=1}^{Q}s_{T,q}^{(2)}\|\bm{l}_{q}^{\mathrm{T}}\mathbf{v}_{\bot}\|^{2}\\
 & =\sum_{q=1}^{Q}s_{T,q}^{(2)}\|(\mathbf{I}-\mathbf{v}\mathbf{v}^{\mathrm{T}}/\|\mathbf{v}\|^{2})\bm{l}_{q}\|^{2}=\sum_{q=1}^{Q}s_{T,q}^{(2)}\|\mathbf{P}_{v}^{\bot}\bm{l}_{q}\|^{2}
\end{align*}
where $\mathbf{P}_{v}^{\bot}=\mathbf{I}-\mathbf{vv^{T}/\|v}\|^{2}$
is orthogonal projector, and $\mathbf{P}_{v}^{\bot}\bm{l}_{q}$ is
to project the vector $\bm{l}_{q}$ onto the null space spanned by
$\mathbf{v}_{\bot}$ of $\mathbf{v}$.

Thus, we have $C_{v}=\left(C_{0}\sum_{q=1}^{Q}s_{T,q}^{(2)}\|\mathbf{P}_{v}^{\bot}\bm{l}_{q}\|^{2}\right)^{-1}$,
where $s_{T,q}^{(2)}$ is bounded as stated in Lemma \ref{lem:A-Tx-lam}.
Recall $\rho>0$ is sufficiently small such that $d_{t,q}(\mathbf{x},\mathbf{v})>\rho t$
for all $t\geq1$, we have
\begin{align*}
s_{\infty,q}^{(2)} & =\lim_{T\to\infty}\sum_{t=1}^{T}\frac{t^{2}}{d_{t,q}^{4}}<\lim_{T\to\infty}\sum_{t=1}^{T}\frac{t^{2}}{(\rho t)^{4}}\\
 & =\frac{1}{\rho^{4}}\lim_{T\to\infty}\sum_{t=1}^{T}\frac{1}{t^{2}}\approx\frac{\pi^{2}}{6\rho^{4}}.
\end{align*}
Thus, the element $s_{\infty,q}^{(2)}$ is upper bounded by $\frac{\pi^{2}}{6\rho^{4}}$.

\section{Proof of Theorem \ref{thm:LB-F-x-un}}

\label{sec:app-theo3}

Denote $\mathcal{Q}_{t}=\{q|d_{t,q}\leq R\}$ as the set of \acpl{bs}
that are within a range of $R$ from the mobile user at time slot
$t$. Based on the FIM $\mathbf{F}_{T,\psi}$ in (\ref{eq:F-Tv}),
we have
\begin{align}
 & \mathbf{F}_{T,x}\nonumber \\
 & \succeq\frac{\alpha_{\min}^{2}}{\sigma_{\max}^{2}}\sum_{t=1}^{T}\mathbb{E}\bigg\{\sum_{q\in\mathcal{Q}_{t}}\frac{1}{d_{t,q}^{4}}(\bm{l}_{q}+t\mathbf{v})(\bm{l}_{q}+t\mathbf{v})^{\mathrm{T}}\Bigg\}\label{eq:ineq-F-tx}\\
 & =\tilde{C}_{0}\sum_{t=1}^{T}\mathbb{E}\bigg\{\sum_{q\in\mathcal{Q}_{t}}\left[\frac{\bm{l}_{q}\bm{l}_{q}^{\mathrm{T}}}{d_{t,q}^{4}}+\frac{t(\bm{l}_{q}\mathbf{v}^{\mathrm{T}}+\mathbf{v}\bm{l}_{q}^{\mathrm{T}})}{d_{t,q}^{4}}+\frac{t^{2}\mathbf{v}\mathbf{v}^{\mathrm{T}}}{d_{t,q}^{4}}\right]\Bigg\}\nonumber \\
 & =\tilde{C}_{0}\tilde{\mathbf{A}}_{T,x}\nonumber 
\end{align}
where $\tilde{C}_{0}=\frac{\alpha_{\min}^{2}}{\sigma_{\max}^{2}}$
and
\begin{align}
\tilde{\mathbf{A}}_{T,x} & =\sum_{t=1}^{T}\mathbb{E}\Bigg\{\sum_{q\in\mathcal{Q}_{t}}\left[\frac{\bm{l}_{q}\bm{l}_{q}^{\mathrm{T}}}{d_{t,q}^{4}}+\frac{t(\bm{l}_{q}\mathbf{v}^{\mathrm{T}}+\mathbf{v}\bm{l}_{q}^{\mathrm{T}})}{d_{t,q}^{4}}+\frac{t^{2}\mathbf{v}\mathbf{v}^{\mathrm{T}}}{d_{t,q}^{4}}\right]\Bigg\}\nonumber \\
 & =\sum_{t=1}^{T}\mathbb{E}\Bigg\{\sum_{q\in\mathcal{Q}_{t}}\frac{\bm{l}_{q}\bm{l}_{q}^{\mathrm{T}}}{d_{t,q}^{4}}\Bigg\}+\sum_{t=1}^{T}t\mathbb{E}\Bigg\{\sum_{q\in\mathcal{Q}_{t}}\frac{(\bm{l}_{q}\mathbf{v}^{\mathrm{T}}+\mathbf{v}\bm{l}_{q}^{\mathrm{T}})}{d_{t,q}^{4}}\Bigg\}\nonumber \\
 & \quad\quad+\sum_{t=1}^{T}t^{2}\mathbb{E}\Bigg\{\sum_{q\in\mathcal{Q}_{t}}\frac{\mathbf{v}\mathbf{v}^{\mathrm{T}}}{d_{t,q}^{4}}\Bigg\}.\label{eq:A-tilde-T-x}
\end{align}
The equality in (\ref{eq:ineq-F-tx}) can be achieved when $\sigma_{\mathrm{\max}}^{2}=\sigma_{q}^{2}$
and $\alpha_{\min}^{2}=\alpha_{q}^{2}$.

Since $\mathbf{F}_{T,x}$ and $\tilde{\mathbf{A}}_{T,x}$ are $2\times2$
symmetric and positive semi-definite, their eigenvalues are real and
non-negative. From $\mathbf{F}_{T,x}\succeq\tilde{C}_{0}\tilde{\mathbf{A}}_{T,x}$,
we have
\[
\lambda_{\text{min}}(\mathbf{F}_{T,x})\geq\tilde{C}_{0}\lambda_{\text{min}}(\tilde{\mathbf{A}}_{T,x}),\lambda_{\text{max}}(\mathbf{F}_{T,x})\geq\tilde{C}_{0}\lambda_{\text{max}}(\tilde{\mathbf{A}}_{T,x}).
\]

Since $\mathrm{tr}\{\mathbf{F}_{T,x}^{-1}\}=\lambda_{\mathrm{max}}^{-1}(\mathbf{F}_{T,x})+\lambda_{\mathrm{min}}^{-1}(\mathbf{F}_{T,x})$,
we have
\begin{equation}
\mathrm{tr}\{\mathbf{F}_{T,x}^{-1}\}\leq2\lambda_{\mathrm{min}}^{-1}(\mathbf{F}_{T,x})\leq2\left(\tilde{C}_{0}\lambda_{\mathrm{min}}(\tilde{\mathbf{A}}_{T,x})\right)^{-1}\triangleq\tilde{\Delta}_{T,x}.\label{eq:tr-F-ineq}
\end{equation}

\begin{lem}
\label{lem:A-Tx-lam-1}Assume that $d_{t,q}\geq r_{0}$ \textup{for
all $t$ and $q$.}. The eigenvalue of $\tilde{\mathbf{A}}_{T,x}$
satisfies
\[
\frac{1}{T}\lambda_{\mathrm{min}}(\tilde{\mathbf{A}}_{T,x})\rightarrow\kappa\pi\ln(R/r_{0})
\]
as $T\to\infty$\foreignlanguage{english}{.}
\end{lem}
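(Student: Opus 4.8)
The plan is to recognize that the three bracketed groups in the definition (\ref{eq:A-tilde-T-x}) of $\tilde{\mathbf{A}}_{T,x}$ are merely the expansion of a single rank-one form. By linearity of expectation the $(t,q)$ summand recombines as $\bm{l}_q\bm{l}_q^{\mathrm{T}}+t(\bm{l}_q\mathbf{v}^{\mathrm{T}}+\mathbf{v}\bm{l}_q^{\mathrm{T}})+t^2\mathbf{v}\mathbf{v}^{\mathrm{T}}=(\bm{l}_q+t\mathbf{v})(\bm{l}_q+t\mathbf{v})^{\mathrm{T}}$, and since $\bm{l}_q+t\mathbf{v}=\mathbf{x}_t-\mathbf{o}_q$ has norm $d_{t,q}$, I would write $\tilde{\mathbf{A}}_{T,x}=\sum_{t=1}^{T}\mathbb{E}\{\sum_{q\in\mathcal{Q}_t}\mathbf{w}_{t,q}\mathbf{w}_{t,q}^{\mathrm{T}}/d_{t,q}^4\}$ with $\mathbf{w}_{t,q}\triangleq\mathbf{x}_t-\mathbf{o}_q$. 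The payoff is that the explicit $t$ and $t^2$ weights---which individually diverge---cancel, leaving each per-slot expectation dependent on $t$ only through the local, translation-invariant geometry of the BS field seen from $\mathbf{x}_t$.

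Next I would evaluate the per-slot expectation over the PPP. Writing $\mathbf{w}_{t,q}\mathbf{w}_{t,q}^{\mathrm{T}}/d_{t,q}^4=\hat{\mathbf{w}}_{t,q}\hat{\mathbf{w}}_{t,q}^{\mathrm{T}}/d_{t,q}^2$, Campbell's theorem converts the expected sum over the in-range stations $\mathcal{Q}_t$ into the deterministic integral $\kappa\int_{r_0\le\|\mathbf{w}\|\le R}\mathbf{w}\mathbf{w}^{\mathrm{T}}/\|\mathbf{w}\|^4\,d\mathbf{w}$, where the inner radius $r_0$ is exactly the floor supplied by the standing assumption $d_{t,q}\ge r_0$ (without it the radial integral diverges at the origin). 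Passing to polar coordinates, the angular integral of the rank-one outer product over $\theta\in[0,2\pi)$ equals $\pi\mathbf{I}$---the off-diagonal entries vanish by symmetry and each diagonal entry is $\pi$---while the radial factor gives $\int_{r_0}^{R}r^{-1}\,dr=\ln(R/r_0)$. Hence the per-slot expectation is the isotropic matrix $\kappa\pi\ln(R/r_0)\mathbf{I}$, the same for every $t$ by homogeneity of the PPP, with no edge effects because the deployment region is unbounded.

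Summing over $t=1,\dots,T$ then gives $\tilde{\mathbf{A}}_{T,x}=T\kappa\pi\ln(R/r_0)\mathbf{I}$, so its two eigenvalues coincide and $\tfrac{1}{T}\lambda_{\min}(\tilde{\mathbf{A}}_{T,x})=\kappa\pi\ln(R/r_0)$, establishing the lemma (the stated limit being attained exactly). Substituting $\lambda_{\min}(\tilde{\mathbf{A}}_{T,x})=T\kappa\pi\ln(R/r_0)$ and $\tilde{C}_0=\alpha_{\min}^2/\sigma_{\max}^2$ into (\ref{eq:tr-F-ineq}) then reproduces the $\mathcal{O}(1/T)$ statement of Theorem \ref{thm:LB-F-x-un}. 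I expect the main obstacle to be conceptual rather than computational: one must first spot the recombination that annihilates the growing $t$-powers, and then justify replacing the random in-range sum by its Campbell integral over the annulus with the correct $r_0$ cutoff. The remaining care is in the isotropy step---checking that the cross term $\bm{l}_q\mathbf{v}^{\mathrm{T}}+\mathbf{v}\bm{l}_q^{\mathrm{T}}$ integrates to zero---which holds precisely because, after recombination, the integrand carries no preferred direction relative to $\mathbf{v}$; once this is set up, the polar integration is routine.
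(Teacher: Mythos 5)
Your proof is correct, and it takes a genuinely different route from the paper's. The paper keeps the three groups in $\tilde{\mathbf{A}}_{T,x}$ separate, argues that the $\sum_t t^2\,\mathbb{E}\{\sum_q \mathbf{v}\mathbf{v}^{\mathrm{T}}/d_{t,q}^4\}$ term dominates so that the large eigenvector asymptotically aligns with $\mathbf{v}/\|\mathbf{v}\|$, identifies the small eigenvector as $\mathbf{v}_\perp$, and then evaluates the quadratic form $\mathbf{v}_\perp^{\mathrm{T}}\tilde{\mathbf{A}}_{T,x}\mathbf{v}_\perp=\sum_t\mathbb{E}\{\sum_q\|\mathbf{P}_v^\perp\bm{l}_q\|^2/d_{t,q}^4\}$ by the same disk-average-of-a-Poisson-count computation you perform, arriving at $\kappa\pi\ln(R/r_0)$ per slot. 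You instead recombine the rank-one form first, so that $\bm{l}_q+t\mathbf{v}=\mathbf{x}_t-\mathbf{o}_q$ has norm exactly $d_{t,q}$, and Campbell's theorem over the annulus $r_0\le\|\mathbf{w}\|\le R$ gives the isotropic per-slot matrix $\kappa\pi\ln(R/r_0)\mathbf{I}$ directly; the limit is then attained exactly for every $T$ and no eigenvector identification is needed. Your route buys more than elegance: the paper's dominance claim is actually shaky, because for $q\in\mathcal{Q}_t$ one has $\bm{l}_q\approx -t\mathbf{v}+\mathcal{O}(R)$, so the first and cross terms also grow like $t^2$ in the $\mathbf{v}$-direction and it is precisely their cancellation (your recombination) that keeps $\tilde{\mathbf{A}}_{T,x}$ of order $T$; your argument makes this cancellation explicit and shows the matrix is isotropic, which is why the paper's quadratic form in $\mathbf{v}_\perp$ nonetheless returns the correct $\lambda_{\min}$. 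The only caveat, shared equally with the paper, is that the standing assumption $d_{t,q}\ge r_0$ is implemented as a per-slot exclusion disk of radius $r_0$ around $\mathbf{x}_t$ rather than as a conditioning of the PPP on a tube around the whole trajectory; within that modeling convention your per-slot integral, and hence the lemma, is exact.
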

\begin{proof}The term $\sum_{t=1}^{T}t^{2}\mathbb{E}\{\sum_{q\in\mathcal{Q}_{t}}\mathbf{v}\mathbf{v}^{\mathrm{T}}/d_{t,q}^{4}\}$
in (\ref{eq:A-tilde-T-x}) dominates $\tilde{\mathbf{A}}_{T,x}$ for
a sufficiently large $T$, because $t^{2}$ increases quadratically.
Thus, as $T\rightarrow\infty$, the larger eigenvalue satisfies: 
\begin{align}
\frac{1}{T}\lambda_{\mathrm{max}}(\tilde{\mathbf{A}}_{T,x}) & =\frac{1}{T}\underset{\|\mathbf{u}\|=1}{\mathrm{max}}\:\mathbf{u}^{\mathrm{T}}\tilde{\mathbf{A}}_{T,x}\mathbf{u}\nonumber \\
 & \rightarrow\underset{\|\mathbf{u}\|=1}{\mathrm{max}}\:\frac{1}{T}\sum_{t=1}^{T}t^{2}\mathbb{E}\Bigg\{\sum_{q\in\mathcal{Q}_{t}}\frac{1}{d_{t,q}^{4}}\mathbf{u}^{\mathrm{T}}(\mathbf{v}\mathbf{v}^{\mathrm{T}})\mathbf{u}\Bigg\}\label{eq:prob-uv-1}
\end{align}
where the solution to (\ref{eq:prob-uv-1}) is $\mathbf{u}=\mathbf{v}/\|\mathbf{v}\|_{2}$.

As a result, asymptotically, the larger eigenvector is $\mathbf{v}/\|\mathbf{v}\|_{2}\in\mathbb{R}^{2}$,
and hence, the smaller eigenvector is $\mathbf{v}_{\perp}$, which
satisfies $\mathbf{v}^{\mathrm{T}}\mathbf{v}_{\perp}=0$. Consequently,
from (\ref{eq:A-tilde-T-x}), as $T\to\infty$, we have: 
\begin{align}
 & \frac{1}{T}\lambda_{\mathrm{min}}(\tilde{\mathbf{A}}_{T,x})\label{eq:lambda-min-A-tilde}\\
 & \rightarrow\mathbf{v}_{\perp}^{\mathrm{T}}\Big[\frac{1}{T}\sum_{t=1}^{T}\mathbb{E}\Big\{\sum_{q\in\mathcal{Q}_{t}}\frac{1}{d_{t,q}^{4}}\bm{l}_{q}\bm{l}_{q}^{\mathrm{T}}\Big\}+\frac{1}{T}\sum_{t=1}^{T}t\mathbb{E}\Big\{\sum_{q\in\mathcal{Q}_{t}}\frac{1}{d_{t,q}^{4}}\nonumber \\
 & \qquad\times(\bm{l}_{q}\mathbf{v}^{\mathrm{T}}+\mathbf{v}\bm{l}_{q}^{\mathrm{T}})\Big\}+\frac{1}{T}\sum_{t=1}^{T}t^{2}\mathbb{E}\Big\{\sum_{q\in\mathcal{Q}_{t}}\frac{1}{d_{t,q}^{4}}\mathbf{v}\mathbf{v}^{\mathrm{T}}\Big\}\Big]\mathbf{v}_{\perp}\nonumber \\
 & =\frac{1}{T}\sum_{t=1}^{T}\mathbb{E}\left\{ \sum_{q\in\mathcal{Q}_{t}}\frac{1}{d_{t,q}^{4}}\|\mathbf{P}_{v}^{\bot}\bm{l}_{q}\|^{2}\right\} .\label{eq:E-stq0}
\end{align}

\begin{figure}[t]
\begin{centering}
\includegraphics[width=0.6\columnwidth]{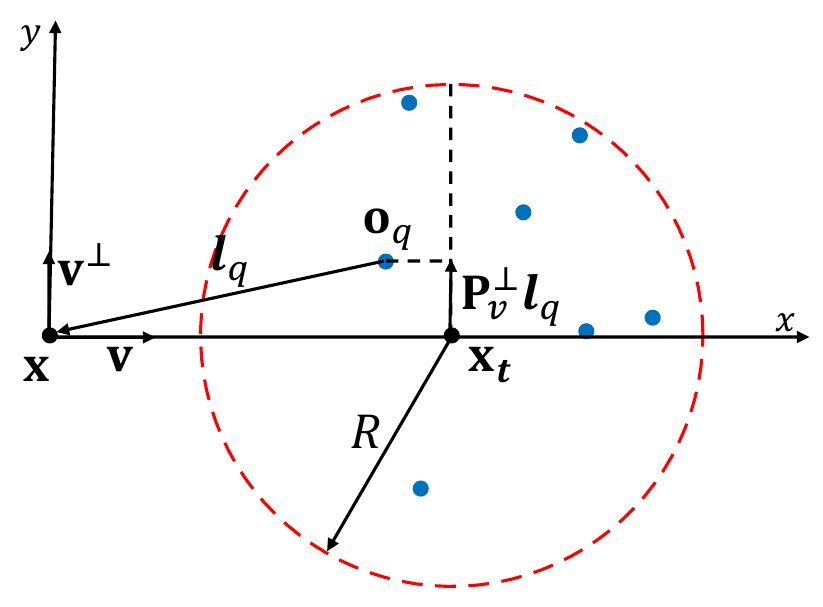}\vspace{-0.1in}
\par\end{centering}
\centering{}\caption{Illustration of the \acpl{bs} follow \ac{ppp} within a radius of
$R$ from the user location $\mathbf{x}_{t}$. \label{fig:Possion}}
\vspace{-0.1in}
\end{figure}

To compute the expectation in (\ref{eq:E-stq0}), we note that as
the \acpl{bs} follow a Poisson distribution within a radius of $R$
from the user location $\mathbf{x}_{t}$, the expected number of the
\acpl{bs} is $\kappa\pi R^{2}$. In addition, given the number of
the \acpl{bs}, the \acpl{bs} are independently and uniformly distributed.
As a result, consider a coordinate system with the initial position
$\mathbf{x}$ as the origin and the direction $\mathbf{v}$ as the
$x$-axis as shown in Figure \ref{fig:Possion}, and then, $\mathbf{P}_{v}^{\bot}\bm{l}_{q}$
is simply to project the vector $\bm{l}_{q}=\mathbf{x}-\mathbf{o}_{q}$
onto the $y$-axis. Denote $\bm{l}_{q}=(l_{q,x},l_{q,y})$ and it
follows that $\mathbf{P}_{v}^{\bot}\bm{l}_{q}=l_{q,y}$. We have
\begin{align}
 & \mathbb{E}\left\{ \sum_{q\in\mathcal{Q}_{t}}\frac{\|\mathbf{P}_{v}^{\bot}\bm{l}_{q}\|^{2}}{d_{t,q}^{4}}\right\} =\mathbb{E}\left\{ \frac{l_{q,y}^{2}}{(l_{q,x}^{2}+l_{q,y}^{2})^{2}}\right\} \kappa\pi R^{2}\nonumber \\
 & =\kappa\pi R^{2}\frac{1}{\pi R^{2}}\int_{-R}^{R}\int_{-\sqrt{R^{2}-x^{2}}}^{\sqrt{R^{2}-x^{2}}}\frac{x^{2}}{(x^{2}+y^{2})^{2}}\,dy\,dx\label{eq:remove-T}\\
 & =\kappa\pi\ln(R/r_{0}).\label{eq:E-pl}
\end{align}
where (\ref{eq:remove-T}) is due to the prior condition that $d_{t,q}>r_{0}$
for all $t$ and $q$.

As a result, from (\ref{eq:E-stq0}) and (\ref{eq:E-pl}), we have
$\frac{1}{T}\lambda_{\mathrm{min}}(\tilde{\mathbf{A}}_{T,x})\rightarrow\kappa\pi\ln(R/r_{0})$
as $T\to\infty$.\end{proof}

According Lemma \ref{lem:A-Tx-lam-1} and from (\ref{eq:tr-F-ineq}),
we have
\begin{align*}
T\tilde{\Delta}_{T,x} & \rightarrow\frac{2\sigma_{\max}^{2}}{\alpha_{\min}^{2}\kappa\pi\ln(R/r_{0})}
\end{align*}
as $T\to\infty$.

\section{Proof of Theorem \ref{thm:LB-F-v-un}}

\label{sec:app-theo4}

Denote $\mathcal{Q}_{t}=\{q|d_{t,q}\leq R\}$ as the set of \acpl{bs}
that are within a range of $R$ from the mobile user at time slot
$t$. Based on the FIM $\mathbf{F}_{T,\psi}$ in (\ref{eq:F-Tv}),
we have
\begin{align}
\mathbf{F}_{T,v} & \succeq\frac{\alpha_{\min}^{2}}{\sigma_{\max}^{2}}\sum_{t=1}^{T}\mathbb{E}\left\{ \sum_{q\in\mathcal{Q}_{t}}\frac{t^{2}}{d_{t,q}^{4}}(\bm{l}_{q}+t\mathbf{v})(\bm{l}_{q}+t\mathbf{v})^{\mathrm{T}}\right\} \label{eq:ineq-Ftv}\\
 & =\tilde{C}_{0}\sum_{t=1}^{T}\mathbb{E}\Bigg\{\sum_{q\in\mathcal{Q}_{t}}\frac{t^{2}}{d_{t,q}^{4}}\bm{l}_{q}\bm{l}_{q}^{\mathrm{T}}+\sum_{q\in\mathcal{Q}_{t}}\frac{t^{3}}{d_{t,q}^{4}}(\bm{l}_{q}\mathbf{v}^{\mathrm{T}}+\mathbf{v}\bm{l}_{q}^{\mathrm{T}})\nonumber \\
 & \qquad\qquad+\sum_{q\in\mathcal{Q}_{t}}\frac{t^{4}}{d_{t,q}^{4}}\mathbf{v}\mathbf{v}^{\mathrm{T}}\Bigg\}=\tilde{C}_{0}\tilde{\mathbf{A}}_{T,v}\nonumber 
\end{align}
where
\begin{align}
\tilde{\mathbf{A}}_{T,v} & =\sum_{t=1}^{T}t^{2}\mathbb{E}\Bigg\{\sum_{q\in\mathcal{Q}_{t}}\frac{\bm{l}_{q}\bm{l}_{q}^{\mathrm{T}}}{d_{t,q}^{4}}\Bigg\}+\sum_{t=1}^{T}t^{3}\mathbb{E}\Bigg\{\sum_{q\in\mathcal{Q}_{t}}\frac{(\bm{l}_{q}\mathbf{v}^{\mathrm{T}}+\mathbf{v}\bm{l}_{q}^{\mathrm{T}})}{d_{t,q}^{4}}\Bigg\}\nonumber \\
 & \quad\quad+\sum_{t=1}^{T}t^{4}\mathbb{E}\Bigg\{\sum_{q\in\mathcal{Q}_{t}}\frac{\mathbf{v}\mathbf{v}^{\mathrm{T}}}{d_{t,q}^{4}}\Bigg\}\label{eq:A-tilde-T-v}
\end{align}
and equality in (\ref{eq:ineq-Ftv}) can be achieved when $\sigma_{\mathrm{\max}}^{2}=\sigma_{q}^{2}$
and $\alpha_{\min}^{2}=\alpha_{q}^{2}$.

Since $\mathbf{F}_{T,v}$ is symmetric and positive semi-definite,
its eigenvalues are real and non-negative. Similar to (\ref{eq:tr-F-ineq}),
we have
\begin{equation}
\mathrm{tr}\{\mathbf{F}_{T,v}^{-1}\}\leq2\lambda_{\mathrm{min}}^{-1}(\mathbf{F}_{T,v})\leq2[\tilde{C}_{0}\lambda_{\mathrm{min}}(\tilde{\mathbf{A}}_{T,v})]^{-1}\triangleq\tilde{\Delta}_{T,v}.\label{eq:tr-F-ineq-1}
\end{equation}

\begin{lem}
\label{lem:A-Tx-lam-1-1}Under the same condition in Lemma \ref{lem:A-Tx-lam-1},
the eigenvalue of $\tilde{\mathbf{A}}_{T,v}$ satisfies
\[
\frac{\lambda_{\mathrm{min}}(\tilde{\mathbf{A}}_{T,x})}{T(T+1)(2T+1)}\rightarrow\frac{1}{6}\kappa\pi\ln(R/r_{0})
\]
as $T\to\infty$\foreignlanguage{english}{.}
\end{lem}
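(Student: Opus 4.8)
The plan is to follow the same eigenvalue-decomposition argument used in the proof of Lemma~\ref{lem:A-Tx-lam-1}, adapting it to the higher powers of $t$ that appear in $\tilde{\mathbf{A}}_{T,v}$. First I would identify the dominant term. In the expression (\ref{eq:A-tilde-T-v}), the three summands carry weights $t^{2}$, $t^{3}$, and $t^{4}$ respectively, so for large $T$ the term $\sum_{t=1}^{T} t^{4}\,\mathbb{E}\{\sum_{q\in\mathcal{Q}_{t}}\mathbf{v}\mathbf{v}^{\mathrm{T}}/d_{t,q}^{4}\}$ dominates, exactly as the $t^{2}$ term dominated $\tilde{\mathbf{A}}_{T,x}$. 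Maximizing $\mathbf{u}^{\mathrm{T}}\tilde{\mathbf{A}}_{T,v}\mathbf{u}$ over unit $\mathbf{u}$ then yields the larger eigenvector $\mathbf{u}=\mathbf{v}/\|\mathbf{v}\|_{2}$ asymptotically, so the smaller eigenvector is $\mathbf{v}_{\perp}$, satisfying $\mathbf{v}^{\mathrm{T}}\mathbf{v}_{\perp}=0$.

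Next I would evaluate the minimum eigenvalue by projecting onto $\mathbf{v}_{\perp}$. Because $\mathbf{v}_{\perp}^{\mathrm{T}}\mathbf{v}=0$, both the $t^{3}$ cross term and the $t^{4}$ rank-one term annihilate, leaving
\[
\lambda_{\mathrm{min}}(\tilde{\mathbf{A}}_{T,v})\rightarrow\sum_{t=1}^{T}t^{2}\,\mathbb{E}\Big\{\sum_{q\in\mathcal{Q}_{t}}\frac{\|\mathbf{P}_{v}^{\bot}\bm{l}_{q}\|^{2}}{d_{t,q}^{4}}\Big\},
\]
which mirrors (\ref{eq:E-stq0}) but with the extra factor $t^{2}$. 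The inner expectation is precisely the quantity already computed in (\ref{eq:remove-T})–(\ref{eq:E-pl}) in the proof of Lemma~\ref{lem:A-Tx-lam-1}, and it equals $\kappa\pi\ln(R/r_{0})$ independently of $t$ under the assumption $d_{t,q}>r_{0}$. I can therefore pull this constant out of the sum.

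Finally I would apply the closed form $\sum_{t=1}^{T}t^{2}=T(T+1)(2T+1)/6$, giving $\lambda_{\mathrm{min}}(\tilde{\mathbf{A}}_{T,v})\rightarrow\frac{T(T+1)(2T+1)}{6}\kappa\pi\ln(R/r_{0})$, and dividing by $T(T+1)(2T+1)$ delivers the claimed limit $\tfrac{1}{6}\kappa\pi\ln(R/r_{0})$. (Note that the statement writes $\tilde{\mathbf{A}}_{T,x}$ where $\tilde{\mathbf{A}}_{T,v}$ is intended.) The main point requiring care is not any individual computation—since the expectation reuses the \ac{ppp} integral already established for Theorem~\ref{thm:LB-F-x-un}—but rather justifying that the $t^{4}$ term genuinely dominates so that the minimum eigenvector converges to $\mathbf{v}_{\perp}$; here the $t^{2}$-versus-$t^{4}$ growth gap, rather than the bounded-versus-divergent dichotomy of Lemma~\ref{lem:A-Tx-lam}, does the work. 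The inequality (\ref{eq:tr-F-ineq-1}) then converts this eigenvalue limit into the stated upper bound on the \ac{crlb} of $\mathbf{v}$, yielding the $\mathcal{O}(1/T^{3})$ decay.
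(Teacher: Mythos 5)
Your proposal matches the paper's own proof essentially step for step: the $t^{4}$ rank-one term dominates so the minimum eigenvector tends to $\mathbf{v}_{\perp}$, the projection kills the $t^{3}$ and $t^{4}$ terms, the per-slot \ac{ppp} expectation $\kappa\pi\ln(R/r_{0})$ is reused from Lemma \ref{lem:A-Tx-lam-1}, and $\sum_{t=1}^{T}t^{2}=T(T+1)(2T+1)/6$ gives the limit. You also correctly flag the $\tilde{\mathbf{A}}_{T,x}$-versus-$\tilde{\mathbf{A}}_{T,v}$ typo in the lemma statement; no substantive differences from the paper's argument.
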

\begin{proof}Similar to the derivation of Lemma \ref{lem:A-Tx-lam-1},
the asymptotic larger eigenvector of $\tilde{\mathbf{A}}_{T,v}$ is
$\mathbf{u}=\mathbf{v}/\|\mathbf{v}\|_{2}$, because the last term
in (\ref{eq:A-tilde-T-v}) dominates when $T$ is large.

As a result, asymptotically, the smaller eigenvector is $\mathbf{v}_{\perp}$,
which satisfies $\mathbf{v}^{\mathrm{T}}\mathbf{v}_{\perp}=0$. Consequently,
from (\ref{eq:A-tilde-T-v}), as $T\to\infty$, we have
\begin{align}
 & \sum_{t=1}^{T}t^{2}\mathbb{E}\Bigg\{\sum_{q\in\mathcal{Q}_{t}}\frac{\mathbf{v}_{\perp}^{\mathrm{T}}\bm{l}_{q}\bm{l}_{q}^{\mathrm{T}}\mathbf{v}_{\perp}}{d_{t,q}^{4}}\Bigg\}+\sum_{t=1}^{T}t^{4}\mathbb{E}\Bigg\{\sum_{q\in\mathcal{Q}_{t}}\frac{\mathbf{v}_{\perp}^{\mathrm{T}}\mathbf{v}\mathbf{v}^{\mathrm{T}}\mathbf{v}_{\perp}}{d_{t,q}^{4}}\Bigg\}\nonumber \\
 & \quad\quad+\sum_{t=1}^{T}t^{3}\mathbb{E}\Bigg\{\sum_{q\in\mathcal{Q}_{t}}\frac{\mathbf{v}_{\perp}^{\mathrm{T}}(\bm{l}_{q}\mathbf{v}^{\mathrm{T}}+\mathbf{v}\bm{l}_{q}^{\mathrm{T}})\mathbf{v}_{\perp}}{d_{t,q}^{4}}\Bigg\}\nonumber \\
 & =\sum_{t=1}^{T}t^{2}\mathbb{E}\left\{ \sum_{q\in\mathcal{Q}_{t}}\frac{1}{d_{t,q}^{4}}\|\mathbf{P}_{v}^{\bot}\bm{l}_{q}\|^{2}\right\} \nonumber \\
 & =\frac{1}{6}T(T+1)(2T+1)\kappa\pi\ln(R/r_{0})\label{eq:E-plv}
\end{align}

Thus, we have
\begin{align*}
\frac{\lambda_{\mathrm{min}}(\tilde{\mathbf{A}}_{T,v})}{T(T+1)(2T+1)} & \rightarrow\frac{1}{6}\kappa\pi\ln(R/r_{0})
\end{align*}
as $T\to\infty$.\end{proof}

According Lemma \ref{lem:A-Tx-lam-1-1} and from (\ref{eq:tr-F-ineq-1}),
we have
\begin{align*}
 & T(T+1)(2T+1)\tilde{\Delta}_{T,v}\rightarrow\frac{12\sigma_{\max}^{2}}{\alpha_{\min}^{2}\kappa\pi\ln(R/r_{0})}
\end{align*}
as $T\to\infty$.

\section{Proof of Proposition \ref{prop:sparability}}

\label{sec:app-prop1}

Following the notations defined for the solutions (\ref{eq:solution-path-loss-estimation-2})–(\ref{eq:solution-path-loss-estimation-1})
in Section \ref{subsec:Path-Loss-Parameters}, we further define a
matrix $\mathbf{Y}'_{q}\in\mathbb{R}^{M\times T},$ where the $t$th
column is a collection of variables $y'_{q,m,t}$ for $m=1,2,\dots,M$
of the measurements at $\mathbf{x}_{t}$ over all the $M$ beams such
that $\mathbf{\bm{y}}_{q}'=\text{vec}(\mathbf{Y}_{q}')$. In addition,
denote $\bar{y}'_{q,t}=\sum_{m}y'_{q,m,t}$ and $\bar{\mathbf{y}}_{q}'\in\mathbb{R}^{T}$
as the collection of variables $\bar{y}'_{q,t}$ along the trajectory
$\mathbf{x}_{t}$. Then, we have $\mathbf{1}^{\text{T}}\mathbf{Y}_{q}'=(\bar{\mathbf{y}}_{q}')^{\text{T}}$.
Using the condition (\ref{eq:condition-uniform-distritubted-beam-pattern}),
we have 
\[
\bar{y}'_{q,t}=\sum_{m}y'_{q,m,t}=\bar{y}_{q,t}=\sum_{m}y_{q.m.t}-\bar{C}_{q}=\bar{y}_{q,t}-\bar{C}_{q}
\]
and consequently, in the vector form, $\bar{\mathbf{y}}_{q}'=\bar{\mathbf{y}}_{q}-\bar{C}_{q}\mathbf{1}$.

Due to the property of matrix operation with Kronecker product \cite{VanCha:J00},
$(\mathbf{A}\otimes\mathbf{B})(\mathbf{C}\otimes\mathbf{D})=(\mathbf{AC})\otimes(\mathbf{BD})$
and $(\mathbf{B}^{\text{T}}\otimes\mathbf{A})\mbox{vec}(\mathbf{X})=\mbox{vec}(\mathbf{A}\mathbf{X}\mathbf{B})$,
one can easily verify that 
\begin{align}
\tilde{\bm{\mathrm{D}}}_{q}^{\text{T}}\tilde{\bm{\mathrm{D}}}_{q} & =(\bm{\mathrm{D}}_{q}\otimes\bm{1})^{\text{T}}(\bm{\mathrm{D}}_{q}\otimes\bm{1})\nonumber \\
 & =(\bm{\mathrm{D}}_{q}^{\text{T}}\otimes\bm{1}^{\text{T}})(\bm{\mathrm{D}}_{q}\otimes\bm{1})=\bm{\mathrm{D}}_{q}^{\text{T}}\bm{\mathrm{D}}_{q}\label{eq:property-DqDq}
\end{align}
and 
\begin{align}
\tilde{\bm{\mathrm{D}}}_{q}^{\text{T}}\mathbf{y}_{q}' & =(\bm{\mathrm{D}}_{q}^{\text{T}}\otimes\bm{1}^{\text{T}})\mbox{vec}(\mathbf{Y}'_{q})=\text{\mbox{vec}}(\mathbf{1}^{\text{T}}\mathbf{Y}_{q}'\bm{\mathrm{D}}_{q})\nonumber \\
 & =\text{\mbox{vec}}((\bar{\mathbf{y}}_{q}')^{\text{T}}\bm{\mathrm{D}}_{q})=\bm{\mathrm{D}}_{q}^{\text{T}}\bar{\mathbf{y}}_{q}'\label{eq:property-Dqyq}
\end{align}
where the last equality is due to the fact that $(\bar{\bm{y}}_{q}')^{\text{T}}\bm{D}_{q}$
is a row vector.

Applying (\ref{eq:property-DqDq}) and (\ref{eq:property-Dqyq}) to
(\ref{eq:solution-path-loss-estimation-1}), the solution $\bm{\theta}_{q}^{(1)}=[\alpha_{q}^{(1)},\beta_{q}^{(1)}]^{\text{T}}$
to (\ref{eq:path-loss-parameter-estimation}) is written as 
\begin{align}
\hat{\bm{\theta}}_{q}^{(1)} & =(\tilde{\bm{\mathrm{D}}}_{q}^{\text{T}}\tilde{\bm{\mathrm{D}}}_{q})^{-1}\tilde{\bm{\mathrm{D}}}_{q}^{\text{T}}\mathbf{y}_{q}'\nonumber \\
 & =(\bm{\mathrm{D}}_{q}^{\text{T}}\bm{\mathrm{D}}_{q})^{-1}\bm{\mathrm{D}}_{q}^{\text{T}}\bar{\mathbf{y}}'_{q}\nonumber \\
 & =(\bm{\mathrm{D}}_{q}^{\text{T}}\bm{\mathrm{D}}_{q})^{-1}\bm{\mathrm{D}}_{q}^{\text{T}}(\bar{\mathbf{y}}_{q}-\bar{C}_{q}\mathbf{1})\nonumber \\
 & =(\bm{\mathrm{D}}_{q}^{\text{T}}\bm{\mathrm{D}}_{q})^{-1}\bm{\mathrm{D}}_{q}^{\text{T}}\bar{\mathbf{y}}_{q}-(\bm{\mathrm{\mathrm{D}}}_{q}^{\text{T}}\bm{\mathrm{D}}_{q})^{-1}\bm{\mathrm{D}}_{q}^{\text{T}}\bar{C}_{q}\mathbf{1}\label{eq:solution-path-loss-estimation-1b}\\
 & =[\alpha_{q}^{(2)},\beta_{q}^{(2)}]^{\text{T}}-(\bm{\mathrm{D}}_{q}^{\text{T}}\bm{\mathrm{D}}_{q})^{-1}\bm{\mathrm{D}}_{q}^{\text{T}}\bar{C}_{q}\mathbf{1}
\end{align}

While the first term in (\ref{eq:solution-path-loss-estimation-1b})
is identical to (\ref{eq:solution-path-loss-estimation-2}), we compute
the second term $(\bm{\mathrm{D}}_{q}^{\text{T}}\bm{\mathrm{D}}_{q})^{-1}\bm{\mathrm{D}}_{q}^{\text{T}}\bar{C}_{q}\mathbf{1}$
as follows. From the definition $\bm{\mathrm{D}}_{q}=[\bm{\mathrm{d}}_{q},\mathbf{1}]$
and the matrix inversion formula for a $2\times2$ matrix, the second
term in (\ref{eq:solution-path-loss-estimation-1b}) can be computed
as 
\begin{align*}
 & \frac{\bar{C}_{q}}{T\sum_{t}\tilde{d}_{q,t}^{2}-(\sum_{t}\tilde{d}_{q,t})^{2}}\left[\begin{array}{cc}
T & -\sum_{t}\tilde{d}_{q,t}\\
-\sum_{t}\tilde{d}_{q,t} & \sum_{t}\tilde{d}_{q,t}^{2}
\end{array}\right]\left[\begin{array}{c}
\bm{\mathrm{d}}_{q}^{\text{T}}\\
\mathbf{1}^{\text{T}}
\end{array}\right]\mathbf{1}\\
 & =\frac{\bar{C}_{q}}{T\sum_{t}\tilde{d}_{q,t}^{2}-(\sum_{t}\tilde{d}_{q,t})^{2}}\left[\begin{array}{c}
-\sum_{t}\tilde{d}_{q,t}+T\bm{\mathrm{d}}_{q}^{\text{T}}\\
\sum_{t}\tilde{d}_{q,t}^{2}-(\sum_{t}\tilde{d}_{q,t})\bm{\mathrm{d}}_{q}^{\text{T}}
\end{array}\right]\mathbf{1}\\
 & =\left[0,\bar{C}_{q}\right]
\end{align*}
which justifies that $\alpha_{q}^{(1)}=\alpha_{q}^{(2)}$ and $\beta_{q}^{(1)}=\beta_{q}^{(2)}-\bar{C}_{q}$.

\section{Proof of Proposition \ref{prop:Equivalence-Condition}}

\label{sec:app-prop2}

Since $\mathcal{T}_{q,m}^{\epsilon}=\{1,2,\dots,T\}$, consider the
following problem
\begin{equation}
\underset{\bm{\vartheta}}{\text{minimize}}\quad\sum_{t}\lambda_{t}\Big[\ln y_{q,m,t}''-\ln B(\bm{\vartheta};\phi_{q,t})\Big]^{2}.\label{eq:beam-pattern-estimation-logscale}
\end{equation}
Problems (\ref{eq:beam-pattern-estimation}) and (\ref{eq:beam-pattern-estimation-logscale})
have the same set of solutions if the derivative of the objective
functions are identical, i.e., 
\begin{align}
 & -\sum_{t}2(y'_{q,m,t}-B(\bm{\vartheta};\phi_{q,t}))\frac{\partial B(\bm{\vartheta};\phi_{q,t})}{\partial\bm{\vartheta}}\label{eq:equivalence-condition}\\
 & =-\sum_{t}2(\ln y'_{q,m,t}-\ln B(\bm{\vartheta};\phi_{q,t}))\frac{\lambda_{t}}{B(\bm{\vartheta};\phi_{q,t})}\frac{\partial B(\bm{\vartheta};\phi_{q,t})}{\partial\bm{\vartheta}}\nonumber 
\end{align}
where the first line is the derivative of (\ref{eq:beam-pattern-estimation})
and the second line is the derivative of (\ref{eq:beam-pattern-estimation-logscale}).
A sufficient condition to ensure (\ref{eq:equivalence-condition})
is to enforce equality on each term in (\ref{eq:equivalence-condition}),
resulting in the equation
\[
y'_{q,m,t}-B(\bm{\vartheta};\phi_{q,t})=(\ln y'_{q,m,t}-\ln B(\bm{\vartheta};\phi_{q,t}))\frac{\lambda_{t}}{B(\bm{\vartheta};\phi_{q,t})}
\]
which leads to the condition (\ref{eq:beam-pattern-estimation-weight}).

Finally, by substituting the variables (\ref{eq:solution-beam-pattern-from-b})
into the log-scale problem (\ref{eq:beam-pattern-estimation-logscale}),
one arrives at (\ref{eq:beam-pattern-estimation-auxiliary}). This
confirms the equivalence between (\ref{eq:beam-pattern-estimation})
and (\ref{eq:beam-pattern-estimation-auxiliary}) under condition
(\ref{eq:beam-pattern-estimation-weight}).

\bibliographystyle{IEEEtran}
\bibliography{my_ref}

\begin{thebibliography}{10}
\providecommand{\url}[1]{#1}
\csname url@samestyle\endcsname
\providecommand{\newblock}{\relax}
\providecommand{\bibinfo}[2]{#2}
\providecommand{\BIBentrySTDinterwordspacing}{\spaceskip=0pt\relax}
\providecommand{\BIBentryALTinterwordstretchfactor}{4}
\providecommand{\BIBentryALTinterwordspacing}{\spaceskip=\fontdimen2\font plus
\BIBentryALTinterwordstretchfactor\fontdimen3\font minus
  \fontdimen4\font\relax}
\providecommand{\BIBforeignlanguage}[2]{{%
\expandafter\ifx\csname l@#1\endcsname\relax
\typeout{** WARNING: IEEEtran.bst: No hyphenation pattern has been}%
\typeout{** loaded for the language `#1'. Using the pattern for}%
\typeout{** the default language instead.}%
\else
\language=\csname l@#1\endcsname
\fi
#2}}
\providecommand{\BIBdecl}{\relax}
\BIBdecl

\bibitem{DuDen:J22}
H.~Du, Y.~Deng, J.~Xue, D.~Meng, Q.~Zhao, and Z.~Xu, ``Robust online {CSI}
  estimation in a complex environment,'' \emph{IEEE Trans. Wireless Commun.},
  vol.~21, no.~10, pp. 8322--8336, 2022.

\bibitem{LiuBha:J23}
Z.~Liu, A.~Bhandari, and B.~Clerckx, ``$\lambda$--{MIMO}: Massive {MIMO} via
  modulo sampling,'' \emph{IEEE Trans. Commun.}, vol.~71, no.~11, pp.
  6301--6315, 2023.

\bibitem{xing2022spectrum}
Z.~Xing, H.~Li, W.~Liu, Z.~Ren, J.~Chen, J.~Xu, and C.~Qin, ``Spectrum
  efficiency prediction for real-world {5G} networks based on drive testing
  data,'' in \emph{Proc. IEEE Wireless Commun. Netw. Conf. (WCNC)}, 2022, pp.
  2136--2141.

\bibitem{ZenChe:J24}
Y.~Zeng, J.~Chen, J.~Xu, D.~Wu, X.~Xu, S.~Jin, X.~Gao, D.~Gesbert, S.~Cui, and
  R.~Zhang, ``A tutorial on environment-aware communications via channel
  knowledge map for {6G},'' \emph{IEEE Commun. Surv. Tutorials}, vol.~26,
  no.~3, pp. 1478--1519, 2024.

\bibitem{xing2024constructing}
Z.~Xing and J.~Chen, ``Constructing indoor region-based radio map without
  location labels,'' \emph{IEEE Trans. Signal Process.}, vol.~72, pp.
  2512--2526, 2024.

\bibitem{xing2022integrated}
Z.~Xing, J.~Chen, and Y.~Tang, ``Integrated segmentation and subspace
  clustering for {RSS}-based localization under blind calibration,'' in
  \emph{Proc. IEEE Global Commun. Conf. (GLOBECOM)}, 2022, pp. 5360--5365.

\bibitem{xing2025hmm}
Z.~Xing, W.~Liu, B.~Li, J.~Tian, M.~Chu, and J.~Chen, ``{HMM}-based {CSI}
  embedding for trajectory recovery via feature engineering on {MIMO}-{OFDM}
  channels in {LOS}/{NLOS} regions,'' in \emph{Proceedings of IEEE/CIC
  International Conference on Communications in China (ICCC)}, 2025, pp. 1--6.

\bibitem{KalTob:J23}
T.~Kallehauge, A.~E. Kal{\o}r, P.~Ram{\'\i}rez-Espinosa, M.~Guillaud, and
  P.~Popovski, ``Delivering ultra-reliable low-latency communications via
  statistical radio maps,'' \emph{IEEE Wireless Commun.}, vol.~30, no.~2, pp.
  14--20, 2023.

\bibitem{WuZen:J23}
D.~Wu, Y.~Zeng, S.~Jin, and R.~Zhang, ``Environment-aware hybrid beamforming by
  leveraging channel knowledge map,'' \emph{IEEE Trans. Wireless Commun.},
  vol.~23, no.~5, pp. 4990--5005, 2024.

\bibitem{TimSub:J23}
S.~Timilsina, S.~Shrestha, and X.~Fu, ``Quantized radio map estimation using
  tensor and deep generative models,'' \emph{IEEE Trans. Signal Process.},
  vol.~72, no.~0, pp. 173--189, 2024.

\bibitem{RogSan:J23}
S.~Roger, M.~Brambilla, B.~C. Tedeschini, C.~Botella-Mascarell, M.~Cobos, and
  M.~Nicoli, ``Deep-learning-based radio map reconstruction for {V2X}
  communications,'' \emph{IEEE Trans. Vehicular Technol.}, vol.~73, no.~3, pp.
  3863--3871, 2023.

\bibitem{LiuChe:J23}
W.~Liu and J.~Chen, ``{UAV}-aided radio map construction exploiting environment
  semantics,'' \emph{IEEE Trans. Wireless Commun.}, vol.~22, no.~9, pp.
  6341--6355, 2023.

\bibitem{ChiMas:J22}
W.~B. Chikha, M.~Masson, Z.~Altman, and S.~B. Jemaa, ``Radio environment map
  based inter-cell interference coordination for massive-{MIMO} systems,''
  \emph{IEEE Trans. Mob. Comput.}, vol.~23, no.~1, pp. 785--796, 2022.

\bibitem{WanZhu:J24}
J.~Wang, Q.~Zhu, Z.~Lin, J.~Chen, G.~Ding, Q.~Wu, G.~Gu, and Q.~Gao, ``Sparse
  bayesian learning-based hierarchical construction for {3D} radio environment
  maps incorporating channel shadowing,'' \emph{IEEE Trans. Wireless Commun.},
  vol.~23, no.~10, pp. 14\,560--14\,574, 2024.

\bibitem{WanZhaQht:J24}
Y.~Wang, H.~Zhao, T.~Ohtsuki, H.~Sari, and G.~Gui, ``Regularized multi-label
  learning empowered joint activity recognition and indoor localization with
  {CSI} fingerprints,'' \emph{IEEE Trans. Wireless Commun.}, no.~11, pp.
  16\,865--16\,874, 2024.

\bibitem{ZhoChe:J24}
X.~Zhou, L.~Chen, Y.~Ruan, and R.~Chen, ``Indoor localization with multi-beam
  of {5G} new radio signals,'' \emph{IEEE Trans. Wireless Commun.}, vol.~23,
  no.~9, pp. 11\,260--11\,275, 2024.

\bibitem{xing2024calibration}
Z.~Xing and W.~Zhao, ``Calibration-free indoor positioning via regional channel
  tracing,'' \emph{IEEE Internet Things J.}, vol.~12, no.~5, pp. 5449--5461,
  2025.

\bibitem{VucHos:J24}
K.~Vuckovic, S.~Hosseini, F.~Hejazi, and N.~Rahnavard, ``A {CSI}-based
  data-driven localization framework using small-scale training datasets in
  single-site {MIMO} systems,'' \emph{IEEE Trans. Wireless Commun.}, no.~11,
  pp. 16\,346--16\,358, 2024.

\bibitem{Trajmat:J25}
Z.~Xing and W.~Zhao, ``Trajectory map-matching in urban road networks based on
  {RSS} measurements,'' \emph{IEEE Trans. Intell. Transp. Syst.}, vol.~26,
  no.~4, pp. 4647--4660, 2025.

\bibitem{XinChe:C25}
Z.~Xing and J.~Chen, ``Constructing angular power maps in massive {MIMO}
  networks using measurements without location labels,'' in \emph{Proc. IEEE
  Int. Conf. Commun. (ICC)}, vol.~0, 2025, pp. 0--0.

\bibitem{GuoLv:J24}
J.~Guo, Y.~Lv, C.-K. Wen, X.~Li, and S.~Jin, ``Learning-based integrated {CSI}
  feedback and localization in massive {MIMO},'' \emph{IEEE Trans. Wireless
  Commun.}, no.~10, pp. 14\,988--15\,001, 2024.

\bibitem{GaoWan:J24}
K.~Gao, H.~Wang, H.~Lv, and W.~Liu, ``Localization-oriented digital twinning in
  {6G}: A new indoor-positioning paradigm and proof-of-concept,'' \emph{IEEE
  Trans. Wireless Commun.}, vol.~23, no.~8, pp. 10\,473--10\,486, 2024.

\bibitem{xing2024block}
Z.~Xing and W.~Zhao, ``Block-diagonal guided {DBSCAN} clustering,'' \emph{IEEE
  Trans. Knowl. Data Eng.}, vol.~36, no.~11, pp. 5709--5722, 2024.

\bibitem{ZhaLei:J24}
L.~Zhang, X.~Lei, T.~Ma, H.~Niu, and C.~Yuen, ``Joint user localization,
  channel estimation, and pilot optimization for {RIS-ISAC},'' \emph{IEEE
  Trans. Wireless Commun.}, vol.~23, no.~12, pp. 19\,302--19\,316, 2024.

\bibitem{xing2024unsupervised}
Z.~Xing and W.~Zhao, ``Unsupervised action segmentation via fast learning of
  semantically consistent actoms,'' in \emph{Proceedings of the AAAI Conference
  on Artificial Intelligence}, 2024, pp. 6270--6278.

\bibitem{Stud:J18}
C.~Studer, S.~Medjkouh, E.~Gonulta{\c{s}}, T.~Goldstein, and O.~Tirkkonen,
  ``Channel charting: Locating users within the radio environment using channel
  state information,'' \emph{IEEE Access}, vol.~6, no.~0, pp. 47\,682--47\,698,
  2018.

\bibitem{FerRau:J21}
P.~Ferrand, A.~Decurninge, L.~G. Ordonez, and M.~Guillaud, ``Triplet-based
  wireless channel charting: Architecture and experiments,'' \emph{IEEE J. Sel.
  Areas Commun.}, vol.~39, no.~8, pp. 2361--2373, 2021.

\bibitem{FerGui:J23}
P.~Ferrand, M.~Guillaud, C.~Studer, and O.~Tirkkonen, ``Wireless channel
  charting: Theory, practice, and applications,'' \emph{IEEE Commun. Mag.},
  vol.~61, no.~6, pp. 124--130, 2023.

\bibitem{JunMoo:J15}
S.-h. Jung, B.-c. Moon, and D.~Han, ``Unsupervised learning for crowdsourced
  indoor localization in wireless networks,'' \emph{IEEE Trans. Mob. Comput.},
  vol.~15, no.~11, pp. 2892--2906, 2015.

\bibitem{XinChe:C24}
Z.~Xing and J.~Chen, ``{HMM}-based {CSI} embedding for trajectory recovery from
  {RSS} measurements of non-cooperative devices,'' in \emph{Proc. IEEE Int.
  Conf. Acoust. Speech Signal Process. (ICASSP)}, 2024, pp. 7060--7064.

\bibitem{LotJas:J17}
J.~Lota, S.~Sun, T.~S. Rappaport, and A.~Demosthenous, ``{5G} uniform linear
  arrays with beamforming and spatial multiplexing at 28, 37, 64, and 71 ghz
  for outdoor urban communication: A two-level approach,'' \emph{IEEE Trans.
  Veh. Technol.}, vol.~66, no.~11, pp. 9972--9985, 2017.

\bibitem{Lia:J99}
B.~Liang and Z.~J. Haas, ``Predictive distance-based mobility management for
  {PCS} networks,'' in \emph{Proc. IEEE Int. Conf. Comput. Commun. (INFOCOM)},
  vol.~3, 1999, pp. 1377--1384.

\bibitem{He:J18}
R.~He, B.~Ai, G.~L. St{\"u}ber, and Z.~Zhong, ``Mobility model-based
  non-stationary mobile-to-mobile channel modeling,'' \emph{IEEE Trans.
  Wireless Commun.}, vol.~17, no.~7, pp. 4388--4400, 2018.

\bibitem{KaySte:J93}
S.~M. Kay, \emph{Fundamentals of statistical signal processing: estimation
  theory}.\hskip 1em plus 0.5em minus 0.4em\relax Prentice-Hall, Inc., 1993.

\bibitem{BatDou:J88}
D.~M. Bates and D.~G. Watts, \emph{Nonlinear regression analysis and its
  applications}.\hskip 1em plus 0.5em minus 0.4em\relax Wiley New York, 1988,
  vol.~2.

\bibitem{IqSha:J24}
M.~S. Iqbal, C.~F. Choudhury, P.~Wang, and M.~C. Gonz{\'a}lez, ``Development of
  origin--destination matrices using mobile phone call data,'' \emph{Transp.
  Res. Part C Emerg. Technol.}, vol.~40, no.~0, pp. 63--74, 2014.

\bibitem{MagGioKanYu:J18}
K.~Magowe, A.~Giorgetti, S.~Kandeepan, and X.~Yu, ``Accurate analysis of
  weighted centroid localization,'' \emph{IEEE Trans. on Cognitive Commun. and
  Networking}, vol.~5, no.~1, pp. 153--164, 2018.

\bibitem{Report3GPP}
{3GPP TR 38.843}, ``Study on artificial intelligence ({AI})/machine learning
  ({ML}) for {NR} air interface,'' 3GPP, Tech. Rep. V18.0.0, Dec. 2023.

\bibitem{Hu:J23}
H.~Hu, S.~Qian, J.~Ouyang, J.~Cao, H.~Han, J.~Wang, and Y.~Chen, ``{AMM}: An
  adaptive online map matching algorithm,'' \emph{IEEE Trans. Intell. Transp.
  Syst.s}, vol.~24, no.~5, pp. 5039--5051, 2023.

\bibitem{LiuSin:C12}
Z.~Liu, G.~Singh, C.~Xu, and D.~Vasisht, ``{FIRE}: enabling reciprocity for
  {FDD} {MIMO} systems,'' in \emph{Proceedings of the 27th Annual International
  Conference on Mobile Computing and Networking}, 2021, pp. 628--641.

\bibitem{VanCha:J00}
C.~F. Van~Loan, ``The ubiquitous kronecker product,'' \emph{J. Comput. Appl.
  Math.}, vol. 123, no. 1-2, pp. 85--100, 2000.

\end{thebibliography}

\begin{IEEEbiography}[{\includegraphics[width=1in,height=1.25in,clip,keepaspectratio]{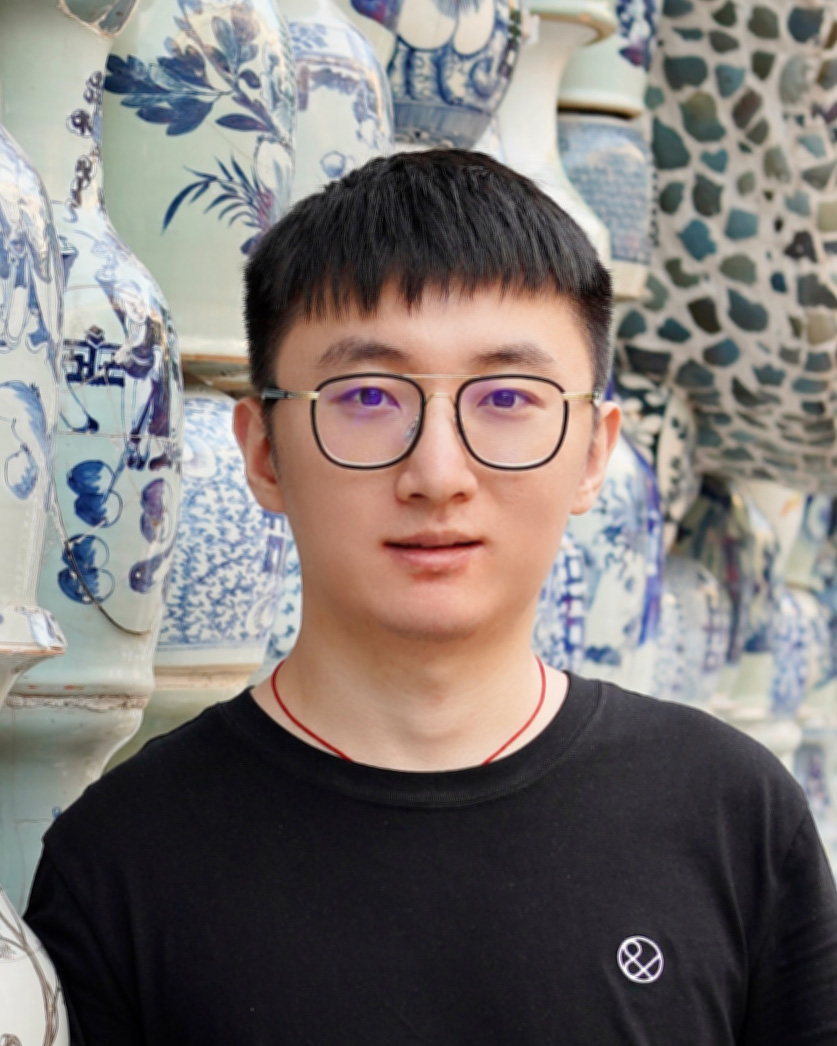}}]{Zheng Xing} (Member, IEEE) received the Ph.D. degree from The Chinese University of Hong Kong, Shenzhen, China, in 2025, the M.S. degree from Beihang University, Beijing, China, in 2020, and the B.S. degree from Ocean University of China, Qingdao, China, in 2017. He is currently an Assistant Professor with the College of Computer Science and Software Engineering, Shenzhen University. His research interests encompass optimization, mobile computing, and machine learning.

\end{IEEEbiography} 
\begin{IEEEbiography}[{\includegraphics[width=1in,height=1.25in,clip,keepaspectratio]{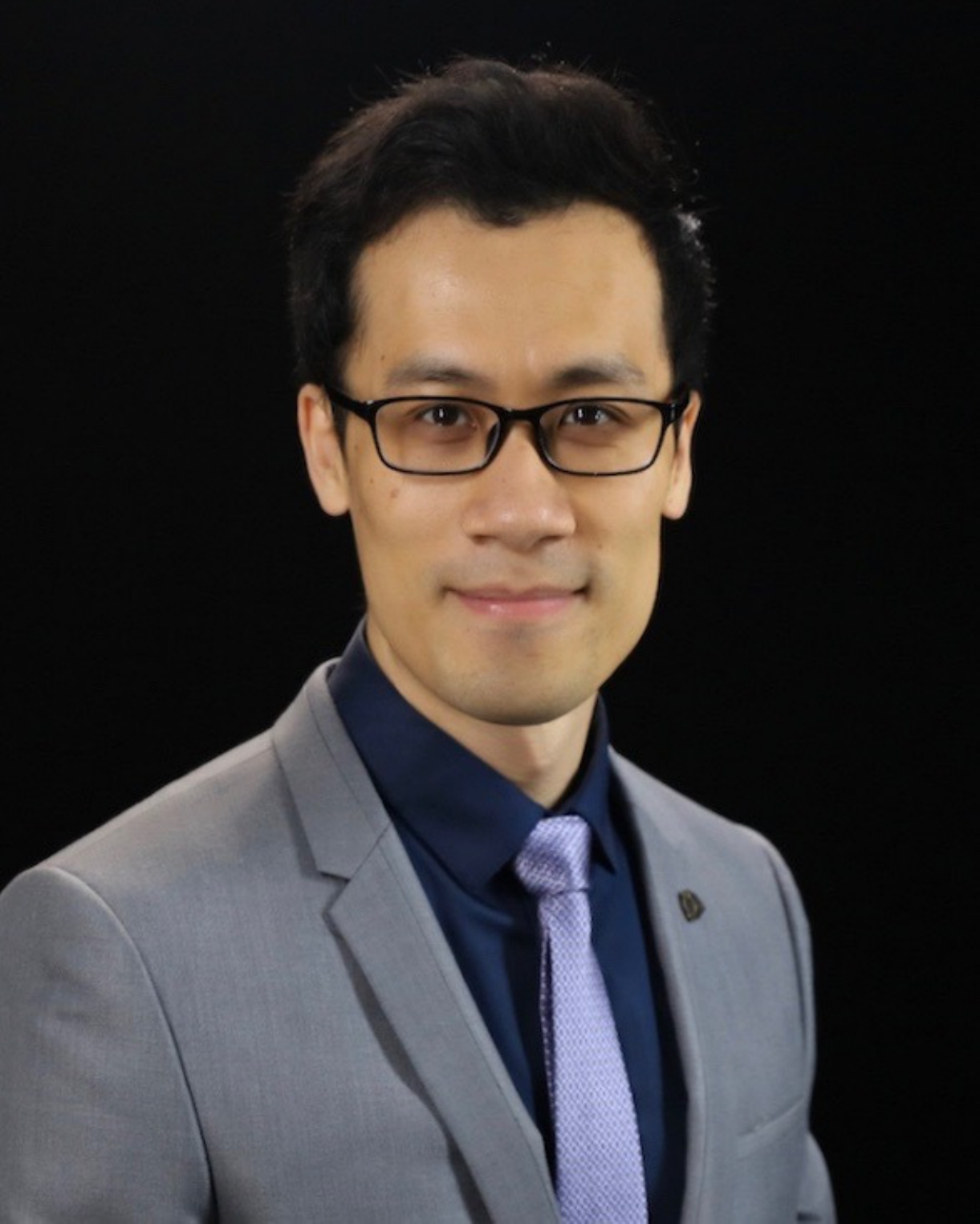}}]{Junting Chen} (S'11--M'16) received the Ph.D.\ degree in Electronic and Computer Engineering from the Hong Kong University of Science and Technology (HKUST), Hong Kong SAR China, in 2015, and the B.Sc.\ degree in Electronic Engineering from Nanjing University, Nanjing, China, in 2009. From 2014--2015, he was a visiting student with the Wireless Information and Network Sciences Laboratory at MIT, Cambridge, MA, USA.  
	
	He is an Assistant Professor with the School of Science and Engineering and the Future Network of Intelligence Institute (FNii) at The Chinese University of Hong Kong, Shenzhen (CUHK--Shenzhen), Guangdong, China. Prior to joining CUHK--Shenzhen, he was a Postdoctoral Research Associate with the Ming Hsieh Department of Electrical Engineering, University of Southern California (USC), Los Angeles, CA, USA, from 2016--2018, and with the Communication Systems Department of EURECOM, Sophia--Antipolis, France, from 2015--2016. His research interests include channel estimation, MIMO beamforming, machine learning, and optimization for wireless communications and localization. His current research focuses on radio map sensing, construction, and application for wireless communications. Dr. Chen was a recipient of the HKTIIT Post-Graduate Excellence Scholarships in 2012. He was nominated as the Exemplary Reviewer of {\scshape IEEE Wireless Communications Letters} in 2018. His paper received the Charles Kao Best Paper Award from WOCC 2022.
	
\end{IEEEbiography}

\end{document}